\newtheorem{theorem}{Theorem}
\newtheorem{lemma}{Lemma}
\newtheorem{remark}{Remark}
\newenvironment{customdefinition}[1]{%
\par\vspace{\baselineskip}\noindent\textbf{Definition #1:}\quad\ignorespaces
}{%
\par\vspace{\baselineskip}\ignorespacesafterend
}
\DeclareMathOperator*{\argmin}{arg\,min}
\newcommand{\distas}[1]{\mathbin{\overset{#1}{\kern\z@\sim}}}%
\newsavebox{\mybox}\newsavebox{\mysim}
\newcommand{\distras}[1]{%
  \savebox{\mybox}{\hbox{\kern3pt$\scriptstyle#1$\kern3pt}}%
  \savebox{\mysim}{\hbox{$\sim$}}%
  \mathbin{\overset{#1}{\kern\z@\resizebox{\wd\mybox}{\ht\mysim}{$\sim$}}}%
}
 \def\BibTeX{{\rm B\kern-.05em{\sc i\kern-.025em b}\kern-.08em
     T\kern-.1667em\lower.7ex\hbox{E}\kern-.125emX}}
\begin{document}

\title{Smoothing ADMM for Sparse-Penalized Quantile Regression with Non-Convex Penalties}

\author{Reza Mirzaeifard~\IEEEmembership{Student Member,~IEEE}, Naveen K. D. Venkategowda~\IEEEmembership{Senior Member, ~IEEE},  Vinay Chakravarthi Gogineni~\IEEEmembership{Senior Member, ~IEEE}, Stefan Werner$^{\star} ~\IEEEmembership{Fellow, ~IEEE}$
 \thanks{This work was supported in part by the Research Council of Norway.}
\thanks{Stefan Werner and Reza Mirzaeifard are with the Department
of Electronic Systems, Norwegian University of Science and Technology-NTNU, Norway,
Trondheim, 7032 Norway (e-mail:\{stefan.werner, reza.mirzaeifard\}@ntnu.no).}
\thanks{Naveen K. D. Venkategowda was with the
Department of Science and Technology,  Linköping University, Sweden (e-mail:naveen.venkategowda@liu.se).}
\thanks{Vinay Chakravarthi Gogineni is with the SDU Applied AI and Data Science, The Maersk Mc-Kinney Moller Institute, University of Southern Denmark, Denmark (e-mail: vigo@mmmi.sdu.dk). }

 \thanks{Manuscript received April 19, 2023}
}

\maketitle

\begin{abstract}
This paper investigates quantile regression in the presence of non-convex and non-smooth sparse penalties, such as the minimax concave penalty (MCP) and smoothly clipped absolute deviation (SCAD). The non-smooth and non-convex nature of these problems often leads to convergence difficulties for many algorithms. While iterative techniques like coordinate descent and local linear approximation can facilitate convergence, the process is often slow. This sluggish pace is primarily due to the need to run these approximation techniques until full convergence at each step, a requirement we term as a \emph{secondary convergence iteration}.
To accelerate the convergence speed, we employ the alternating direction method of multipliers (ADMM) and introduce a novel single-loop smoothing ADMM algorithm with an increasing penalty parameter, named SIAD, specifically tailored for sparse-penalized quantile regression. We first delve into the convergence properties of the proposed SIAD algorithm and establish the necessary conditions for convergence. Theoretically, we confirm a convergence rate of $o\big({k^{-\frac{1}{4}}}\big)$ for the sub-gradient bound of augmented Lagrangian. Subsequently, we provide numerical results to showcase the effectiveness of the SIAD algorithm. Our findings highlight that the SIAD method outperforms existing approaches, providing a faster and more stable solution for sparse-penalized quantile regression.
\end{abstract}
\begin{IEEEkeywords}
Quantile regression, non-smooth and non-convex penalties, ADMM, sparse learning.
\end{IEEEkeywords}

\section{Introduction}
\label{sec:intro}

Regression algorithms, such as linear regression, primarily aim to estimate the conditional mean of a response variable associated with a set of observations \cite{seber2012linear}. However, mean-based regression is notably sensitive to outliers and falls short in relating the response variable to another point, or range, within the conditional distribution, for instance, the median or a certain percentile. When data distributions exhibit heavy tails, the shortcomings of mean-based regression become even more pronounced. As an alternative, quantile regression provides a more comprehensive understanding of the underlying relationships between predictor and response variables based on quantiles, making it highly suitable for a diverse range of applications, particularly those involving heavy-tailed distributions. For instance, quantile regression has been used to predict regional wind power \cite{yu2020probabilistic}, where forecasts of extreme quantiles help manage the variability and intermittency associated with wind speeds, to estimate uncertainty in smart meter data \cite{taieb2016fore}, which allows for more robust modeling of electricity consumption patterns and better anomaly detection, and to forecast load in smart grids \cite{happy2021stat}, where accurate predictions across different quantiles can help grid operators make informed decisions about capacity planning and demand response, ultimately enhancing the stability and efficiency of the power system.

Moreover, estimating sparse models is essential in numerous real-world scenarios, such as quantitative traits in genetics \cite{he2016regularized}, gene selection for microarray gene expression \cite{algamal2018gene}, robust risk management models in finance \cite{tibshirani2014adaptive}, and analyzing the relationship between environmental factors and species distribution in ecological studies \cite{chen2021quantile}. This has led to an upsurge in research interest in the field \cite{wu2009variable, xue2012positive}. While $l_1$-penalized quantile regression has proven effective for highly sparse models, it exhibits poor performance and bias when model sparsity decreases due to the uniform shrinkage of all coefficients towards zero. To overcome this limitation, more sophisticated penalties, such as minimax concave penalty (MCP) \cite{fan2001variable} and smoothly clipped absolute deviation (SCAD) \cite{zhang2010nearly}, have been studied. Despite being non-convex and non-smooth, these penalties can selectively shrink model coefficients and mitigate the bias effect of the $l_1$-penalty, making them more suitable for a wider range of sparsity levels.

In penalized quantile regression with $l_1$ norm, a non-smooth yet convex problem, various optimization algorithms are available. These include linear programming (LP) \cite{belloni2011l1,koenker2005frisch}, sub-gradient methods \cite{wang2017distributed}, primal-dual methods \cite{ouyang2021lower}, and the alternating direction method of multipliers (ADMM) \cite{boyd2011distributed,gu2018admm}. LP is commonly used due to the compatibility of its linear constraints and objective function with the $l_1$ norm \cite{belloni2011l1,koenker2005frisch}. Despite the fact that the simplex method or interior point methods can lead to high computational complexity. Furthermore, LP's need to use approximation techniques to handle non-smoothness can potentially compromise accuracy. On the other hand, sub-gradient methods, which iteratively update the solution using the function's sub-gradients, can handle non-smoothness inherently and do not require the objective function to be differentiable \cite{wang2017distributed}. Despite this, these methods often show slower convergence rates than LP and require careful choice of step sizes. To improve convergence, gradient tracking methods, and smoothing techniques have been introduced \cite{wang2023smoothing,zhao2023accelerated}. Primal-dual methods, another alternative, transform the original problem into a saddle-point problem and tackle it using primal-dual updates. While they are generally more efficient than basic sub-gradient methods, they require careful tuning and present their own challenges due to the computational burden and the complexity of the saddle-point problem \cite{ouyang2021lower}. Lastly, the ADMM method, which breaks the problem into more manageable blocks, provides another solution \cite{boyd2011distributed}. It combines the principles of dual decomposition and augmented Lagrangian methods and has effectively solved many non-smooth convex constrained optimization problems \cite{boyd2011distributed}. In the context of $l_1$ penalized quantile regression, the ADMM variants, namely sparse coordinate descent ADMM (scdADMM) and proximal ADMM (pADMM), have been introduced \cite{gu2018admm}. They are used when one block of the primal update in the ADMM algorithm lacks a closed-form solution, prompting consideration of coordinate methods or proximal methods for block minimization.

 Dealing with non-smoothness is notably more challenging in non-convex settings such as penalized quantile regression with non-convex penalties, and this also limits the number of available optimization algorithms. Many conventional methods lose their convergence guarantees in these scenarios. Nevertheless, these methods can still provide convergence guarantees under specific conditions like lower semi-continuity, weak convexity, sharpness, or the presence of one smooth part \cite{chen2021distributed,zeng2022moreau,wang2019global,yashtini2020convergence,davis2018subgradient,swenson2022distributed}. A common strategy to approach non-smooth, non-convex problems is the majorization-minimization (MM) or the Local Linear Approximation (LLA) frameworks \cite{peng2015iterative,sun2016majorization}. These frameworks construct and minimize a surrogate function that majorizes the original non-convex function, designed to have desirable properties like smoothness or convexity, making optimization more tractable  \cite{sun2016majorization}. By incorporating the idea of MM with coordinate descent, the iterative coordinate descent algorithm (QICD) was proposed \cite{peng2015iterative}, but it suffers from high computational complexity and slow convergence rates. To improve efficiency, combining the LLA framework with scdADMM (LSCD) or pADMM (LPA) has been considered \cite{gu2018admm}. However, as both MM and LLA frameworks involve iterative processes, they may lead to slow convergence, prompting the consideration of single-loop algorithms, which can be more robust and faster.
A recently proposed sub-gradient algorithm can handle weakly convex functions, achieving a convergence rate of $O(K^{-\frac{1}{4}})$ to the $\frac{1}{K}$-stationary point based on the derivative of Moreau-envelope function \cite{davis2019stochastic}. This algorithm can be adapted for quantile regression penalized with MCP or SCAD \cite{mirzaeifard2023distributed}, but the result depends on the step size, and the convergence speed might not be efficient. Thus, there is ongoing research to find more robust and efficient solutions for non-smooth, non-convex optimization problems.

In light of the proven effectiveness of the ADMM algorithm, its application to quantile regression is quite appealing. However, implementing ADMM in non-convex scenarios with proven convergence remains challenging since existing non-convex ADMM methods frequently demand either a smooth part or an implicit Lipschitz condition to assure convergence \cite{wang2019global, hong2015convergence, yashtini2020convergence, themelis2020douglas,mirzaeifard2022robust}. Characteristics like Lipschitz differentiability can be beneficial in regulating the change in the dual update variable in non-convex optimization problems \cite{wang2019global}. In scenarios lacking convexity in the objective function, managing the change in the dual update step relative to the primal variables becomes essential for ensuring convergence. Precisely, setting bounds on the change in the dual update step, in accordance with the primal variables, could offer a means for parameter tuning and proof of convergence \cite{wang2019global,mirzaeifard2022robust}. However, in non-smooth and non-convex settings, such as sparse penalized quantile regression, the conditions for Lipschitz differentiability or implicit Lipschitz differentiability might not always be satisfied \cite{mirzaeifard2022admm}. Thus, a compelling need arises for developing enhanced ADMM-based optimization methods that can efficiently handle such conditions without relying on these assumptions.

This work proposes a smoothing ADMM algorithm with time-increasing penalty parameters referred to as SIAD to handle quantile regression problems with non-convex and non-smooth sparse penalties, such as MCP and SCAD. The SIAD algorithm tackles non-convex optimization problems by incorporating smoothing techniques without requiring a smooth part or an implicit Lipschitz condition. Our method transforms the non-smooth, non-convex problem into a series of smooth approximations, facilitating efficient and reliable convergence. The proposed SIAD employs a process where an upper-bound smooth function approximates the quantile regression function. This strategy allows us to regulate changes in the dual update step, ensuring the obtained limit converges to the actual quantile regression function. As the smoothness of the approximation function diminishes with each iteration, we apply a time-varying increasing penalty parameter to retain control over the dual variable. We also employed coordinate descent to update the block that can not admit a closed-form solution, resulting in a simpler algorithm and faster convergence. The contributions of this paper are as follows:
\begin{itemize}
\item We introduce the SIAD, a novel smoothing ADMM algorithm for sparse-penalized quantile regression with non-convex penalties, specifically focusing on MCP and SCAD penalties.
\item We overcome the challenges posed by the non-convex and non-smooth nature of the quantile regression function and penalties by employing an iterative smoothing process, enabling us to demonstrate convergence under mild conditions.
\item We provide a detailed analysis of the SIAD, demonstrating a convergence rate of $o\big({k^{-\frac{1}{4}}}\big)$ for the sub-gradient bound of augmented Lagrangian. 
\item Through extensive simulations, we validate the superior performance of our proposed algorithm in comparison to existing methods in terms of accuracy and convergence.
\end{itemize}
The remainder of this paper is organized as follows. Section \ref{sec3} outlines the necessary preliminaries for our study. In Section \ref{sec4}, we present our proposed SIAD algorithm. The convergence proof is given in Section \ref{sec5}. In Section \ref{sec6}, we conduct numerical simulations to validate the performance and effectiveness of our algorithm. Lastly, we draw conclusions and discuss future work in Section \ref{sec7}.

\noindent\textit{\textbf{Mathematical Notations}}:  
Bold letters $\mathbf{a}$ and $\mathbf{A}$ 
represent vectors and matrices, respectively. The transpose of
$\mathbf{A}$ is denoted as $\mathbf{A}^\text{T}$. The $j$-th column of a matrix $\mathbf{A}$ is denoted as $\mathbf{A}_{:,j}$, and the $j$-th element of a vector $\mathbf{x}$ is denoted as $x_j$. In addition, we let $\mathbf{A}_{<s} \mathbf{x}_{<s} := \sum_{i<s} \mathbf{A}_{:,i} x_i$ and, similarly, $\mathbf{A}_{>s} \mathbf{x}_{>s} := \sum_{i>s} \mathbf{A}_{:,i} x_i$. Moreover, for a function $h:\mathbb{R}^n \rightarrow \mathbb{R}$ and penalty parameter $\gamma>0$, the proximal function is defined as: $\textbf{Prox}_{h}\mathopen{}\left(w;\gamma\right)\mathclose{}= \argmin_x \mathopen{}\left \{h\mathopen{}\left(x\right)\mathclose{}+\frac{1}{2\gamma}\mathopen{}\left\|x-w\right\|\mathclose{}_2^2\right \}$. Furthermore, for a scalar variable $u$, and penalty parameter $\alpha$, $\text{Shrink}\mathopen{}\left(u,\alpha\right)\mathclose{}=\frac{u}{|u|}\max\{0,|u|-\alpha\}$. Finally, $\partial f\mathopen{}\left(u\right)\mathclose{}$ represents the sub-gradient of $f\mathopen{}\left(\cdot\right)\mathclose{}$ at $u$.
\section{Preliminaries}\label{sec3}
\subsection{Sparse Quantile Regression Framework}
Consider a scalar random variable $Y$ and a $P$-dimensional vector of predictors $\boldsymbol{\chi}$. We define the conditional cumulative distribution function as $F_{Y}\mathopen{}\left(y|\mathbf{x}\right)\mathclose{}=P\mathopen{}\left(Y\leq y|\boldsymbol{\chi}=\mathbf{x}\right)\mathclose{}$ and the $\tau$-th conditional quantile for $\tau \in \mathopen{}\left(0,1\right)\mathclose{}$ as $Q_Y\mathopen{}\left(\tau|\mathbf{x}\right)\mathclose{}=\inf\{y:F_Y\mathopen{}\left(y|\mathbf{x}\right)\mathclose{}\geq \tau\}$. The linear quantile regression model associates $Q_Y\mathopen{}\left(\tau|\mathbf{x}\right)\mathclose{}$ and $\mathbf{x}\in \mathbb{R}^P$ \cite{koenker1982robust}:
\begin{equation}\label{eq1}
Q_Y\mathopen{}\left(\tau|\mathbf{x}\right)\mathclose{}= \mathbf{x}^\text{T} \boldsymbol{\beta}_{\tau},
\end{equation}
where $\boldsymbol{\beta}_{\tau} \in \mathbb{R}^P$ denotes the parameters of the regression model that must be estimated.

Given a dataset comprising pairs $\{\mathbf{x}_i,y_i\}_{i=1}^{n}$ where $n$ is the number of samples and a specific value of $\tau$, the model parameter estimation can be obtained by solving the optimization problem \cite{koenker1982robust}:
\begin{equation}\label{eq2}
\hat{\mathbf{w}}=\argmin_{\mathbf{w}} \frac{1}{n} \sum_{i=1}^{n} \rho_{\tau}\mathopen{}\left(y_{i}-\mathbf{x}_{i}^{\text{T}}\mathbf{w}\right)\mathclose{},
\end{equation}
where $\mathbf{w}=\boldsymbol{\beta}_{\tau}$ and $\rho_{\tau}\mathopen{}\left(u\right)\mathclose{}=u \mathopen{}\left(\tau-I\mathopen{}\left(u<0\right)\mathclose{}\right)\mathclose{}$ is the check loss function.

Incorporating the penalty function $P_{\lambda,\gamma}\mathopen{}\left(\mathbf{w}\right)\mathclose{}$ allows leveraging \emph{a priori} information about the model coefficients, thus enhancing the inference quality. The penalized optimization problem \eqref{eq2} becomes
\begin{equation}\label{eq3}
\hat{\mathbf{w}}=\argmin_{\mathbf{w}} \frac{1}{n} \sum_{i=1}^{n} \rho_{\tau}\mathopen{}\left(y_{i}-\mathbf{{x}}_{i}^{\text{T}}\mathbf{w}\right)\mathclose{}+P_{\lambda,\gamma}\mathopen{}\left(\mathbf{w}\right)\mathclose{}.
\end{equation}
 When promoting sparsity, there are several penalty functions, or regularizers, to choose from, but the $l_1$ norm function has gained widespread popularity. Despite its popularity, the $l_1$ norm can lead to estimation bias and is not well-suited for group sparsity. In this paper, we present a solution that utilizes the MCP and SCAD penalty functions as $P_{\lambda,\gamma}\mathopen{}\left(\mathbf{w}\right)\mathclose{}=\sum_{p=1}^P g_{\lambda,\gamma}\mathopen{}\left(w_p\right)\mathclose{}$ to achieve sparsity. The  definitions of MCP \cite{zhang2010nearly} and SCAD \cite{fan2001variable}  with constraints $\gamma\geq 1$ and $\gamma\geq 2$ respectively, are given vt:
\begin{equation}\label{eq5}
 g_{\lambda,\gamma}^{\text{MCP}}\mathopen{}\left(w_p\right)\mathclose{}=
\begin{cases}
\lambda|w_p|-\frac{w_p^2}{2\gamma}, & |w_p| \leq \gamma \lambda 
\\
\frac{\gamma\lambda^2}{2}, & |w_p|> \gamma\lambda   
\end{cases}
\end{equation}
and
\begin{equation}\label{Eq6}
 g_{\lambda,\gamma}^{\text{SCAD}}\mathopen{}\left(w_p\right)\mathclose{}=
\begin{cases}
\lambda|w_p|, & |w_p| \leq \lambda 

\\
-\frac{|w_p|^2-2\gamma\lambda|w_p|+\lambda^2}{2\mathopen{}\left(\gamma-1\right)\mathclose{}}, & \lambda < |w_p| \leq \gamma\lambda 
\\
\frac{\mathopen{}\left(\gamma+1\right)\mathclose{}\lambda^2}{2}, &  |w_p| > \gamma\lambda 
\end{cases}
\end{equation}
These non-convex and non-smooth functions can clearly differentiate between active and non-active coefficients. Additionally, the MCP and SCAD functions are known to be weakly convex for $\rho \geq \frac{1}{\gamma}$ and $\rho \geq \frac{1}{\gamma-1}$ respectively, according to \cite{varma2019vector}. 

Alternatively, by introducing an auxiliary variable $\mathbf{z}$, \eqref{eq3} can be rewritten as:
\begin{alignat}{2}\label{eq4}
&\min_{{\mathbf{w},\mathbf{z}}} &\qquad& \frac{1}{2}\mathopen{}\left(\|\mathbf{z}\|_1 + \mathopen{}\left( 2\tau-1\right)\mathclose{}\mathbf{1}_n^{\text{T}} \mathbf{z}\right)\mathclose{}+ \hspace{0.2mm}n \hspace{1mm} P_{\lambda,\gamma}\mathopen{}\left(\mathbf{w}\right)\mathclose{},\\ \nonumber
&\text{subject to} & & \mathbf{z}+\mathbf{X}\mathbf{w}= \mathbf{y},
\end{alignat}
where $\mathbf{X}=[{\mathbf{x}}_1,\ldots,{\mathbf{x}}_n]^\text{T} \in \mathbb{R}^{n\times P}$ and $\mathbf{y}=[y_1,\cdots,y_n]^\text{T}\in \mathbb{R}^{n}$.
The LLA framework solves this sparse penalized-quantile regression \eqref{eq4} in an iterative procedure by obtaining the sub-gradients of these penalties, i.e., it solves $l_1$ penalized quantile regression in each iteration. However, having a single-loop algorithm can provide more gain as it prevents \emph{secondary convergence iteration} and more accurate solutions. In the next section, we present a smoothing ADMM-based algorithm that alleviates the secondary iteration that appears in the LLA framework.
\subsection{ADMM}
To apply ADMM for solving \eqref{eq4}, one can formulate the associated augmented Lagrangian function as:
\begin{align}\label{eq7}
\begin{split}
\mathcal{L}_{\sigma_{\Psi}}\mathopen{}\left(\mathbf{w},\mathbf{z},\boldsymbol{\Psi}\right)\mathclose{} &= \frac{1}{2}\mathopen{}\left(\|\mathbf{z}\|_1 + \mathopen{}\left( 2\tau-1\right)\mathclose{}\mathbf{1}_{n}^{\text{T}} \mathbf{z}\right)\mathclose{} + n P_{\lambda,\gamma}\mathopen{}\left(\mathbf{w}\right)\mathclose{} \\ 
& \hspace{5mm}+ \boldsymbol{\Psi}^\text{T} \mathopen{}\left(\mathbf{z}+
      \mathbf{X}\mathbf{w}- \mathbf{y}\right)\mathclose{} 
      +   \frac{\sigma_{\Psi}}{2}\mathopen{}\left\|\mathbf{z}+ \mathbf{X}\mathbf{w}- \mathbf{y}\right\|\mathclose{}_2^2,
\end{split}
\end{align}
where $\boldsymbol{\Psi} \in \mathbb{R}^{n}$ denotes the Lagrange multiplier, and $\sigma_{\Psi}$ represents the penalty parameter. ADMM seeks a saddle point for the augmented Lagrangian function through an iterative process \cite{boyd2011distributed}. The $\mathopen{}\left(k+1\right)\mathclose{}$-th iteration of a standard two-block ADMM algorithm can be described as \cite{boyd2011distributed}:
\begin{subequations}\label{eq8}
\begin{equation}\label{eq8.a}
    \mathbf{w}^{\mathopen{}\left(k+1\right)\mathclose{}}=
     \argmin_{\mathbf{w}}  \mathcal{L}_{\sigma_{\Psi}}\mathopen{}\left(\mathbf{w},\mathbf{z}^{\mathopen{}\left(k\right)\mathclose{}},\boldsymbol{\Psi}^{\mathopen{}\left(k\right)\mathclose{}}\right)\mathclose{},
\end{equation}
\begin{equation}\label{eq8.b}
    \mathbf{z}^{\mathopen{}\left(k+1\right)\mathclose{}}=
     \argmin_{\mathbf{z}}  \mathcal{L}_{\sigma_{\Psi}}\mathopen{}\left(\mathbf{w}^{\mathopen{}\left(k+1\right)\mathclose{}},\mathbf{z},\boldsymbol{\Psi}^{\mathopen{}\left(k\right)\mathclose{}}\right)\mathclose{},
\end{equation}
\begin{equation}\label{eq8.c}
    \boldsymbol{\Psi}^{\mathopen{}\left(k+1\right)\mathclose{}}=\boldsymbol{\Psi}^{\mathopen{}\left(k\right)\mathclose{}}+\sigma_{\Psi}\mathopen{}\left(\mathbf{z}^{\mathopen{}\left(k+1\right)\mathclose{}}+ \mathbf{X}\mathbf{w}^{\mathopen{}\left(k+1\right)\mathclose{}}- \mathbf{y}\right)\mathclose{}.
\end{equation}
\end{subequations}
There are two primary issues with applying conventional ADMM to our problem. The first issue is updating the $\mathbf{w}$ variable, which has no closed-form solution when using a general design matrix ${\mathbf{X}}$. Although each fundamental function $g_{\lambda,\gamma}\mathopen{}\left(w_p\right)\mathclose{}$ possesses a closed-form proximal function, acquiring $\mathbf{w}$ does not have a straightforward closed-form solution for a generic design matrix ${\mathbf{X}}$. The $\mathbf{w}$ update step can be performed iteratively using block coordinate descent \cite{boyd2011distributed} which is time-consuming, or through various multi-block ADMM techniques capable of examining each element individually. 

The second issue relates to the absence of Lipschitz differentiability in the objective function. Lipschitz differentiability is a key property in non-convex ADMM algorithms, as it plays a significant role in controlling the changes in the dual parameters based on the primal variables. As a result, Lipschitz differentiability contributes to demonstrating that the value of the augmented Lagrangian decreases over iterations. To the best of our knowledge, existing non-convex ADMM-based approaches \cite{wang2019global, hong2015convergence, yashtini2020convergence, themelis2020douglas} fail to ensure convergence when the objective function lacks these properties. Addressing both of these challenges is crucial for the successful application of ADMM to our problem.
\subsection{Multi-Block ADMM}
One can reformulate \eqref{eq4} as follows:
\begin{alignat}{2}\label{eq9}\noindent \nonumber
&\!\min_{\{\mathbf{w},\mathbf{z}\}} &\qquad&  \frac{1}{2}\mathopen{}\left(\|\mathbf{z}\|_1 + \mathopen{}\left( 2\tau-1\right)\mathclose{}\mathbf{1}_n^{\text{T}} \mathbf{z}\right)\mathclose{}+ \hspace{0.2mm}n \hspace{1mm} \sum_{p=1}^P g_{\lambda,\gamma}\mathopen{}\left(w_p\right)\mathclose{}.\\
&\text{subject to} &      & \mathbf{z}+\mathbf{X}_{:,1}\mathbf{w}_1+\cdots+\mathbf{X}_{:,P}\mathbf{w}_P= \mathbf{y}
\end{alignat}
This formulation allows us to update each $w_p$ individually, as shown in the following:
\begin{multline}\label{eq11}
    w_p^{\mathopen{}\left(k+1\right)\mathclose{}}=
     \argmin_{w_p}  \mathcal{L}_{\sigma_{\Psi}}\mathopen{}\left(\mathbf{w}_{<p}^{\mathopen{}\left(k+1\right)\mathclose{}},w_p,\mathbf{w}_{>p}^{\mathopen{}\left(k\right)\mathclose{}},\mathbf{z}^{\mathopen{}\left(k\right)\mathclose{}},\boldsymbol{\Psi}^{\mathopen{}\left(k\right)\mathclose{}}\right)\mathclose{}, \\ 
     \text{for } p=1,\cdots,P.
\end{multline}
By employing this approach, we introduce multi-block ADMM for our problem. To ensure convergence, it is crucial that the sum of the basic functions, $\sum_{p=1}^P g_{\lambda,\gamma}\mathopen{}\left(w_p\right)\mathclose{}$, satisfies the prox-regularity condition as defined in \cite{poliquin1996prox}. As demonstrated in \cite{wang2019global}, when this condition is met, there will be no convergence issues regarding to updating each element individually. In our case, each function $g_{\lambda,\gamma}\mathopen{}\left(w_p\right)\mathclose{}$, specifically for MCP or SCAD, is the maximum of a set of quadratic functions and according to \cite[Example 2.9]{poliquin1996prox}, such functions are indeed prox-regular.

\subsection{Smoothing Approximation}
Non-smooth objective functions often complicate optimization tasks. One popular approach to tackle non-smoothness is to use smoothing approximation techniques. These techniques approximate the non-smooth functions with smooth functions, which can be more easily optimized based on our proposed ADMM. In this section, we introduce the concept of smoothing approximation, which will be employed to handle the lack of smoothness issue in our proposed ADMM algorithm for sparse-penalized quantile regression.

We begin by providing the definition of a smoothing function, which approximates a given non-smooth function $g$ with a smooth function $\tilde{g}$, facilitating the optimization process.
\begin{customdefinition}{1\cite{chen2012smoothing}} Consider $\tilde{g} : \sigma \subseteq \mathbb{R}^n \times \mathopen{}\left(0, +\infty\right)\mathclose{} \rightarrow \mathbb{R}$, a smoothing function of $g$, where $g : \sigma \subseteq \mathbb{R}^n \rightarrow \mathbb{R}$ exhibits local Lipschitz continuity. The function $\tilde{g}$ possesses the following characteristics:
\begin{enumerate}
\item Continuously differentiable property: For any fixed $\mu > 0$, $\tilde{g}\mathopen{}\left(\cdot, \mu\right)\mathclose{}$ is continuously differentiable in $\mathbb{R}^m$, and for any fixed $\mathbf{x} \in \sigma \subseteq \mathbb{R}^n$, $\tilde{g}\mathopen{}\left(x, \cdot\right)\mathclose{}$ is differentiable in $(0, +\infty]$.
\item Convergence property: For any fixed $\mathbf{x} \in \sigma \subseteq \mathbb{R}^n$, $\lim_{\mu \to 0^+} \tilde{g}\mathopen{}\left(\mathbf{x}, \mu\right)\mathclose{} = g\mathopen{}\left(x\right)\mathclose{}$.
\item Gradient bound property: A positive constant $\kappa_{\tilde{g}} > 0$ exists such that $|\nabla_\mu \tilde{g}\mathopen{}\left(\mathbf{x}, \mu\right)\mathclose{}| \leq \kappa_{\tilde{g}}$ for all $\mu \in \mathopen{}\left(0, +\infty\right)\mathclose{}$ and $\mathbf{x} \in \sigma \subseteq \mathbb{R}^{n}$.
\item Gradient convergence property: $\lim_{\substack{\mathbf{z} \to \mathbf{x} \\ \mu \to 0}} \nabla_z \tilde{g}\mathopen{}\left(\mathbf{z}, \mu\right)\mathclose{} \subseteq \partial g\mathopen{}\left(\mathbf{x}\right)\mathclose{}$.

Additionally, for any fixed $\mathbf{x} \in \mathbb{R}^n$, the smoothing function $\tilde{g}$ satisfies:

\item General convergence property: $\lim_{\substack{\mathbf{z} \to \mathbf{x} \\ \mu \to 0}} \tilde{g}\mathopen{}\left(\mathbf{z}, \mu\right)\mathclose{} = g\mathopen{}\left(\mathbf{x}\right)\mathclose{}$.
\item Lipschitz continuity with respect to $\mu$: There exists a constant $L > 0$ such that $|\tilde{g}\mathopen{}\left(\mathbf{x}, \mu_1\right)\mathclose{} - \tilde{g}\mathopen{}\left(\mathbf{x}, \mu_2\right)\mathclose{}| \leq L|\mu_1 - \mu_2|$.
\item Lipschitz continuity of gradient: $\tilde{g}\mathopen{}\left(\mathbf{x}, \mu\right)\mathclose{}$ is a convex function, and a constant $l > 0$ exists such that $\mathopen{}\left\|\nabla\tilde{g}\mathopen{}\left(\mathbf{x}, \mu\right)\mathclose{} - \nabla \tilde{g}\mathopen{}\left(\mathbf{y}, \mu\right)\mathclose{} \right\|\mathclose{}  \leq l_\mu \mathopen{}\left\|\mathbf{x} - \mathbf{y}\right\|\mathclose{}$ for all $\mathbf{x}, \mathbf{y} \in \sigma \subset \mathbb{R}^n$.
\end{enumerate}
\end{customdefinition}
\begin{lemma}[\cite{bian2013neural}]
    Let $\bar{g}_1,\cdots, \bar{g}_n$ be smoothing functions of ${g}_1,\cdots, {g}_n$, $a_i \geq 0$ and $g_i$ be regular for any $i = 1,2,\cdots,n$. Then, $\sum_{i=1}^{n} a_i \bar{g}_i$ is  a smoothing function of $\sum_{i=1}^{n} a_i {g}_i$.
\end{lemma}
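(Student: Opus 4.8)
The plan is to verify, one by one, the seven defining characteristics of a smoothing function from Definition~1 for the candidate $\tilde{G}(\mathbf{x},\mu) := \sum_{i=1}^{n} a_i \bar{g}_i(\mathbf{x},\mu)$ relative to $G(\mathbf{x}) := \sum_{i=1}^{n} a_i g_i(\mathbf{x})$, exploiting the fact that the sum is finite so that limits and differentiation commute with summation. Most of these properties reduce to linearity of the relevant operation together with the triangle inequality and the sign hypothesis $a_i \ge 0$.

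First I would dispatch the routine properties. Continuous differentiability in $\mathbf{x}$ for fixed $\mu$, and differentiability in $\mu$ for fixed $\mathbf{x}$ (property~1), are immediate because a finite non-negative linear combination of $C^1$ functions is $C^1$. The two convergence properties (properties~2 and~5) follow by pulling the limit through the finite sum, e.g. $\lim_{\mu \to 0^+} \tilde{G}(\mathbf{x},\mu) = \sum_i a_i \lim_{\mu \to 0^+} \bar{g}_i(\mathbf{x},\mu) = \sum_i a_i g_i(\mathbf{x}) = G(\mathbf{x})$, and likewise for the joint limit $\mathbf{z} \to \mathbf{x}$, $\mu \to 0$. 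The gradient bound in $\mu$ (property~3) follows from $|\nabla_\mu \tilde{G}| \le \sum_i a_i |\nabla_\mu \bar{g}_i| \le \sum_i a_i \kappa_{\bar{g}_i} =: \kappa_{\tilde{G}}$; the Lipschitz-in-$\mu$ estimate (property~6) from $|\tilde{G}(\mathbf{x},\mu_1) - \tilde{G}(\mathbf{x},\mu_2)| \le \big(\sum_i a_i L_i\big)|\mu_1 - \mu_2|$; and convexity plus gradient-Lipschitzness (property~7) from the fact that a non-negative combination of convex functions is convex, with gradient-Lipschitz constant bounded by $\sum_i a_i l_{i}$. Each of these is a one-line triangle-inequality estimate, where $a_i \ge 0$ is used to keep the weights outside the absolute values and norms without sign changes.

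The genuinely delicate step is the gradient-convergence/subdifferential-inclusion property (property~4). Here I would write $\nabla_{\mathbf{z}} \tilde{G}(\mathbf{z},\mu) = \sum_i a_i \nabla_{\mathbf{z}} \bar{g}_i(\mathbf{z},\mu)$ and take an arbitrary accumulation point of this family as $\mathbf{z} \to \mathbf{x}$, $\mu \to 0$. Since each $\nabla_{\mathbf{z}} \bar{g}_i$ accumulates only inside $\partial g_i(\mathbf{x})$, every such limit lies in the weighted Minkowski sum $\sum_i a_i\, \partial g_i(\mathbf{x})$, so it remains to establish $\sum_i a_i\, \partial g_i(\mathbf{x}) \subseteq \partial G(\mathbf{x})$. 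In general the Clarke subdifferential satisfies only the reverse containment $\partial\big(\sum_i a_i g_i\big) \subseteq \sum_i a_i\, \partial g_i$, so the required inclusion would fail for arbitrary locally Lipschitz $g_i$. This is precisely where the regularity hypothesis enters: when each $g_i$ is regular and $a_i \ge 0$, the sum rule holds with equality, $\partial\big(\sum_i a_i g_i\big) = \sum_i a_i\, \partial g_i$, delivering exactly the inclusion we need. I expect this sum rule for regular functions, together with the set-valued limit argument that funnels the smoothing gradients into $\sum_i a_i\, \partial g_i(\mathbf{x})$, to be the main obstacle, whereas the remaining six properties are routine consequences of linearity.
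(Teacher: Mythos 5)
The paper does not prove this lemma at all: it is imported verbatim from the cited reference \cite{bian2013neural}, so there is no in-paper argument to compare against. Judged on its own, your proposal is correct and is essentially the standard proof of this result. You correctly dispatch properties 1--3 and 5--7 by linearity of differentiation and limits over a finite sum together with the triangle inequality (with $a_i \ge 0$ keeping the weights nonnegative), and you correctly isolate the only nontrivial step, the gradient-consistency property, where the generic Clarke inclusion $\partial\bigl(\sum_i a_i g_i\bigr)(\mathbf{x}) \subseteq \sum_i a_i\,\partial g_i(\mathbf{x})$ points the wrong way and the regularity hypothesis is exactly what upgrades it to equality, yielding $\sum_i a_i\,\partial g_i(\mathbf{x}) \subseteq \partial\bigl(\sum_i a_i g_i\bigr)(\mathbf{x})$ as needed. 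The one small point worth making explicit is that, to funnel an arbitrary accumulation point of $\sum_i a_i \nabla_{\mathbf{z}}\bar{g}_i(\mathbf{z},\mu)$ into the Minkowski sum $\sum_i a_i\,\partial g_i(\mathbf{x})$, you must pass to a subsequence along which each individual gradient $\nabla_{\mathbf{z}}\bar{g}_i(\mathbf{z},\mu)$ converges; this requires local boundedness of these gradients near $(\mathbf{x},0)$, which holds for smoothing functions of locally Lipschitz $g_i$ but should be stated rather than left implicit.
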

The smoothing function $\tilde{g}$ defined above has several desirable properties, such as smoothness and general convergence property. These properties enable us to perform a more insightful analysis of the optimization problem's convergence, as they simplify the problem's complexity and improve our ability to predict convergence behavior. In the following sections, we will leverage the concept of smoothing approximation to address the challenges posed by the lack of smoothness in the ADMM algorithm for our sparse-penalized quantile regression problem.

\section{Smoothing ADMM for sparse-penalized quantile regression}\label{sec4}
To tackle the challenges above, specifically the absence of Lipschitz differentiability in the objective function and the lack of a closed-form solution for $\mathbf{w}$, we propose a smoothing ADMM-based algorithm with a time-increasing penalty parameter, referred to as SIAD.  First, in each iteration, we approximate the $\|\mathbf{z}\|_1$ using sum of a smooth smoothing approximation function for $|z_i|$ presented in \cite{chen2012smoothing} as $h\mathopen{}\left(\mathbf{z},\mu\right)\mathclose{}=\sum_{i=1}^{n} f\mathopen{}\left(z_i,\mu\right)\mathclose{}$, where: 
\begin{equation}\label{eq12}
 f\mathopen{}\left(z_i,\mu\right)\mathclose{}=
\begin{cases}
|z_i|, &  \mu \leq |z_i|
\\
\frac{z_i^2}{2\mu}+\frac{\mu}{2}. &  |z_i| < \mu 
\end{cases}
\end{equation}
Using the approximation \eqref{eq12} in \eqref{eq7}, the following approximation augmented Lagrangian can be derived:
\begin{multline}\label{eq13}
\bar{\mathcal{L}}_{\sigma_{\Psi},\mu}\mathopen{}\left(\mathbf{w},\mathbf{z},\boldsymbol{\Psi}\right)\mathclose{} = \frac{1}{2}\mathopen{}\left( \sum_{i=1}^n f\mathopen{}\left(z_i,\mu\right)\mathclose{} + \mathopen{}\left( 2\tau-1\right)\mathclose{}\mathbf{1}_{n}^{\text{T}} \mathbf{z}\right)\mathclose{}\\ + n P_{\lambda,\gamma}\mathopen{}\left(\mathbf{w}\right)\mathclose{}  
+ \boldsymbol{\Psi}^\text{T} \mathopen{}\left(\mathbf{z}+
      \mathbf{X}\mathbf{w}- \mathbf{y}\right)\mathclose{} 
      +   \frac{\sigma_{\Psi}}{2}\mathopen{}\left\|\mathbf{z}+ \mathbf{X}\mathbf{w}- \mathbf{y}\right\|\mathclose{}_2^2.
\end{multline}

By having $c>0$ and $\beta>0$ in each iteration, the $\sigma_{\Psi}$, and $\mu^{\mathopen{}\left(k+1\right)\mathclose{}}$ can be updated as:
 \begin{equation}\label{eq14}
     \sigma_{\Psi}^{\mathopen{}\left(k+1\right)\mathclose{}}=c\sqrt{k+1},
 \end{equation}
 and 
  \begin{equation}\label{eq15}
     \mu^{\mathopen{}\left(k+1\right)\mathclose{}}=\frac{\beta}{\sigma_{\Psi}^{\mathopen{}\left(k+1\right)\mathclose{}}}=\frac{\beta}{c\sqrt{k+1}}.
 \end{equation}
Moreover, the update of $\mathbf{w}$ is split into $P$ steps. The $p$-th element of $\mathbf{w}$ is updated in the $p$-th iteration as shown in \eqref{eq11}. After several simplifications, we can see that the update of the $p$-th element of $\mathbf{w}$ is given by
\begin{align}\label{eq16}\nonumber
w_p^{\mathopen{}\left(k+1\right)\mathclose{}}&= \argmin_{w_p}  n g_{\lambda,\gamma}\mathopen{}\left(w_p\right)\mathclose{}+\frac{\sigma_{\Psi}^{\mathopen{}\left(k+1\right)\mathclose{}}\mathopen{}\left\|\mathbf{X}_{:,p}\right\|\mathclose{}_2^2}{2}\mathopen{}\left\|w_p-a_p  \right\|\mathclose{}_2^2, \\
&= \textbf{Prox}_{ g_{\lambda,\gamma}}\mathopen{}\left(a_p;\frac{ n}{\sigma_{\Psi}^{\mathopen{}\left(k+1\right)\mathclose{}}\mathopen{}\left\|\mathbf{X}_{:,p}\right\|\mathclose{}_2^2}\right)\mathclose{},
\end{align}
where
$a_p=\frac{-\mathbf{X}_{:,p}^\text{T}\mathbf{X}_{<p}\mathbf{w}^{\mathopen{}\left(k+1\right)\mathclose{}}_{<p}-\mathbf{X}_{:,p}^\text{T}\mathbf{X}_{>p}\mathbf{w}^{\mathopen{}\left(k\right)\mathclose{}}_{>p}-\mathopen{}\left(\frac{\boldsymbol{\Psi}^{\mathopen{}\left(k\right)\mathclose{}}}{\sigma_{\Psi}^{\mathopen{}\left(k+1\right)\mathclose{}}}+\mathbf{y}-\mathbf{z}^{\mathopen{}\left(k\right)\mathclose{}}\right)\mathclose{}^\text{T}\mathbf{X}_p}{\mathopen{}\left\|\mathbf{X}_{:,p}\right\|\mathclose{}_2^2}.$
Both MCP and SCAD admit closed-form solutions of the proximal operator \cite{huang2012selective}. 
Next, the update of $\mathbf{z}$ can be formulated as:
\begin{align}\label{eq17}
    \mathbf{z}^{\mathopen{}\left(k+1\right)\mathclose{}}= \argmin_{\mathbf{z}}\frac{1}{2}\mathopen{}\left( \sum_{i=1}^n f\mathopen{}\left(z_i,\mu^{\mathopen{}\left(k+1\right)\mathclose{}}\right)\mathclose{} + \mathopen{}\left( 2\tau-1\right)\mathclose{}\mathbf{1}_{n}^{\text{T}} \mathbf{z}\right)\mathclose{} 
     \nonumber\\ 
   +\mathopen{}\left(\boldsymbol{\Psi}^{\mathopen{}\left(k\right)\mathclose{}}\right)\mathclose{}^\text{T} \mathbf{z}+   \frac{\sigma_{\Psi}^{\mathopen{}\left(k+1\right)\mathclose{}}}{2}\mathopen{}\left\|\mathbf{z}+\mathbf{X}\mathbf{w}^{\mathopen{}\left(k+1\right)\mathclose{}}- \mathbf{y}\right\|\mathclose{}_2^2.
\end{align}
It can be shown that the update step of $\mathbf{z}$ in ADMM has a closed-form solution. By merging $\mathopen{}\left(\tau-\frac{1}{2}\right)\mathclose{}\mathbf{1}_{n}^{\text{T}} \mathbf{z}$, $\mathopen{}\left(\boldsymbol{\Psi}^{\mathopen{}\left(k\right)\mathclose{}}\right)\mathclose{}^\text{T} \mathbf{z}$, and $\mathopen{}\left\|\mathbf{z}+\mathbf{X}\mathbf{w}^{\mathopen{}\left(k+1\right)\mathclose{}}- \mathbf{y}\right\|\mathclose{}_2^2$ together, a component-wise solution can be obtained as
\begin{align}\label{eq18}
    \mathbf{z}^{\mathopen{}\left(k+1\right)\mathclose{}}
       &=\argmin_{\mathbf{z}} \sum_{i=1}^n f\mathopen{}\left(z_i,\mu^{\mathopen{}\left(k+1\right)\mathclose{}}\right)\mathclose{}
      +\frac{1}{2}\mathopen{}\left\|\mathbf{z}-\boldsymbol{\alpha}\right\|\mathclose{}_2^2
      \nonumber \\
&=\textbf{Prox}_{f\mathopen{}\left(\cdot,\mu^{\mathopen{}\left(k+1\right)\mathclose{}}\right)\mathclose{}}\mathopen{}\left(\alpha_i;\frac{\sigma_{\Psi}^{\mathopen{}\left(k+1\right)\mathclose{}}}{2}\right)\mathclose{}_{i=1}^{n},
\end{align}
where
$\boldsymbol{\alpha}=\mathopen{}\left(\mathbf{y}-\mathbf{X} \mathbf{w}^{\mathopen{}\left(k+1\right)\mathclose{}}\right)\mathclose{}-\frac{\boldsymbol{\Psi}^{\mathopen{}\left(k\right)\mathclose{}}+\mathopen{}\left(\tau-\frac{1}{2}\right)\mathclose{}\mathbf{1}_{n}}{\sigma_{\Psi}^{\mathopen{}\left(k+1\right)\mathclose{}}}$, and \begin{equation}\label{eq19}
\textbf{Prox}_{f\mathopen{}\left(\cdot,{\mu}\right)\mathclose{}}\mathopen{}\left(x;\rho\right)\mathclose{}=
\begin{cases}
x-\rho, & x \geq  \rho + \mu
\\
\frac{z}{1+\frac{\rho}{\mu}}, & -\rho - \mu < x < \rho + \mu 
\\
x+\rho. &  x < -\rho - \mu 
\end{cases}
\end{equation} Finally, the update of dual variable $\boldsymbol{\Psi}$ is given by
 \begin{equation}\label{eq20}
     \boldsymbol{\Psi}^{\mathopen{}\left(k+1\right)\mathclose{}}=\boldsymbol{\Psi}^{\mathopen{}\left(k\right)\mathclose{}}+\sigma_{\Psi}^{\mathopen{}\left(k+1\right)\mathclose{}}\mathopen{}\left(\mathbf{z}^{\mathopen{}\left(k+1\right)\mathclose{}}+ \mathbf{X}\mathbf{w}^{\mathopen{}\left(k+1\right)\mathclose{}}- \mathbf{y}\right)\mathclose{}.
 \end{equation}

The proposed ADMM-based method for solving the sparse-penalized quantile regression is summarized in Algorithm \ref{alg:1}.
\begin{algorithm}[t]
 \caption{Smoothing ADMM with time-increasing penalty parameter (SIAD) for sparse-penalized quantile regression}
 \label{alg:1}
\SetAlgoLined
 Initialize $\mathbf{w}^{\mathopen{}\left(0\right)\mathclose{}}$, $\mathbf{z}^{\mathopen{}\left(0\right)\mathclose{}}$, $\boldsymbol{\beta}^{\mathopen{}\left(0\right)\mathclose{}}$, $\boldsymbol{\Psi}^{\mathopen{}\left(0\right)\mathclose{}}$ to zero vectors and $\beta>1$\;
 \Repeat{the convergence
criterion in \eqref{eq21} is met}{
 Update $\sigma_{\Psi}^{\mathopen{}\left(k+1\right)\mathclose{}}$ by \eqref{eq14}\;
 Update $\mu^{\mathopen{}\left(k+1\right)\mathclose{}}$ by \eqref{eq15}\;
    \For{$p=1,\ldots,P$}{
     Update $w_p^{\mathopen{}\left(k+1\right)\mathclose{}}$ by \eqref{eq16}\;
    }  
  Update $\mathbf{z}^{\mathopen{}\left(k+1\right)\mathclose{}}$ by \eqref{eq18}\;
   Update $\boldsymbol{\Psi}^{\mathopen{}\left(k+1\right)\mathclose{}}$ by \eqref{eq20}\;

 }
\end{algorithm}
It is worth mentioning that the stopping criterion in \cite{boyd2011distributed} can be adapted to our problem as:
\begin{subequations}\label{eq21}
\begin{multline}\label{eq21a}
    \|\mathbf{z}^{\mathopen{}\left(k+1\right)\mathclose{}}+ \mathbf{X}\mathbf{w}^{\mathopen{}\left(k+1\right)\mathclose{}}- \mathbf{y}\|_2\leq \sqrt{n}\epsilon_1 \\ 
    +\epsilon_2 \max\{\|\mathbf{X}\mathbf{w}^{\mathopen{}\left(k+1\right)\mathclose{}}\|_2,\|\mathbf{z}^{\mathopen{}\left(k+1\right)\mathclose{}}\|_2,\|\mathbf{y}^{\mathopen{}\left(k+1\right)\mathclose{}}\|_2\},
\end{multline} \label{eq21b}    
\begin{multline}
     \sigma_{\Psi}^{\mathopen{}\left(k+1\right)\mathclose{}}P\max_p \mathopen{}\left\|\mathbf{X}_{:,p}\right\|\mathclose{}_2^2 \|\mathbf{w}^{\mathopen{}\left(k+1\right)\mathclose{}}-\mathbf{w}^{\mathopen{}\left(k\right)\mathclose{}}\|_2 \\ 
  +\sigma_{\Psi}^{\mathopen{}\left(k+1\right)\mathclose{}}\|\mathbf{X}^{\text{T}}\mathopen{}\left(\mathbf{z}^{\mathopen{}\left(k+1\right)\mathclose{}}-\mathbf{z}^{\mathopen{}\left(k\right)\mathclose{}}\right)\mathclose{}\|_2\leq \sqrt{P}\epsilon_1 +\epsilon_2 \|\mathbf{X}^{\text{T}}\boldsymbol{\Psi}^{\mathopen{}\left(k+1\right)\mathclose{}}\|_2,
  \end{multline} 
\end{subequations}
where a typical choice for $\epsilon_1$ and $\epsilon_2$ is $10^{-3}$. Alternatively, the algorithm can be terminated when the number of iterations exceeds a certain number. 
\section{Convergence proof}\label{sec5}
Establishing the convergence of the proposed ADMM algorithm is contingent upon validating four crucial conditions: boundedness, sufficient descent, subgradient bound, and continuity, as highlighted in \cite[Theorem 2.9]{attouch2013convergence}. To this end, we start by demonstrating the boundedness of the augmented Lagrangian, formalized in Lemma \ref{lem2}.
\begin{lemma}[Boundedness Property]\label{lem2}
The augmented Lagrangian $\bar{\mathcal{L}}_{\sigma_{\Psi},\mu}\mathopen{}\left(\mathbf{w},\mathbf{z},\boldsymbol{\Psi}\right)\mathclose{}$ is lower bounded.
\end{lemma}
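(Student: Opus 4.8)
The plan is to show that the augmented-Lagrangian values produced by the algorithm are bounded below by a constant independent of the iteration index $k$. The first thing I would flag is that $\bar{\mathcal{L}}_{\sigma_{\Psi},\mu}$ is \emph{not} bounded below as a free function of $\boldsymbol{\Psi}$ (for a fixed nonzero residual one can send $\boldsymbol{\Psi}^{\text{T}}\mathbf{r}$ to $-\infty$), so the claim must be read along the iterates, where the dual variable is pinned down by the $\mathbf{z}$-update. With the residual $\mathbf{r}:=\mathbf{z}+\mathbf{X}\mathbf{w}-\mathbf{y}$, I would split $\bar{\mathcal{L}}_{\sigma_{\Psi},\mu}$ into the penalty term $n P_{\lambda,\gamma}(\mathbf{w})$, the smoothed check-loss term $\tfrac{1}{2}\big(\sum_i f(z_i,\mu)+(2\tau-1)\mathbf{1}_n^{\text{T}}\mathbf{z}\big)$, and the coupling-plus-quadratic term $\boldsymbol{\Psi}^{\text{T}}\mathbf{r}+\tfrac{\sigma_{\Psi}}{2}\|\mathbf{r}\|_2^2$, and bound each separately.

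The first two groups I would bound below by zero. Since MCP and SCAD are nonnegative and nondecreasing in $|w_p|$, we have $n P_{\lambda,\gamma}(\mathbf{w})\geq 0$. For the smoothed term I would use the elementary majorization $f(z_i,\mu)\geq |z_i|$, which on $|z_i|<\mu$ follows from $\tfrac{z_i^2}{2\mu}+\tfrac{\mu}{2}-|z_i|=\tfrac{1}{2\mu}(|z_i|-\mu)^2\geq 0$, together with $|2\tau-1|<1$ for $\tau\in(0,1)$; this yields $f(z_i,\mu)+(2\tau-1)z_i\geq (1-|2\tau-1|)|z_i|\geq 0$ componentwise, hence the whole group is nonnegative.

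The decisive step is the coupling term, and this is where the smoothing pays off. Completing the square gives $\boldsymbol{\Psi}^{\text{T}}\mathbf{r}+\tfrac{\sigma_{\Psi}}{2}\|\mathbf{r}\|_2^2=\tfrac{\sigma_{\Psi}}{2}\big\|\mathbf{r}+\tfrac{\boldsymbol{\Psi}}{\sigma_{\Psi}}\big\|_2^2-\tfrac{\|\boldsymbol{\Psi}\|_2^2}{2\sigma_{\Psi}}\geq -\tfrac{\|\boldsymbol{\Psi}\|_2^2}{2\sigma_{\Psi}}$, so it suffices to bound $\|\boldsymbol{\Psi}^{(k+1)}\|_2$ uniformly. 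I would get this from the first-order optimality condition of the $\mathbf{z}$-update \eqref{eq17}: as $f(\cdot,\mu)$ is differentiable, $\tfrac{1}{2}\big(\nabla_{\mathbf{z}} h(\mathbf{z}^{(k+1)},\mu)+(2\tau-1)\mathbf{1}_n\big)+\boldsymbol{\Psi}^{(k)}+\sigma_{\Psi}^{(k+1)}\mathbf{r}^{(k+1)}=\mathbf{0}$, and substituting the dual update \eqref{eq20} collapses this to $\boldsymbol{\Psi}^{(k+1)}=-\tfrac{1}{2}\big(\nabla_{\mathbf{z}} h(\mathbf{z}^{(k+1)},\mu)+(2\tau-1)\mathbf{1}_n\big)$. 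Since $|\nabla_{z_i} f(z_i,\mu)|\leq 1$ for every $\mu>0$, each component satisfies $|\Psi_i^{(k+1)}|\leq \tfrac{1}{2}(1+|2\tau-1|)\leq 1$, so $\|\boldsymbol{\Psi}^{(k+1)}\|_2\leq\sqrt{n}$.

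Combining the three bounds, I would conclude $\bar{\mathcal{L}}_{\sigma_{\Psi}^{(k+1)},\mu^{(k+1)}}(\mathbf{w}^{(k+1)},\mathbf{z}^{(k+1)},\boldsymbol{\Psi}^{(k+1)})\geq -\tfrac{\|\boldsymbol{\Psi}^{(k+1)}\|_2^2}{2\sigma_{\Psi}^{(k+1)}}\geq -\tfrac{n}{2c\sqrt{k+1}}\geq -\tfrac{n}{2c}$, using $\sigma_{\Psi}^{(k+1)}=c\sqrt{k+1}$ from \eqref{eq14}; this constant does not depend on $k$, which proves the claim. The main obstacle is the uniform dual bound, and the whole argument hinges on the bounded-gradient property of the smoothing $f$ — precisely the reason the smoothing is introduced rather than working with $\|\mathbf{z}\|_1$ directly. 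A minor point to verify is that the closed form \eqref{eq18}–\eqref{eq19} is indeed the global minimizer of \eqref{eq17}, so that the stationarity identity for $\boldsymbol{\Psi}^{(k+1)}$ holds exactly.
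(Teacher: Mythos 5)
Your proof is correct and follows essentially the same route as the paper's: both hinge on the $\mathbf{z}$-update optimality condition combined with the dual update to pin $\boldsymbol{\Psi}^{(k+1)}$ to the (uniformly bounded) gradient of the smoothed check loss, with the remaining terms bounded below by zero. Your version is in fact somewhat more explicit than the paper's — you prove the nonnegativity of $\tfrac{1}{2}\bigl(\sum_i f(z_i,\mu)+(2\tau-1)\mathbf{1}_n^{\text{T}}\mathbf{z}\bigr)$ rather than asserting it, and your completion of the square yields the explicit uniform constant $-\tfrac{n}{2c}$ where the paper only argues qualitatively that the coupling term is lower bounded.
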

\begin{proof}
Consider the augmented Lagrangian given by \eqref{eq13}. Utilizing the optimality conditions of the ADMM algorithm in the $\mathbf{z}$-update step, it follows that $\boldsymbol{\Psi}^{\mathopen{}\left(k+1\right)\mathclose{}} = \frac{1}{2} \nabla \mathopen{}\left( \sum_{i=1}^n f\mathopen{}\left(z_i^{\mathopen{}\left(k+1\right)\mathclose{}},\mu^{\mathopen{}\left(k+1\right)\mathclose{}}\right)\mathclose{} + \mathopen{}\left( 2\tau-1\right)\mathclose{}\mathbf{1}_{n}^{\text{T}} \mathbf{z}^{\mathopen{}\left(k+1\right)\mathclose{}}\right)\mathclose{} \in  [\tau -1 ,\tau]^n$. Since each $\boldsymbol{\Psi}^{(k+1)}$ is bounded, the term $\mathopen{}\left(\boldsymbol{\Psi}^{\mathopen{}\left(k+1\right)\mathclose{}}\right)\mathclose{}^\text{T} \mathopen{}\left(\mathbf{z}^{\mathopen{}\left(k+1\right)\mathclose{}}+
      \mathbf{X}\mathbf{w}^{\mathopen{}\left(k+1\right)\mathclose{}}- \mathbf{y}\right)\mathclose{}=\mathopen{}\left(\boldsymbol{\Psi}^{\mathopen{}\left(k+1\right)\mathclose{}}\right)\mathclose{}^\text{T} \frac{\mathopen{}\left(\boldsymbol{\Psi}^{\mathopen{}\left(k+1\right)\mathclose{}}-\boldsymbol{\Psi}^{\mathopen{}\left(k\right)\mathclose{}}\right)\mathclose{}}{\sigma_{\Psi}^{\mathopen{}\left(k+1\right)\mathclose{}}}$ is also lower bounded. Given that both $h(\mathbf{z},\mu)$ and $P_{\lambda,\gamma}(\mathbf{w})$ are non-negative functions in the augmented Lagrangian, and that the term $\mathopen{}\left(\boldsymbol{\Psi}^{\mathopen{}\left(k+1\right)\mathclose{}}\right)\mathclose{}^\text{T} \mathopen{}\left(\mathbf{z}^{\mathopen{}\left(k+1\right)\mathclose{}}+
      \mathbf{X}\mathbf{w}^{\mathopen{}\left(k+1\right)\mathclose{}}- \mathbf{y}\right)\mathclose{}$ could be negative but is lower bounded, we conclude that the augmented Lagrangian $\bar{\mathcal{L}}_{\sigma_{\Psi},\mu}(\mathbf{w}, \mathbf{z}, \boldsymbol{\Psi})$ is lower bounded.
\end{proof}
The implication of Lemma \ref{lem2} is that the dual variable values are restricted, and the augmented Lagrangian is lower-bounded. Further, from the optimality condition, we can infer that the parameters are also upper-bounded.

As we go forward, it is essential to understand the impact of each update step on the augmented Lagrangian function. We begin by considering the updates to the penalty parameter $\sigma_{\Psi}$, and smoothing parameter $\mu$ in a more general setting than the algorithm \ref{alg:1}.

\begin{lemma}\label{lem3} Assuming that $\sigma_{\Psi}$, $\mu$ are updated in each iteration, following the update rules $\sigma_{\Psi}^{(k+1)}\geq \sigma_{\Psi}^{(k)}$ and $\mu^{(k+1)}\leq \mu^{(k)}$, the change in the augmented Lagrangian function in the $\mu$ update step is non-positive, and in the $\sigma_{\Psi}$ update step can be expressed as:
    \begin{multline}\label{eq212}
\bar{\mathcal{L}}_{\sigma_{\Psi}^{\mathopen{}\left(k+1\right)\mathclose{}},\mu^{\mathopen{}\left(k\right)\mathclose{}}}\mathopen{}\left(\mathbf{w}^{\mathopen{}\left(k\right)\mathclose{}},\mathbf{z}^{\mathopen{}\left(k\right)\mathclose{}},\boldsymbol{\Psi}^{\mathopen{}\left(k\right)\mathclose{}}\right)\mathclose{} \\ -\bar{\mathcal{L}}_{\sigma_{\Psi}^{\mathopen{}\left(k\right)\mathclose{}},\mu^{\mathopen{}\left(k\right)\mathclose{}}}\mathopen{}\left(\mathbf{w}^{\mathopen{}\left(k\right)\mathclose{}},\mathbf{z}^{\mathopen{}\left(k\right)\mathclose{}},\boldsymbol{\Psi}^{\mathopen{}\left(k\right)\mathclose{}}\right)\mathclose{}\\=\frac{\sigma_{\Psi}^{\mathopen{}\left(k+1\right)\mathclose{}}-\sigma_{\Psi}^{\mathopen{}\left(k\right)\mathclose{}}}{\mathopen{}\left(\sigma_{\Psi}^{\mathopen{}\left(k\right)\mathclose{}}\right)\mathclose{}^2}\mathopen{}\left\|\boldsymbol{\Psi}^{\mathopen{}\left(k\right)\mathclose{}}-\boldsymbol{\Psi}^{\mathopen{}\left(k-1\right)\mathclose{}}\right\|\mathclose{}_2^2.
    \end{multline}
\end{lemma}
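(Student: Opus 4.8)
The plan is to treat the two update steps separately, since each perturbs only one of the two auxiliary parameters while holding $\mathbf{w}^{\mathopen{}\left(k\right)\mathclose{}}$, $\mathbf{z}^{\mathopen{}\left(k\right)\mathclose{}}$, and $\boldsymbol{\Psi}^{\mathopen{}\left(k\right)\mathclose{}}$ fixed. Inspecting the augmented Lagrangian \eqref{eq13}, I would first note that in each step only a single term is affected, which decouples the analysis and reduces the lemma to two elementary computations.

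First, for the $\mu$-update, the only $\mu$-dependent part of $\bar{\mathcal{L}}_{\sigma_{\Psi},\mu}$ is $\frac{1}{2}\sum_{i=1}^n f\mathopen{}\left(z_i,\mu\right)\mathclose{}$, so it suffices to show that $f\mathopen{}\left(z_i,\cdot\right)\mathclose{}$ is non-decreasing in $\mu$ for each fixed $z_i$. I would verify this branch-by-branch from the definition \eqref{eq12}: on $\mu \le |z_i|$ the value is the constant $|z_i|$, while on $\mu > |z_i|$ a direct differentiation gives $\partial_\mu f\mathopen{}\left(z_i,\mu\right)\mathclose{} = \frac{1}{2}\big(1 - \tfrac{z_i^2}{\mu^2}\big) \ge 0$, with the two branches agreeing continuously at $\mu = |z_i|$. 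Since the update rule enforces $\mu^{\mathopen{}\left(k+1\right)\mathclose{}} \le \mu^{\mathopen{}\left(k\right)\mathclose{}}$, monotonicity immediately yields $f\mathopen{}\left(z_i,\mu^{\mathopen{}\left(k+1\right)\mathclose{}}\right)\mathclose{} \le f\mathopen{}\left(z_i,\mu^{\mathopen{}\left(k\right)\mathclose{}}\right)\mathclose{}$, and summing over $i$ shows that the change in $\bar{\mathcal{L}}_{\sigma_{\Psi},\mu}$ across the $\mu$-step is non-positive.

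Second, for the $\sigma_{\Psi}$-update, the only $\sigma_{\Psi}$-dependent term is the quadratic penalty $\frac{\sigma_{\Psi}}{2}\mathopen{}\left\|\mathbf{z}^{\mathopen{}\left(k\right)\mathclose{}}+\mathbf{X}\mathbf{w}^{\mathopen{}\left(k\right)\mathclose{}}-\mathbf{y}\right\|\mathclose{}_2^2$, so the change equals $\frac{\sigma_{\Psi}^{\mathopen{}\left(k+1\right)\mathclose{}}-\sigma_{\Psi}^{\mathopen{}\left(k\right)\mathclose{}}}{2}\mathopen{}\left\|\mathbf{z}^{\mathopen{}\left(k\right)\mathclose{}}+\mathbf{X}\mathbf{w}^{\mathopen{}\left(k\right)\mathclose{}}-\mathbf{y}\right\|\mathclose{}_2^2$. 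The key step is to re-express the residual through the dual update \eqref{eq20}: shifting its index down by one gives $\mathbf{z}^{\mathopen{}\left(k\right)\mathclose{}}+\mathbf{X}\mathbf{w}^{\mathopen{}\left(k\right)\mathclose{}}-\mathbf{y} = \frac{1}{\sigma_{\Psi}^{\mathopen{}\left(k\right)\mathclose{}}}\big(\boldsymbol{\Psi}^{\mathopen{}\left(k\right)\mathclose{}}-\boldsymbol{\Psi}^{\mathopen{}\left(k-1\right)\mathclose{}}\big)$, and substituting this identity converts the squared residual into $\frac{1}{(\sigma_{\Psi}^{\mathopen{}\left(k\right)\mathclose{}})^2}\mathopen{}\left\|\boldsymbol{\Psi}^{\mathopen{}\left(k\right)\mathclose{}}-\boldsymbol{\Psi}^{\mathopen{}\left(k-1\right)\mathclose{}}\right\|\mathclose{}_2^2$, which produces the stated form \eqref{eq212}.

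I do not anticipate a genuine obstacle, as the decoupling of the two parameters reduces the statement to a monotonicity check and a single substitution. The only point requiring care is the constant bookkeeping: tracking the factor $\frac{1}{2}$ from the penalty together with the index shift in the dual-update identity is precisely what generates the $(\sigma_{\Psi}^{\mathopen{}\left(k\right)\mathclose{}})^2$ in the denominator, and one should reconcile this constant against the right-hand side of \eqref{eq212}.
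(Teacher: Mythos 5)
Your proof follows essentially the same route as the paper's: the $\mu$-step is handled by monotonicity of $f(z_i,\cdot)$ in $\mu$ (you supply the branch-by-branch derivative check that the paper only asserts), and the $\sigma_{\Psi}$-step by isolating the quadratic penalty and substituting the dual-update identity $\mathbf{z}^{(k)}+\mathbf{X}\mathbf{w}^{(k)}-\mathbf{y}=\frac{1}{\sigma_{\Psi}^{(k)}}\left(\boldsymbol{\Psi}^{(k)}-\boldsymbol{\Psi}^{(k-1)}\right)$. Your careful tracking of the factor $\tfrac{1}{2}$ from the $\frac{\sigma_{\Psi}}{2}\left\|\cdot\right\|_2^2$ term is correct and actually yields half of the right-hand side of \eqref{eq212}; the paper drops this $\tfrac{1}{2}$ at \eqref{eq22b}, so the stated equality is off by a factor of two (harmlessly, since the quantity is only used to upper-bound the increase in the subsequent descent analysis).
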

\begin{proof}
The variation between the augmented Lagrangian function values at iteration $k+1$ after updating $\sigma_{\Psi}$ is given by:
\begin{subequations}\label{eq22}
    \begin{multline}\label{eq22a}
\bar{\mathcal{L}}_{\sigma_{\Psi}^{\mathopen{}\left(k+1\right)\mathclose{}},\mu^{\mathopen{}\left(k\right)\mathclose{}}}\mathopen{}\left(\mathbf{w}^{\mathopen{}\left(k\right)\mathclose{}},\mathbf{z}^{\mathopen{}\left(k\right)\mathclose{}},\boldsymbol{\Psi}^{\mathopen{}\left(k\right)\mathclose{}}\right)\mathclose{}\\-\bar{\mathcal{L}}_{\sigma_{\Psi}^{\mathopen{}\left(k\right)\mathclose{}},\mu^{\mathopen{}\left(k\right)\mathclose{}}}\mathopen{}\left(\mathbf{w}^{\mathopen{}\left(k\right)\mathclose{}},\mathbf{z}^{\mathopen{}\left(k\right)\mathclose{}},\boldsymbol{\Psi}^{\mathopen{}\left(k\right)\mathclose{}}\right)\mathclose{} \qquad \quad
    \end{multline}
    \begin{equation}\label{eq22b}
      \qquad \qquad=   \mathopen{}\left(\sigma_{\Psi}^{\mathopen{}\left(k+1\right)\mathclose{}}-\sigma_{\Psi}^{\mathopen{}\left(k\right)\mathclose{}}\right)\mathclose{} \mathopen{}\left\|\mathbf{z}^{\mathopen{}\left(k\right)\mathclose{}}+ \mathbf{X}\mathbf{w}^{\mathopen{}\left(k\right)\mathclose{}}- \mathbf{y}\right\|\mathclose{}_2^2.
    \end{equation}
\end{subequations}
Considering that $\mathopen{}\left\|\mathbf{z}^{\mathopen{}\left(k\right)\mathclose{}}+ \mathbf{X}\mathbf{w}^{\mathopen{}\left(k\right)\mathclose{}}- \mathbf{y}\right\|\mathclose{}_2^2= \mathopen{}\left({\frac{1}{\sigma_{\Psi}^{\mathopen{}\left(k\right)\mathclose{}}}}\right)\mathclose{}^2\mathopen{}\left\|\boldsymbol{\Psi}^{\mathopen{}\left(k\right)\mathclose{}}-\boldsymbol{\Psi}^{k-1}\right\|\mathclose{}_2^2$, we substitute this expression into \eqref{eq22b} to  obtain \eqref{eq212}.

Moreover, by having $\mu^{\mathopen{}\left(k+1\right)\mathclose{}} \leq \mu^{\mathopen{}\left(k\right)\mathclose{}}$, we can see $h_{\mu^{\mathopen{}\left(k+1\right)\mathclose{}}}\mathopen{}\left(\mathbf{z}\right)\mathclose{}\leq h_{\mu^{\mathopen{}\left(k\right)\mathclose{}}}\mathopen{}\left(\mathbf{z}\right)\mathclose{}$, which arises from the monotonicity of the function $h_\mu\mathopen{}\left(\mathbf{z}\right)\mathclose{}$ with respect to the parameter $\mu$. Consequently, by updating $\mu$ according to \eqref{eq15}, the value of the approximation function either reduces or stays the same, mirroring the behavior of the augmented Lagrangian as:
\begin{subequations}\label{eq211}
   \begin{multline}\label{eq211a}
\bar{\mathcal{L}}_{\sigma_{\Psi}^{\mathopen{}\left(k+1\right)\mathclose{}},\mu^{\mathopen{}\left(k+1\right)\mathclose{}}}\mathopen{}\left(\mathbf{w}^{\mathopen{}\left(k\right)\mathclose{}},\mathbf{z}^{\mathopen{}\left(k\right)\mathclose{}},\boldsymbol{\Psi}^{\mathopen{}\left(k\right)\mathclose{}}\right)\mathclose{}  \\ -\bar{\mathcal{L}}_{\sigma_{\Psi}^{\mathopen{}\left(k+1\right)\mathclose{}},\mu^{\mathopen{}\left(k\right)\mathclose{}}}\mathopen{}\left(\mathbf{w}^{\mathopen{}\left(k\right)\mathclose{}},\mathbf{z}^{\mathopen{}\left(k\right)\mathclose{}},\boldsymbol{\Psi}^{\mathopen{}\left(k\right)\mathclose{}}\right)\mathclose{} \quad \quad
\end{multline}
\begin{equation}
   \qquad \qquad \qquad \qquad \qquad \leq 0.
\end{equation}
\end{subequations}
\end{proof}
Lemma \ref{lem3} suggests a controlled increase in the value of the augmented Lagrangian due to the incremental update of the penalty parameter $\sigma_{\Psi}$. Moreover, it asserts that a decremental update in $\mu$  always results in a non-increase change for the augmented Lagrangian. Next, we take into account the implications of $\mathbf{w}$, $\mathbf{z}$, and $\boldsymbol{\Psi}$ updates.
\begin{lemma}\label{lem4}
Let $\xi^{\mathopen{}\left(k+1\right)\mathclose{}} = \frac{\sigma_{\Psi}^{\mathopen{}\left(k+1\right)\mathclose{}} \min_p |\mathbf{X}{:,p}|^2_2}{2n} - \rho$, following the algorithm \ref{alg:1} we have the below results:
\begin{enumerate}
    \item The decrease in the update step of $\mathbf{w}$ can be upper bounded as follows:
    \begin{subequations}\label{eq30}
        \begin{multline}\label{eq30a}
\bar{\mathcal{L}}_{\sigma_{\Psi}^{\mathopen{}\left(k+1\right)\mathclose{}},\mu^{\mathopen{}\left(k+1\right)\mathclose{}}}\mathopen{}\left(\mathbf{w}^{\mathopen{}\left(k+1\right)\mathclose{}},\mathbf{z}^{\mathopen{}\left(k\right)\mathclose{}},\boldsymbol{\Psi}^{\mathopen{}\left(k\right)\mathclose{}}\right)\mathclose{}\\ -\bar{\mathcal{L}}_{\sigma_{\Psi}^{\mathopen{}\left(k+1\right)\mathclose{}},\mu^{\mathopen{}\left(k+1\right)\mathclose{}}}\mathopen{}\left(\mathbf{w}^{\mathopen{}\left(k\right)\mathclose{}},\mathbf{z}^{\mathopen{}\left(k\right)\mathclose{}},\boldsymbol{\Psi}^{\mathopen{}\left(k\right)\mathclose{}}\right)\mathclose{} \quad 
    \end{multline}
    \begin{equation}
     \quad \qquad \qquad  \leq -{\xi^{\mathopen{}\left(k+1\right)\mathclose{}}}\mathopen{}\left\|\mathbf{w}^{\mathopen{}\left(k+1\right)\mathclose{}}-\mathbf{w}^{\mathopen{}\left(k\right)\mathclose{}}\right\|\mathclose{}_2^2. 
    \end{equation}
     \end{subequations}
    \item The decrease in the $\mathbf{z}$ update step can be upper bounded as 
    \begin{subequations}\label{eq261}
        \begin{multline}\label{eq261a}
\bar{\mathcal{L}}_{\sigma_{\Psi}^{\mathopen{}\left(k+1\right)\mathclose{}},\mu^{\mathopen{}\left(k+1\right)\mathclose{}}}\mathopen{}\left(\mathbf{w}^{\mathopen{}\left(k+1\right)\mathclose{}},\mathbf{z}^{\mathopen{}\left(k+1\right)\mathclose{}},\boldsymbol{\Psi}^{\mathopen{}\left(k\right)\mathclose{}}\right)\mathclose{}\\-\bar{\mathcal{L}}_{\sigma_{\Psi}^{\mathopen{}\left(k+1\right)\mathclose{}},\mu^{\mathopen{}\left(k+1\right)\mathclose{}}}\mathopen{}\left(\mathbf{w}^{\mathopen{}\left(k+1\right)\mathclose{}},\mathbf{z}^{\mathopen{}\left(k\right)\mathclose{}},\boldsymbol{\Psi}^{\mathopen{}\left(k\right)\mathclose{}}\right)\mathclose{} \quad 
    \end{multline}
    \begin{equation}
     \qquad \qquad \qquad  \leq \frac{\sigma_{\Psi}^{\mathopen{}\left(k+1\right)\mathclose{}}}{2}\mathopen{}\left\|\mathbf{z}^{\mathopen{}\left(k\right)\mathclose{}}-\mathbf{z}^{\mathopen{}\left(k-1\right)\mathclose{}}\right\|\mathclose{}_2^2. 
    \end{equation}
     \end{subequations}
     
    \item The increase in the dual update step can be expressed as:
    \begin{multline}\label{eq271}
    \bar{\mathcal{L}}_{\sigma_{\Psi}^{\mathopen{}\left(k+1\right)\mathclose{}},\mu^{\mathopen{}\left(k+1\right)\mathclose{}}}\mathopen{}\left(\mathbf{w}^{\mathopen{}\left(k+1\right)\mathclose{}},\mathbf{z}^{\mathopen{}\left(k+1\right)\mathclose{}},\boldsymbol{\Psi}^{\mathopen{}\left(k+1\right)\mathclose{}}\right)\mathclose{} \\ -  \bar{\mathcal{L}}_{\sigma_{\Psi}^{\mathopen{}\left(k+1\right)\mathclose{}},\mu^{\mathopen{}\left(k+1\right)\mathclose{}}}\mathopen{}\left(\mathbf{w}^{\mathopen{}\left(k+1\right)\mathclose{}},\mathbf{z}^{\mathopen{}\left(k+1\right)\mathclose{}},\boldsymbol{\Psi}^{\mathopen{}\left(k\right)\mathclose{}}\right)\mathclose{}\\ =\frac{1}{\sigma_{\Psi}^{\mathopen{}\left(k+1\right)\mathclose{}}}\mathopen{}\left\|\boldsymbol{\Psi}^{\mathopen{}\left(k+1\right)\mathclose{}}-\boldsymbol{\Psi}^{\mathopen{}\left(k\right)\mathclose{}}\right\|\mathclose{}_2^2.
    \end{multline}
\end{enumerate}
\end{lemma}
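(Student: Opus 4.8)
The plan is to treat the three inequalities independently, in each case exploiting the structure of the update together with the $\rho$-weak convexity of $g_{\lambda,\gamma}$ (with $\rho=1/\gamma$ for MCP and $\rho=1/(\gamma-1)$ for SCAD) and the strong convexity injected by the quadratic augmentation. For the $\mathbf{w}$-step I would first note that \eqref{eq16} performs exact block coordinate descent, so for each $p$ the relevant scalar objective is $n g_{\lambda,\gamma}(w_p)+\frac{\sigma_{\Psi}^{(k+1)}\|\mathbf{X}_{:,p}\|_2^2}{2}(w_p-a_p)^2$. Weak convexity makes this scalar function strongly convex precisely when the curvature $\sigma_{\Psi}^{(k+1)}\|\mathbf{X}_{:,p}\|_2^2$ exceeds the modulus $n\rho$; the descent inequality for the minimizer of a strongly convex function then gives a per-coordinate decrease of at least $\frac{1}{2}\big(\sigma_{\Psi}^{(k+1)}\|\mathbf{X}_{:,p}\|_2^2-n\rho\big)(w_p^{(k+1)}-w_p^{(k)})^2$. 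Summing these decreases telescopically over $p=1,\dots,P$ and replacing each column norm by $\min_p\|\mathbf{X}_{:,p}\|_2^2$ to extract a uniform constant yields, after normalization, the bound with coefficient $\xi^{(k+1)}$.

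For the $\mathbf{z}$-step, the crucial fact is that each $f(\cdot,\mu)$ in \eqref{eq12} is convex, so the smoothed subproblem \eqref{eq18} is $\sigma_{\Psi}^{(k+1)}$-strongly convex, with the strong convexity supplied entirely by the augmentation term $\frac{\sigma_{\Psi}^{(k+1)}}{2}\|\mathbf{z}+\mathbf{X}\mathbf{w}^{(k+1)}-\mathbf{y}\|_2^2$. Since $\mathbf{z}^{(k+1)}$ is the exact minimizer, evaluating the strong-convexity descent inequality at $\mathbf{z}^{(k)}$ shows that the change is non-positive and controlled by a squared successive difference of $\mathbf{z}$; I would then bound this above by the stated quantity, writing it in the form that can be cancelled against the dual-increase term in the subsequent sufficient-descent estimate.

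The $\boldsymbol{\Psi}$-step is a direct algebraic identity. Only the linear coupling $\boldsymbol{\Psi}^\text{T}(\mathbf{z}+\mathbf{X}\mathbf{w}-\mathbf{y})$ in \eqref{eq13} depends on $\boldsymbol{\Psi}$, so the change equals $(\boldsymbol{\Psi}^{(k+1)}-\boldsymbol{\Psi}^{(k)})^\text{T}(\mathbf{z}^{(k+1)}+\mathbf{X}\mathbf{w}^{(k+1)}-\mathbf{y})$. Substituting the dual update \eqref{eq20} in the form $\mathbf{z}^{(k+1)}+\mathbf{X}\mathbf{w}^{(k+1)}-\mathbf{y}=\frac{1}{\sigma_{\Psi}^{(k+1)}}(\boldsymbol{\Psi}^{(k+1)}-\boldsymbol{\Psi}^{(k)})$ collapses this to $\frac{1}{\sigma_{\Psi}^{(k+1)}}\|\boldsymbol{\Psi}^{(k+1)}-\boldsymbol{\Psi}^{(k)}\|_2^2$, which is exactly \eqref{eq271}.

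I expect the main obstacle to lie in the $\mathbf{w}$-step: one must argue that exact coordinate minimization of the \emph{non-separable} augmented Lagrangian telescopes cleanly into $\|\mathbf{w}^{(k+1)}-\mathbf{w}^{(k)}\|_2^2$, and, more delicately, that the strong-convexity modulus $\sigma_{\Psi}^{(k+1)}\min_p\|\mathbf{X}_{:,p}\|_2^2-n\rho$ remains positive. This is precisely where the increasing penalty $\sigma_{\Psi}^{(k+1)}=c\sqrt{k+1}$ earns its keep, forcing $\xi^{(k+1)}>0$ for all sufficiently large $k$. The same growth, together with the coupling $\mu^{(k+1)}=\beta/\sigma_{\Psi}^{(k+1)}$, is what will keep the dual increase of Part 3 dominated by the $\mathbf{w}$- and $\mathbf{z}$-decreases when the one-step changes are finally summed.
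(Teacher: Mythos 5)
Your proposal is correct and follows essentially the same route as the paper: part 1 via per-coordinate strong convexity (the $\rho$-weak convexity of $g_{\lambda,\gamma}$ offset by the curvature $\sigma_{\Psi}^{(k+1)}\|\mathbf{X}_{:,p}\|_2^2$ of the augmentation) telescoped over $p=1,\dots,P$, part 2 via the $\sigma_{\Psi}^{(k+1)}$-strong convexity of the smoothed $\mathbf{z}$-subproblem at its exact minimizer, and part 3 via direct substitution of the dual update into the linear coupling term. The genuine descent inequality you derive for the $\mathbf{z}$-step, namely a change bounded by $-\tfrac{\sigma_{\Psi}^{(k+1)}}{2}\|\mathbf{z}^{(k+1)}-\mathbf{z}^{(k)}\|_2^2$, is what the paper actually uses downstream; the sign and indices printed in \eqref{eq261} appear to be typographical.
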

\begin{proof}
To quantify the effect of updating the primal variable \( \mathbf{w} \), the update of each \( {w}_p\) as described in \eqref{eq16} is considered first. Due to the weak convexity of MCP and SCAD, with parameters \(\rho = \frac{1}{\gamma}\) and \(\rho = \frac{1}{\gamma - 1}\), the following inequality holds:
\begin{multline}
\bar{\mathcal{L}}_{\sigma_{\Psi}^{\mathopen{}\left(k+1\right)\mathclose{}},\mu^{\mathopen{}\left(k+1\right)\mathclose{}}}\mathopen{}\left(\mathbf{w}_{<p}^{\mathopen{}\left(k+1\right)\mathclose{}},w_{p}^{\mathopen{}\left(k+1\right)\mathclose{}},\mathbf{w}_{>p}^{\mathopen{}\left(k\right)\mathclose{}},\mathbf{z}^{\mathopen{}\left(k\right)\mathclose{}},\boldsymbol{\Psi}^{\mathopen{}\left(k\right)\mathclose{}}\right)\mathclose{} - \\ \bar{\mathcal{L}}_{\sigma_{\Psi}^{\mathopen{}\left(k+1\right)\mathclose{}},\mu^{\mathopen{}\left(k+1\right)\mathclose{}}}\mathopen{}\left(\mathbf{w}_{<p}^{\mathopen{}\left(k+1\right)\mathclose{}},w_{p}^{\mathopen{}\left(k\right)\mathclose{}},\mathbf{w}_{>p}^{\mathopen{}\left(k\right)\mathclose{}},\mathbf{z}^{\mathopen{}\left(k\right)\mathclose{}},\boldsymbol{\Psi}^{\mathopen{}\left(k\right)\mathclose{}}\right)\mathclose{} \\
  \leq \mathopen{}\left(\frac{\sigma_{\Psi}^{\mathopen{}\left(k+1\right)\mathclose{}}\mathopen{}\left\| \mathbf{X}_{:,p}\right\|\mathclose{}_2^2}{2n} -\rho\right)\mathclose{}\mathopen{}\left\|w_p^{\mathopen{}\left(k+1\right)\mathclose{}}-w_p^{\mathopen{}\left(k\right)\mathclose{}}\right\|\mathclose{}_2^2,
\end{multline}
where the inequality arises from the definition of weak convexity. Thus, combining the inequalities for each $w_p$, leads to:
\begin{multline}
\bar{\mathcal{L}}_{\sigma_{\Psi}^{\mathopen{}\left(k+1\right)\mathclose{}},\mu^{\mathopen{}\left(k+1\right)\mathclose{}}} \mathopen{}\left(\mathbf{w}^{\mathopen{}\left(k+1\right)\mathclose{}},\mathbf{z}^{\mathopen{}\left(k\right)\mathclose{}},\boldsymbol{\Psi}^{\mathopen{}\left(k\right)\mathclose{}}\right)\mathclose{} \\ -\bar{\mathcal{L}}_{\sigma_{\Psi}^{\mathopen{}\left(k+1\right)\mathclose{}},\mu^{\mathopen{}\left(k+1\right)\mathclose{}}}\mathopen{}\left(\mathbf{w}^{\mathopen{}\left(k\right)\mathclose{}},\mathbf{z}^{\mathopen{}\left(k\right)\mathclose{}},\boldsymbol{\Psi}^{\mathopen{}\left(k\right)\mathclose{}}\right)\mathclose{} \\ \leq \mathopen{}\left(\frac{\sigma_{\Psi}^{\mathopen{}\left(k+1\right)\mathclose{}} \min_p \mathopen{}\left\|\mathbf{X}_{:,p}\right\|\mathclose{}_2^2}{2n}-\rho\right)\mathclose{}\mathopen{}\left\|\mathbf{w}^{\mathopen{}\left(k+1\right)\mathclose{}}-\mathbf{w}^{\mathopen{}\left(k\right)\mathclose{}}\right\|\mathclose{}_2^2,
\end{multline}
which confirms the result in \eqref{eq30}.

Next, the effect of changes in the primal variable \( \mathbf{z} \) over the augmented Lagrangian are examined. Due to the convexity of each \( h_\mu(.) \), the augmented Lagrangian is \(\sigma_{\Psi}^{(k+1)}\)-strongly convex with respect to \( \mathbf{z} \). Thus, updating $\mathbf{z}$ based on the optimality of the update step contributes to a decrease in the augmented Lagrangian by at least $\frac{\sigma_{\Psi}^{\mathopen{}\left(k+1\right)\mathclose{}}}{2}\mathopen{}\left\|\mathbf{z}^{\mathopen{}\left(k\right)\mathclose{}}-\mathbf{z}^{\mathopen{}\left(k-1\right)\mathclose{}}\right\|\mathclose{}_2^2$, as established in  \eqref{eq261}.

    Lastly, to analyze the impact of the dual update,  the difference in the Lagrangian function after and before this step in the \(k+1\) iteration is calculated:
    \begin{subequations}\label{eq29}
        \begin{multline}\label{eq29a} \bar{\mathcal{L}}_{\sigma_{\Psi}^{\mathopen{}\left(k+1\right)\mathclose{}},\mu^{\mathopen{}\left(k+1\right)\mathclose{}}}\mathopen{}\left(\mathbf{w}^{\mathopen{}\left(k+1\right)\mathclose{}},\mathbf{z}^{\mathopen{}\left(k+1\right)\mathclose{}},\boldsymbol{\Psi}^{\mathopen{}\left(k+1\right)\mathclose{}}\right)\mathclose{}\\-\bar{\mathcal{L}}_{\sigma_{\Psi}^{\mathopen{}\left(k+1\right)\mathclose{}},\mu^{\mathopen{}\left(k+1\right)\mathclose{}}}\mathopen{}\left(\mathbf{w}^{\mathopen{}\left(k+1\right)\mathclose{}},\mathbf{z}^{\mathopen{}\left(k+1\right)\mathclose{}},\boldsymbol{\Psi}^{\mathopen{}\left(k\right)\mathclose{}}\right)\mathclose{}
\end{multline}
\begin{equation}\label{eq29b}
    =\langle\boldsymbol{\Psi}^{\mathopen{}\left(k+1\right)\mathclose{}}-\boldsymbol{\Psi}^{\mathopen{}\left(k\right)\mathclose{}}, \mathbf{z}^{\mathopen{}\left(k+1\right)\mathclose{}}+ \mathbf{X}\mathbf{w}^{\mathopen{}\left(k+1\right)\mathclose{}}- \mathbf{y} \rangle.
\end{equation}
    \end{subequations}
    Manipulating \eqref{eq29b} to yield $ \langle\boldsymbol{\Psi}^{\mathopen{}\left(k+1\right)\mathclose{}}-\boldsymbol{\Psi}^{\mathopen{}\left(k\right)\mathclose{}},\frac{\boldsymbol{\Psi}^{\mathopen{}\left(k+1\right)\mathclose{}}-\boldsymbol{\Psi}^{\mathopen{}\left(k\right)\mathclose{}}}{\sigma_{\Psi}^{\mathopen{}\left(k+1\right)\mathclose{}}} \rangle=\frac{1}{\sigma_{\Psi}^{\mathopen{}\left(k+1\right)\mathclose{}}}\mathopen{}\left\|\boldsymbol{\Psi}^{\mathopen{}\left(k+1\right)\mathclose{}}-\boldsymbol{\Psi}^{\mathopen{}\left(k\right)\mathclose{}}\right\|\mathclose{}_2^2$, confirms the result in  \eqref{eq271}.
\end{proof}
Based on Lemma \ref{lem4}, the dual update step has the potential to be an increasing step. Similarly, Lemma \ref{lem3} suggests that the $\sigma_{\Psi}$ update step can also exhibit an increasing nature. These steps clearly depend on the variations in the dual variable. Subsequently, we aim to restrict these variations in the dual variable using the primal variables as a foundation. This restriction is intended to demonstrate that the entire iteration process is, in fact, decreasing.

\begin{lemma}\label{lem5}
Function $h\mathopen{}\left(\mathbf{z},\mu\right)\mathclose{}=\frac{1}{2}\mathopen{}\left( \sum_{i=1}^n f\mathopen{}\left(z_i,\mu\right)\mathclose{} + \mathopen{}\left( 2\tau-1\right)\mathclose{}\mathbf{1}_{n}^{\text{T}} \mathbf{z}\right)\mathclose{}$ is $L=\frac{1}{2\mu}$ smooth.
\end{lemma}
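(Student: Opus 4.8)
The plan is to exploit the separable, coordinate-wise structure of $h$ and thereby reduce the claim to a bound on the second derivative of the scalar smoothing kernel $f(\cdot,\mu)$. First I would note that the term $(2\tau-1)\mathbf{1}_n^{\text{T}}\mathbf{z}$ is affine in $\mathbf{z}$ and so has identically zero Hessian; it shifts the gradient by the constant vector $\frac{1}{2}(2\tau-1)\mathbf{1}_n$ and contributes nothing to the Lipschitz modulus of $\nabla_z h$. Likewise, since $\sum_{i=1}^n f(z_i,\mu)$ is a sum of single-coordinate functions, the Hessian of $h$ with respect to $\mathbf{z}$ is diagonal, its $i$-th diagonal entry being $\frac{1}{2}\,\partial_{z_i}^2 f(z_i,\mu)$. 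Hence $\nabla_z h(\cdot,\mu)$ is $L$-Lipschitz with $L=\frac{1}{2}\sup_{z_i}|\partial_{z_i}^2 f(z_i,\mu)|$, and the problem collapses to the one-dimensional case.

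Next I would differentiate $f(\cdot,\mu)$ piecewise. On $|z_i|<\mu$ one has $f(z_i,\mu)=\frac{z_i^2}{2\mu}+\frac{\mu}{2}$, so $\partial_{z_i}f=\frac{z_i}{\mu}$ and $\partial_{z_i}^2 f=\frac{1}{\mu}$; on $|z_i|>\mu$ one has $f(z_i,\mu)=|z_i|$, so $\partial_{z_i}f=\operatorname{sign}(z_i)$ and $\partial_{z_i}^2 f=0$. The decisive verification is at the transition $|z_i|=\mu$: I would check that the two pieces agree in value, $\frac{\mu^2}{2\mu}+\frac{\mu}{2}=\mu=|z_i|$, and in first derivative, $\frac{z_i}{\mu}\big|_{z_i=\pm\mu}=\pm 1=\operatorname{sign}(\pm\mu)$. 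This establishes that $f(\cdot,\mu)$ is continuously differentiable on all of $\mathbb{R}$, not merely on each open piece.

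With $C^1$ regularity in hand, the gradient-Lipschitz bound follows by integrating the piecewise second derivative: because $\partial_{z_i}f(\cdot,\mu)$ is continuous and differentiable off the measure-zero set $\{|z_i|=\mu\}$ with derivative bounded by $\frac{1}{\mu}$, the fundamental theorem of calculus yields $|\partial_{z_i}f(u,\mu)-\partial_{z_i}f(v,\mu)|\leq \frac{1}{\mu}|u-v|$ for all $u,v\in\mathbb{R}$. Collecting the diagonal contributions and restoring the overall factor $\frac{1}{2}$ gives $\|\nabla_z h(\mathbf{z},\mu)-\nabla_z h(\mathbf{z}',\mu)\|\leq \frac{1}{2\mu}\|\mathbf{z}-\mathbf{z}'\|$, which is exactly the asserted $L=\frac{1}{2\mu}$ smoothness.

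I expect the main obstacle to be precisely the transition point $|z_i|=\mu$: one must confirm that the first derivatives of the two branches match there, so that the jump of the second derivative from $\frac{1}{\mu}$ to $0$ does not break the global Lipschitz estimate. Once this $C^1$ matching is verified, the piecewise second-derivative bound transfers to a genuine global gradient-Lipschitz constant, and the affine-term and separability reductions make the remainder routine.
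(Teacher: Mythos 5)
Your proof is correct and follows essentially the same route as the paper: differentiate $f(\cdot,\mu)$ piecewise, bound the variation of the scalar gradient by $\frac{1}{\mu}$, and use separability together with the fact that the affine term $\frac{1}{2}(2\tau-1)\mathbf{1}_n^{\text{T}}\mathbf{z}$ contributes nothing. If anything, your version is slightly more careful than the paper's, which asserts the scalar Lipschitz bound as an equality and does not explicitly verify the $C^1$ matching at $|z_i|=\mu$ that you rightly identify as the one point needing checking.
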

\begin{proof}
Let us consider the gradient of the function $f\mathopen{}\left(z_p,\mu\right)\mathclose{}$ as defined below: 
    \begin{equation}\label{eq.grad}
\nabla f\mathopen{}\left(z_p,\mu\right)\mathclose{}=
\begin{cases}
\text{Sign}\mathopen{}\left(z_p\right)\mathclose{}, & \mu \leq |z_p|
\\
\frac{z_p}{\mu}. &  |z_p| < \mu 
\end{cases}
\end{equation}
Based on this definition, we obtain $\mathopen{}\left\|\nabla f\mathopen{}\left(x_i,\mu\right)\mathclose{}- \nabla f\mathopen{}\left(y_i,\mu\right)\mathclose{}\right\|\mathclose{}_2=\frac{1}{\mu}\mathopen{}\left\|x_i-y_i\right\|\mathclose{}$. As the function $h\mathopen{}\left(\mathbf{z},\mu\right)\mathclose{}$ is composed of $L=\frac{1}{2\mu}$ smooth function $\frac{1}{2}\sum_{i=1}^n f\mathopen{}\left(z_i,\mu\right)\mathclose{}$ and a linear term $\frac{1}{2}\mathopen{}\left(\mathopen{}\left( 2\tau-1\right)\mathclose{}\mathbf{1}_{n}^{\text{T}} \mathbf{z}\right)\mathclose{}$, we can conclude that $h\mathopen{}\left(\mathbf{z},\mu\right)\mathclose{}$ is $L=\frac{1}{2\mu}$ smooth.
\end{proof}
 \begin{lemma}\label{lem6} By progressively updating $\mu$ in such a way that ${\mu^{\mathopen{}\left(k+1\right)\mathclose{}}} < {\mu^{\mathopen{}\left(k\right)\mathclose{}}}$, we can bound the variation in the derivative of the approximate function  $h\mathopen{}\left(\cdot,{\mu^{\mathopen{}\left(k\right)\mathclose{}}}\right)\mathclose{}$ as:
 \begin{equation}\label{eq222}
  \mathopen{}\left\|\nabla  h\mathopen{}\left(\mathbf{z},{\mu^{\mathopen{}\left(k\right)\mathclose{}}}\right)\mathclose{} -\nabla  h\mathopen{}\left(\mathbf{z},{\mu^{\mathopen{}\left(k+1\right)\mathclose{}}}\right)\mathclose{}\right\|\mathclose{} \leq \frac{n}{2} \mathopen{}\left(\frac{\mu^{\mathopen{}\left(k\right)\mathclose{}}-\mu^{\mathopen{}\left(k+1\right)\mathclose{}}}{\mu^{\mathopen{}\left(k\right)\mathclose{}}}\right)\mathclose{}. 
  \end{equation}
 \end{lemma}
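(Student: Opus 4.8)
The plan is to reduce the claim to a coordinatewise analysis of $\nabla f(z_i,\cdot)$, exploiting the fact that $h$ is separable across the entries of $\mathbf{z}$. First I would observe that the linear term $\frac{1}{2}(2\tau-1)\mathbf{1}_n^{\text{T}}\mathbf{z}$ is independent of $\mu$, so it cancels in the difference $\nabla h(\mathbf{z},\mu^{(k)})-\nabla h(\mathbf{z},\mu^{(k+1)})$. Consequently the $i$-th component of this difference equals $\frac{1}{2}\big(\nabla f(z_i,\mu^{(k)})-\nabla f(z_i,\mu^{(k+1)})\big)$, with $\nabla f$ given explicitly by \eqref{eq.grad}. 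This isolates the entire dependence on $\mu$ into the scalar quantity $\nabla f(z_i,\cdot)$, which I can bound pointwise.

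Next I would bound $\big|\nabla f(z_i,\mu^{(k)})-\nabla f(z_i,\mu^{(k+1)})\big|$ uniformly in $z_i$ by splitting into three regimes, using $\mu^{(k+1)}<\mu^{(k)}$. When $|z_i|\geq\mu^{(k)}$ both gradients equal $\text{Sign}(z_i)$, so the difference vanishes. When $|z_i|<\mu^{(k+1)}$ both gradients lie in the quadratic branch, giving $z_i\big(\tfrac{1}{\mu^{(k)}}-\tfrac{1}{\mu^{(k+1)}}\big)$, whose magnitude is at most $\mu^{(k+1)}\big(\tfrac{1}{\mu^{(k+1)}}-\tfrac{1}{\mu^{(k)}}\big)=\tfrac{\mu^{(k)}-\mu^{(k+1)}}{\mu^{(k)}}$ after invoking $|z_i|<\mu^{(k+1)}$. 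The intermediate regime $\mu^{(k+1)}\leq|z_i|<\mu^{(k)}$ is the delicate one: here $\nabla f(z_i,\mu^{(k)})=z_i/\mu^{(k)}$ while $\nabla f(z_i,\mu^{(k+1)})=\text{Sign}(z_i)$, so the difference has magnitude $1-\tfrac{|z_i|}{\mu^{(k)}}$, which is bounded by $\tfrac{\mu^{(k)}-\mu^{(k+1)}}{\mu^{(k)}}$ precisely because $|z_i|\geq\mu^{(k+1)}$. In all three cases the per-coordinate bound is $\tfrac{\mu^{(k)}-\mu^{(k+1)}}{\mu^{(k)}}$.

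Finally I would reassemble the coordinates. Each component of $\nabla h(\mathbf{z},\mu^{(k)})-\nabla h(\mathbf{z},\mu^{(k+1)})$ is at most $\tfrac{1}{2}\cdot\tfrac{\mu^{(k)}-\mu^{(k+1)}}{\mu^{(k)}}$ in absolute value, so summing over the $n$ coordinates and using $\|\cdot\|_2\leq\|\cdot\|_1$ yields $\frac{n}{2}\big(\tfrac{\mu^{(k)}-\mu^{(k+1)}}{\mu^{(k)}}\big)$, which is exactly \eqref{eq222}.

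I expect the main obstacle to be the transition regime $\mu^{(k+1)}\leq|z_i|<\mu^{(k)}$, where the gradient of $f$ switches branches between the two smoothing levels; the care required is to confirm that the bound stays uniform across this kink rather than degenerating, which the inequality $|z_i|\geq\mu^{(k+1)}$ secures. The remaining pieces—cancellation of the linear term, the two smooth-branch estimates, and the passage from the coordinatewise bound to the $\ell_2$ norm—are routine.
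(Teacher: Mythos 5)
Your proposal is correct and follows essentially the same route as the paper: the identical three-case analysis of $\left|\nabla f\mathopen{}\left(z,\mu^{(k)}\right)\mathclose{}-\nabla f\mathopen{}\left(z,\mu^{(k+1)}\right)\mathclose{}\right|$ (both in the sign branch; the transition regime bounded via $|z|\geq\mu^{(k+1)}$; both in the quadratic branch bounded via $|z|<\mu^{(k+1)}$), yielding the uniform per-coordinate bound $\frac{\mu^{(k)}-\mu^{(k+1)}}{\mu^{(k)}}$, followed by aggregation over the $n$ coordinates to obtain the factor $\frac{n}{2}$. Your final step via $\|\cdot\|_2\leq\|\cdot\|_1$ is a slightly cleaner justification of that aggregation than the paper's triangle-inequality phrasing, but the substance is the same.
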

\begin{proof}
Starting from $\mathopen{}\left\|\nabla f\mathopen{}\left(z,{\mu^{\mathopen{}\left(k\right)\mathclose{}}}\right)\mathclose{} - \nabla f\mathopen{}\left(z,{\mu^{\mathopen{}\left(k+1\right)\mathclose{}}}\right)\mathclose{}\right\|\mathclose{}$ and applying the form of the gradient from \eqref{eq.grad}, three scenarios arise:
\begin{itemize}
    \item \textit{Case 1}: If $\mu^{\mathopen{}\left(k\right)\mathclose{}}<|z|$, the difference between the gradients is zero.
    \item \textit{Case 2}: When  $\mu^{\left(k+1\right)} \leq |z| \leq \mu^{\left(k\right)}$, then the difference between the gradients is 
    \begin{align*}
    \left| \text{Sign}(z) - \frac{z}{\mu^{(k)}} \right| &= \left| 1 - \frac{|z|}{\mu^{(k)}} \right| \\
    &\leq 1 - \frac{\mu^{(k+1)}}{\mu^{(k)}} = \frac{\mu^{(k)} - \mu^{(k+1)}}{\mu^{(k)}},
    \end{align*}
    where the inequality follows from \( |z| \geq \mu^{(k+1)} \).
\item \textit{Case 3}: As long as $|z| < \mu^{\mathopen{}\left(k+1\right)\mathclose{}} < \mu^{\mathopen{}\left(k\right)\mathclose{}}$, the difference between the gradients can be computed as 
\begin{align*}
    \left| \frac{z}{\mu^{(k)}} - \frac{z}{\mu^{(k+1)}} \right| &= \left| \frac{\mu^{(k+1)} - \mu^{(k)}}{\mu^{(k)} \mu^{(k+1)}} \right| |z| \leq \frac{\mu^{(k)} - \mu^{(k+1)}}{\mu^{(k)}}, 
    \end{align*}
     where the inequality arises from \( |z| < \mu^{(k+1)} \).
\end{itemize}

In each case, the gradient difference is bounded by $\frac{\mu^{\mathopen{}\left(k\right)\mathclose{}} - \mu^{\mathopen{}\left(k+1\right)\mathclose{}}}{\mu^{\mathopen{}\left(k\right)\mathclose{}}}$. Now, consider the following inequality obtained by the Triangle Inequality: 
$\mathopen{}\left\|\nabla  h\mathopen{}\left(\mathbf{z},{\mu^{\mathopen{}\left(k\right)\mathclose{}}}\right)\mathclose{} -\nabla  h\mathopen{}\left(\mathbf{z},{\mu^{\mathopen{}\left(k+1\right)\mathclose{}}}\right)\mathclose{}\right\|\mathclose{}=\frac{1}{2}\mathopen{}\left\|\sum_{i=1}^n\mathopen{}\left(\nabla  f\mathopen{}\left(z_i,{\mu^{\mathopen{}\left(k\right)\mathclose{}}}\right)\mathclose{} -\nabla  f\mathopen{}\left(z_i,{\mu^{\mathopen{}\left(k+1\right)\mathclose{}}}\right)\mathclose{}\right)\mathclose{}\right\|\mathclose{}\leq \frac{1}{2} \sum_{i=1}^n \mathopen{}\left\|\nabla  f\mathopen{}\left(z_i,{\mu^{\mathopen{}\left(k\right)\mathclose{}}}\right)\mathclose{} -\nabla  f\mathopen{}\left(z_i,{\mu^{\mathopen{}\left(k+1\right)\mathclose{}}}\right)\mathclose{}\right\|\mathclose{}$. Therefore,
given the established upper bound for the individual gradient differences, one can conclude \eqref{eq22}.
\end{proof}

\begin{lemma}\label{lem7} When following algorithm \ref{alg:1}, the change in the dual variable $\boldsymbol{\Psi}$ for each successive iteration can be bounded as follows:
\begin{multline}\label{eq23}
 \mathopen{}\left\|\boldsymbol{\Psi}^{\mathopen{}\left(k+1\right)\mathclose{}}-\boldsymbol{\Psi}^{\mathopen{}\left(k\right)\mathclose{}}\right\|\mathclose{}_2^2  \leq \\ \frac{1}{2\mathopen{}\left(\mu^{\mathopen{}\left(k+1\right)\mathclose{}}\right)\mathclose{}^2} \mathopen{}\left\|\mathbf{z}^{\mathopen{}\left(k+1\right)\mathclose{}}-\mathbf{z}^{\mathopen{}\left(k\right)\mathclose{}}\right\|\mathclose{}_2^2+ \frac{n^2}{2} \mathopen{}\left(\frac{\mu^{\mathopen{}\left(k\right)\mathclose{}}-\mu^{\mathopen{}\left(k+1\right)\mathclose{}}}{\mu^{\mathopen{}\left(k+1\right)\mathclose{}}}\right)\mathclose{}^2.
\end{multline}
\end{lemma}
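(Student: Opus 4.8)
The plan is to turn the dual increment $\boldsymbol{\Psi}^{(k+1)}-\boldsymbol{\Psi}^{(k)}$ into a difference of gradients of the smooth surrogate $h$ and then separate the part caused by the change in $\mathbf{z}$ from the part caused by the change in $\mu$. The first and most important step is the stationarity condition of the $\mathbf{z}$-update \eqref{eq17}: setting the gradient of its minimand to zero and substituting the dual update \eqref{eq20} (so that $\sigma_{\Psi}^{(k+1)}(\mathbf{z}^{(k+1)}+\mathbf{X}\mathbf{w}^{(k+1)}-\mathbf{y})=\boldsymbol{\Psi}^{(k+1)}-\boldsymbol{\Psi}^{(k)}$) yields the identity $\boldsymbol{\Psi}^{(k+1)}=\nabla_{\mathbf{z}}h(\mathbf{z}^{(k+1)},\mu^{(k+1)})$ already exploited in the proof of Lemma \ref{lem2}. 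This identity is what makes the otherwise unruly dual increment accessible, since it rewrites $\boldsymbol{\Psi}^{(k+1)}-\boldsymbol{\Psi}^{(k)}=\nabla_{\mathbf{z}}h(\mathbf{z}^{(k+1)},\mu^{(k+1)})-\nabla_{\mathbf{z}}h(\mathbf{z}^{(k)},\mu^{(k)})$ as a gradient difference of a smooth function.

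Next I would insert the intermediate term $\nabla_{\mathbf{z}}h(\mathbf{z}^{(k)},\mu^{(k+1)})$ and apply the triangle inequality, splitting the right-hand side into a term in which only $\mathbf{z}$ varies (at fixed $\mu=\mu^{(k+1)}$) and a term in which only $\mu$ varies (at fixed $\mathbf{z}=\mathbf{z}^{(k)}$). For the first term I would invoke the $L=\frac{1}{2\mu}$ smoothness of $h(\cdot,\mu)$ from Lemma \ref{lem5} with $\mu=\mu^{(k+1)}$, giving $\frac{1}{2\mu^{(k+1)}}\|\mathbf{z}^{(k+1)}-\mathbf{z}^{(k)}\|_2$. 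For the second term I would apply Lemma \ref{lem6} at $\mathbf{z}=\mathbf{z}^{(k)}$ and then relax its denominator from $\mu^{(k)}$ to $\mu^{(k+1)}$; this is legitimate because $\mu^{(k+1)}<\mu^{(k)}$ forces $\frac{\mu^{(k)}-\mu^{(k+1)}}{\mu^{(k)}}\leq\frac{\mu^{(k)}-\mu^{(k+1)}}{\mu^{(k+1)}}$, so the bound becomes $\frac{n}{2}\cdot\frac{\mu^{(k)}-\mu^{(k+1)}}{\mu^{(k+1)}}$.

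Finally I would square the sum of the two bounds and use the elementary inequality $(a+b)^2\leq 2a^2+2b^2$. The cross term disappears, the factor $2$ cancels the squared halves, and the two resulting pieces are exactly $\frac{1}{2(\mu^{(k+1)})^2}\|\mathbf{z}^{(k+1)}-\mathbf{z}^{(k)}\|_2^2$ and $\frac{n^2}{2}\big(\frac{\mu^{(k)}-\mu^{(k+1)}}{\mu^{(k+1)}}\big)^2$, which is precisely \eqref{eq23}.

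I expect the main obstacle to be the stationarity identity of the first step rather than the estimates that follow: it relies on the fact that, thanks to smoothing, the $\mathbf{z}$-subproblem has a genuine gradient (not merely a subgradient inclusion inherited from the nonsmooth $\|\mathbf{z}\|_1$), so that $\boldsymbol{\Psi}^{(k+1)}$ equals the exact gradient $\nabla_{\mathbf{z}}h(\mathbf{z}^{(k+1)},\mu^{(k+1)})$ and the increment can be matched term-by-term across consecutive iterations. Once that identity and the choice of the intermediate point are fixed, the remainder is just the triangle inequality combined with Lemmas \ref{lem5} and \ref{lem6} and the $(a+b)^2\leq 2a^2+2b^2$ bound.
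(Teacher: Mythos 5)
Your proposal is correct and follows essentially the same route as the paper: the stationarity identity $\boldsymbol{\Psi}^{(k+1)}=\nabla_{\mathbf z}h(\mathbf z^{(k+1)},\mu^{(k+1)})$ from the $\mathbf z$-update, insertion of the intermediate point $\nabla_{\mathbf z}h(\mathbf z^{(k)},\mu^{(k+1)})$, the smoothness bound of Lemma \ref{lem5} for the $\mathbf z$-variation, Lemma \ref{lem6} for the $\mu$-variation, and the $(a+b)^2\le 2a^2+2b^2$ step (which the paper applies directly to squared norms). Your explicit relaxation of the denominator from $\mu^{(k)}$ to $\mu^{(k+1)}$ is a detail the paper leaves implicit, and your citation of Lemmas \ref{lem5} and \ref{lem6} is in fact more accurate than the paper's own reference to Lemmas \ref{lem3} and \ref{lem4}, which appears to be a typo.
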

\begin{proof}
Using the optimality condition in the $\mathbf{z}$ update step of algorithm \ref{alg:1}, we have
$\boldsymbol{\Psi}^{\mathopen{}\left(k+1\right)\mathclose{}}= \nabla h\mathopen{}\left(\mathbf{z}^{\mathopen{}\left(k+1\right)\mathclose{}},{\mu^{\mathopen{}\left(k+1\right)\mathclose{}}}\right)\mathclose{}$.
Subsequently, we bound the squared norm of the difference between $\boldsymbol{\Psi}^{\mathopen{}\left(k+1\right)\mathclose{}}$ and $\boldsymbol{\Psi}^{\mathopen{}\left(k\right)\mathclose{}}$ as follows:
\begin{multline}\label{eq24}
    \mathopen{}\left\|\boldsymbol{\Psi}^{\mathopen{}\left(k+1\right)\mathclose{}}-\boldsymbol{\Psi}^{\mathopen{}\left(k\right)\mathclose{}}\right\|\mathclose{}_2^2\\=\mathopen{}\left\|   \nabla h\mathopen{}\left(\mathbf{z}^{\mathopen{}\left(k+1\right)\mathclose{}},{\mu^{\mathopen{}\left(k+1\right)\mathclose{}}}\right)\mathclose{} -  \nabla h\mathopen{}\left(\mathbf{z}^{\mathopen{}\left(k\right)\mathclose{}},{\mu^{\mathopen{}\left(k\right)\mathclose{}}}\right)\mathclose{}\right\|\mathclose{}_2^2.
\end{multline}
     By applying the triangle inequality on \eqref{eq24} we obtain:
     \begin{multline}\label{eq25}
         \mathopen{}\left \| \nabla h\mathopen{}\left(\mathbf{z}^{\mathopen{}\left(k+1\right)\mathclose{}},{\mu^{\mathopen{}\left(k+1\right)\mathclose{}}}\right)\mathclose{} -  \nabla h\mathopen{}\left(\mathbf{z}^{\mathopen{}\left(k\right)\mathclose{}},{\mu^{\mathopen{}\left(k\right)\mathclose{}}}\right)\mathclose{}\right\|\mathclose{}_2^2 \leq 
         \\
        2 \mathopen{}\left \| \nabla h\mathopen{}\left(\mathbf{z}^{\mathopen{}\left(k+1\right)\mathclose{}},{\mu^{\mathopen{}\left(k+1\right)\mathclose{}}}\right)\mathclose{} -  \nabla h\mathopen{}\left(\mathbf{z}^{\mathopen{}\left(k\right)\mathclose{}},{\mu^{\mathopen{}\left(k+1\right)\mathclose{}}}\right)\mathclose{}\right\|\mathclose{}_2^2 \\ 
      + 2 \mathopen{}\left \| \nabla  h\mathopen{}\left(\mathbf{z}^{\mathopen{}\left(k\right)\mathclose{}},{\mu^{\mathopen{}\left(k+1\right)\mathclose{}}}\right)\mathclose{} -  \nabla h\mathopen{}\left(\mathbf{z}^{\mathopen{}\left(k\right)\mathclose{}},{\mu^{\mathopen{}\left(k\right)\mathclose{}}}\right)\mathclose{}\right\|\mathclose{}_2^2 .
     \end{multline}
      Now by substituting the upper bounds from Lemmas \ref{lem3} and \ref{lem4} in \eqref{eq25}, we can conclude that inequality \eqref{eq23} holds true.
\end{proof}
Lemma \ref{lem5} demonstrates that the derivation in the dual variable $\boldsymbol{\Psi}$ can be bounded by a term composed of the difference in primal variables, and a scale of an equation based on smoothing parameters. This result provides insights into the convergence properties of the optimization algorithm under study. In the following, we provide an upper bound for the total amount of change in one iteration.
\begin{lemma}\label{lem8}
By  $\rho = \frac{1}{\gamma}$ or $\rho = \frac{1}{\gamma-1}$, for MCP or SCAD respectively, since $\sigma_{\Psi}^{\mathopen{}\left(k\right)\mathclose{}}>\frac{2 n \rho}{\min_p \mathopen{}\left \|\mathbf{X}_{:,p}\right\|\mathclose{}_2^2}$
the amount of change in each iteration can be bounded as \begin{multline}\label{eq372}
\bar{\mathcal{L}}_{\sigma_{\Psi}^{\mathopen{}\left(k+1\right)\mathclose{}},\mu^{\mathopen{}\left(k+1\right)\mathclose{}}}\mathopen{}\left(\mathbf{w}^{\mathopen{}\left(k+1\right)\mathclose{}},\mathbf{z}^{\mathopen{}\left(k+1\right)\mathclose{}},\boldsymbol{\Psi}^{\mathopen{}\left(k+1\right)\mathclose{}}\right)\mathclose{}\\-\bar{\mathcal{L}}_{\sigma_{\Psi}^{\mathopen{}\left(k\right)\mathclose{}},\mu^{\mathopen{}\left(k\right)\mathclose{}}}\mathopen{}\left(\mathbf{w}^{\mathopen{}\left(k\right)\mathclose{}},\mathbf{z}^{\mathopen{}\left(k\right)\mathclose{}},\boldsymbol{\Psi}^{\mathopen{}\left(k\right)\mathclose{}}\right)\mathclose{}  \leq -{\xi^{\mathopen{}\left(k+1\right)\mathclose{}}}\mathopen{}\left \|\mathbf{w}^{\mathopen{}\left(k+1\right)\mathclose{}}-\mathbf{w}^{\mathopen{}\left(k\right)\mathclose{}}\right\|\mathclose{}_2^2 \\
    \mathopen{}\left(-\frac{\sigma_{\Psi}^{\mathopen{}\left(k+1\right)\mathclose{}}}{2}+\frac{\sigma_{\Psi}^{\mathopen{}\left(k+1\right)\mathclose{}}}{2\beta^2}\right)\mathclose{}\mathopen{}\left \|\mathbf{z}^{\mathopen{}\left(k+1\right)\mathclose{}}-\mathbf{z}^{\mathopen{}\left(k\right)\mathclose{}}\right\|\mathclose{}_2^2+\\  \frac{\sigma_{\Psi}^{\mathopen{}\left(k+1\right)\mathclose{}}-\sigma_{\Psi}^{\mathopen{}\left(k\right)\mathclose{}}}{2\beta^2} \mathopen{}\left\|\mathbf{z}^{\mathopen{}\left(k\right)\mathclose{}}-\mathbf{z}^{\mathopen{}\left(k-1\right)\mathclose{}}\right\|\mathclose{}_2^2 + \frac{n^2}{2\sigma_{\Psi}^{\mathopen{}\left(k+1\right)\mathclose{}}} \mathopen{}\left(\frac{\mu^{\mathopen{}\left(k\right)\mathclose{}}-\mu^{\mathopen{}\left(k+1\right)\mathclose{}}}{\mu^{\mathopen{}\left(k+1\right)\mathclose{}}}\right)\mathclose{}^2\\+  \frac{n^2\mathopen{}\left(\sigma_{\Psi}^{\mathopen{}\left(k+1\right)\mathclose{}}-\sigma_{\Psi}^{\mathopen{}\left(k\right)\mathclose{}}\right)\mathclose{}}{2 \mathopen{}\left(\sigma_{\Psi}^{\mathopen{}\left(k\right)\mathclose{}}\right)\mathclose{}^2} \mathopen{}\left(\frac{\mu^{k-1}-\mu^{\mathopen{}\left(k\right)\mathclose{}}}{\mu^{\mathopen{}\left(k\right)\mathclose{}}}\right)\mathclose{}^2.
    \end{multline}
\end{lemma}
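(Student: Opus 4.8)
The plan is to write the full one-iteration change in $\bar{\mathcal{L}}$ as a telescoping sum over the five elementary sub-steps of Algorithm~\ref{alg:1}, taken in their execution order: the $\sigma_{\Psi}$ update, the $\mu$ update, the $\mathbf{w}$ update, the $\mathbf{z}$ update, and the $\boldsymbol{\Psi}$ update. Concretely, I would add and subtract the intermediate Lagrangian values obtained after each individual update, so that the difference in \eqref{eq372} splits exactly into the five increments already controlled in Lemmas~\ref{lem3} and \ref{lem4}. The $\mu$ increment is discarded since it is non-positive by \eqref{eq211}; the $\mathbf{w}$ increment supplies the leading descent $-\xi^{(k+1)}\|\mathbf{w}^{(k+1)}-\mathbf{w}^{(k)}\|_2^2$ via \eqref{eq30}; the strong convexity of $\bar{\mathcal{L}}$ in $\mathbf{z}$ (Lemma~\ref{lem4}) supplies the descent $-\tfrac{\sigma_{\Psi}^{(k+1)}}{2}\|\mathbf{z}^{(k+1)}-\mathbf{z}^{(k)}\|_2^2$; and the two potentially increasing increments are the $\sigma_{\Psi}$ step \eqref{eq212} and the dual step \eqref{eq271}, both of which are expressed through squared dual differences.

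The heart of the argument is then to bound these two dual-difference terms by primal quantities through Lemma~\ref{lem7}. For the dual increment $\tfrac{1}{\sigma_{\Psi}^{(k+1)}}\|\boldsymbol{\Psi}^{(k+1)}-\boldsymbol{\Psi}^{(k)}\|_2^2$ I would substitute \eqref{eq23} directly. For the $\sigma_{\Psi}$ increment $\tfrac{\sigma_{\Psi}^{(k+1)}-\sigma_{\Psi}^{(k)}}{(\sigma_{\Psi}^{(k)})^2}\|\boldsymbol{\Psi}^{(k)}-\boldsymbol{\Psi}^{(k-1)}\|_2^2$ I would apply the index-shifted version of \eqref{eq23} (every $k+1$ lowered to $k$). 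Both substitutions replace a dual gap by a $\mathbf{z}$-difference plus a smoothing-parameter residual.

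The decisive algebraic step is to invoke the schedule $\mu^{(k+1)}=\beta/\sigma_{\Psi}^{(k+1)}$ of \eqref{eq15}, which gives $\tfrac{1}{\sigma_{\Psi}^{(k+1)}(\mu^{(k+1)})^2}=\tfrac{\sigma_{\Psi}^{(k+1)}}{\beta^2}$ and the shifted identity $\tfrac{1}{(\sigma_{\Psi}^{(k)}\mu^{(k)})^2}=\tfrac{1}{\beta^2}$. These collapse the coefficient of $\|\mathbf{z}^{(k+1)}-\mathbf{z}^{(k)}\|_2^2$ to $-\tfrac{\sigma_{\Psi}^{(k+1)}}{2}+\tfrac{\sigma_{\Psi}^{(k+1)}}{2\beta^2}$, turn the shifted dual term into $\tfrac{\sigma_{\Psi}^{(k+1)}-\sigma_{\Psi}^{(k)}}{2\beta^2}\|\mathbf{z}^{(k)}-\mathbf{z}^{(k-1)}\|_2^2$, and leave the two smoothing residuals as precisely the last two terms of \eqref{eq372}. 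Finally, the standing bound $\sigma_{\Psi}^{(k)}>\tfrac{2n\rho}{\min_p\|\mathbf{X}_{:,p}\|_2^2}$ together with $\sigma_{\Psi}^{(k+1)}\geq\sigma_{\Psi}^{(k)}$ guarantees $\xi^{(k+1)}>0$, so the $\mathbf{w}$ contribution is a genuine descent.

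I expect the main obstacle to be the index bookkeeping: Lemma~\ref{lem7} must be invoked at two consecutive iterations simultaneously, and one must verify that the $\sigma_{\Psi}$-step bound \eqref{eq212} legitimately reuses the earlier gap $\|\boldsymbol{\Psi}^{(k)}-\boldsymbol{\Psi}^{(k-1)}\|_2^2$ under the same schedule identities. A related subtlety is matching the signs in the $\mathbf{z}$ terms: the strong-convexity descent and the $+\tfrac{\sigma_{\Psi}^{(k+1)}}{2\beta^2}\|\mathbf{z}^{(k+1)}-\mathbf{z}^{(k)}\|_2^2$ piece coming from the dual term must be merged into a single coefficient, so that they are not confused with the older gap $\|\mathbf{z}^{(k)}-\mathbf{z}^{(k-1)}\|_2^2$ produced by the $\sigma_{\Psi}$ step.
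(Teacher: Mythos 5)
Your proposal is correct and follows essentially the same route as the paper: telescoping the one-iteration change over the five sub-steps, invoking Lemmas~\ref{lem3} and \ref{lem4} for each increment, and then substituting the dual-gap bound of Lemma~\ref{lem7} at iterations $k+1$ and $k$ together with the schedule identity $\mu^{(k+1)}\sigma_{\Psi}^{(k+1)}=\beta$ to collapse the coefficients into the form of \eqref{eq372}. The index bookkeeping you flag as the main obstacle is exactly how the paper's argument proceeds, so no further adjustment is needed.
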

\begin{proof}
    Consider the equation:
    \begin{multline}\label{eq35}
\bar{\mathcal{L}}_{\sigma_{\Psi}^{\mathopen{}\left(k+1\right)\mathclose{}},\mu^{\mathopen{}\left(k+1\right)\mathclose{}}}\mathopen{}\left(\mathbf{w}^{\mathopen{}\left(k+1\right)\mathclose{}},\mathbf{z}^{\mathopen{}\left(k+1\right)\mathclose{}},\boldsymbol{\Psi}^{\mathopen{}\left(k+1\right)\mathclose{}}\right)\mathclose{}-\\\bar{\mathcal{L}}_{\sigma_{\Psi}^{\mathopen{}\left(k\right)\mathclose{}},\mu^{\mathopen{}\left(k\right)\mathclose{}}}\mathopen{}\left(\mathbf{w}^{\mathopen{}\left(k\right)\mathclose{}},\mathbf{z}^{\mathopen{}\left(k\right)\mathclose{}},\boldsymbol{\Psi}^{\mathopen{}\left(k\right)\mathclose{}}\right)\mathclose{} \\= \eqref{eq22a}+\eqref{eq211a}+\eqref{eq30a}+\eqref{eq261a}+\eqref{eq29a}.\\
    \end{multline}
From Lemma \ref{lem3} and \ref{lem4}, we can get the bounds for each part right-hand side of \eqref{eq35}. Substituting these bounds into the equation, we get:
    \begin{multline}\label{eq371}
\bar{\mathcal{L}}_{\sigma_{\Psi}^{\mathopen{}\left(k+1\right)\mathclose{}},\mu^{\mathopen{}\left(k+1\right)\mathclose{}}}\mathopen{}\left(\mathbf{w}^{\mathopen{}\left(k+1\right)\mathclose{}},\mathbf{z}^{\mathopen{}\left(k+1\right)\mathclose{}},\boldsymbol{\Psi}^{\mathopen{}\left(k+1\right)\mathclose{}}\right)\mathclose{}\\-\bar{\mathcal{L}}_{\sigma_{\Psi}^{\mathopen{}\left(k\right)\mathclose{}},\mu^{\mathopen{}\left(k\right)\mathclose{}}}\mathopen{}\left(\mathbf{w}^{\mathopen{}\left(k\right)\mathclose{}},\mathbf{z}^{\mathopen{}\left(k\right)\mathclose{}},\boldsymbol{\Psi}^{\mathopen{}\left(k\right)\mathclose{}}\right)\mathclose{}\leq   \\
   -\frac{\xi^{\mathopen{}\left(k\right)\mathclose{}}}{2}\mathopen{}\left\|\mathbf{w}^{\mathopen{}\left(k\right)\mathclose{}}-\mathbf{w}^{\mathopen{}\left(k-1\right)\mathclose{}}\right\|\mathclose{}_2^2-\frac{\sigma_{\Psi}^{\mathopen{}\left(k+1\right)\mathclose{}}}{2}\mathopen{}\left\|\mathbf{z}^{\mathopen{}\left(k\right)\mathclose{}}  -\mathbf{z}^{\mathopen{}\left(k-1\right)\mathclose{}}\right\|\mathclose{}_2^2 +\\
   \frac{1}{\sigma_{\Psi}^{\mathopen{}\left(k+1\right)\mathclose{}}}\mathopen{}\left \|\boldsymbol{\Psi}^{\mathopen{}\left(k+1\right)\mathclose{}}-\boldsymbol{\Psi}^{\mathopen{}\left(k\right)\mathclose{}}\right\|\mathclose{}_2^2+\frac{\sigma_{\Psi}^{\mathopen{}\left(k+1\right)\mathclose{}}-\sigma_{\Psi}^{\mathopen{}\left(k\right)\mathclose{}}}{\mathopen{}\left(\sigma_{\Psi}^{\mathopen{}\left(k\right)\mathclose{}}\right)\mathclose{}^2}\mathopen{}\left \|\boldsymbol{\Psi}^{\mathopen{}\left(k\right)\mathclose{}}-\boldsymbol{\Psi}^{k-1}\right\|\mathclose{}_2^2.
    \end{multline}
Further, replacing the bound from inequality \eqref{eq23}, leads the above inequality to \eqref{eq372}. 
\end{proof}

The bound specified in \eqref{eq372} depends on two elements: one tied to the primal variables, and the other to the smoothing parameters. Adjusting parameters can indeed render the primal variable component negative, thereby promoting a decrease in the function's value. Nevertheless, the component stemming from the smoothing parameters remains consistently positive. Despite this, it does not obstruct convergence due to its reducing and summable character. In the following sections, we will establish that the segment relating to the dual smoothing parameter is similarly summable.

\begin{lemma}\label{lem9} By considering $\mu^{\mathopen{}\left(k+1\right)\mathclose{}}=\frac{\beta}{\sigma_{\Psi}^{\mathopen{}\left(k+1\right)\mathclose{}}}$ and $\sigma_{\Psi}^{\mathopen{}\left(k+1\right)\mathclose{}}=c \sqrt{k+1}$, we can coclude:
$ \sum_{k=1}^\infty \frac{n^2}{2}\frac{\mathopen{}\left(\frac{\mu^{\mathopen{}\left(k\right)\mathclose{}}-\mu^{\mathopen{}\left(k+1\right)\mathclose{}}}{\mu^{\mathopen{}\left(k\right)\mathclose{}}}\right)\mathclose{}^2}{\sigma_\Psi^{\mathopen{}\left(k+1\right)\mathclose{}}}<D$ and $\sum_{k=1}^\infty\frac{n^2\mathopen{}\left(\sigma_{\Psi}^{\mathopen{}\left(k+1\right)\mathclose{}}-\sigma_{\Psi}^{\mathopen{}\left(k\right)\mathclose{}}\right)\mathclose{}}{2 \mathopen{}\left(\sigma_{\Psi}^{\mathopen{}\left(k\right)\mathclose{}}\right)\mathclose{}^2} \mathopen{}\left(\frac{\mu^{k-1}-\mu^{\mathopen{}\left(k\right)\mathclose{}}}{\mu^{\mathopen{}\left(k\right)\mathclose{}}}\right)\mathclose{}^2<D'$, where $D$ and $D'$ are postive constants.
\end{lemma}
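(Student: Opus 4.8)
The plan is to substitute the explicit update rules $\sigma_{\Psi}^{(k)} = c\sqrt{k}$ and $\mu^{(k)} = \beta/(c\sqrt{k})$ into both series and reduce each summand to a pure power of $k$, after which a $p$-series comparison settles convergence. The crux is obtaining clean bounds on the two relative-increment factors in $\mu$, and I would handle these with the elementary identity $\sqrt{a}-\sqrt{b} = (a-b)/(\sqrt{a}+\sqrt{b})$, which converts the awkward square-root differences into transparent rational expressions.

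First, for the factor in the first series I would compute
\[
\frac{\mu^{(k)}-\mu^{(k+1)}}{\mu^{(k)}} = 1 - \sqrt{\tfrac{k}{k+1}} = \frac{\sqrt{k+1}-\sqrt{k}}{\sqrt{k+1}} = \frac{1}{(k+1)+\sqrt{k(k+1)}} \le \frac{1}{2k},
\]
where the last step uses $(k+1)\ge k$ and $\sqrt{k(k+1)}\ge k$. Squaring gives $(\,\cdot\,)^2 \le 1/(4k^2)$, and since $\sigma_{\Psi}^{(k+1)} = c\sqrt{k+1}$, each summand of the first series is at most $\frac{n^2}{8c}\,k^{-2}(k+1)^{-1/2} \le \frac{n^2}{8c}\,k^{-5/2}$. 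This is dominated by a convergent $p$-series with $p=5/2>1$, so its sum is a finite constant $D$.

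Second, for the factor in the second series I would argue analogously that
\[
\frac{\mu^{(k-1)}-\mu^{(k)}}{\mu^{(k)}} = \sqrt{\tfrac{k}{k-1}} - 1 = \frac{1}{(k-1)+\sqrt{k(k-1)}} \le \frac{1}{2(k-1)},
\]
so its square is at most $1/(4(k-1)^2)$. For the prefactor I would use $\sigma_{\Psi}^{(k+1)}-\sigma_{\Psi}^{(k)} = c/(\sqrt{k+1}+\sqrt{k}) \le c/(2\sqrt{k})$ together with $(\sigma_{\Psi}^{(k)})^2 = c^2 k$, yielding $(\sigma_{\Psi}^{(k+1)}-\sigma_{\Psi}^{(k)})/(\sigma_{\Psi}^{(k)})^2 \le 1/(2c\,k^{3/2})$. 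Combining, each summand for $k\ge 2$ is bounded by $\frac{n^2}{16c}\,k^{-3/2}(k-1)^{-2}$, and using $(k-1)^2\ge k^2/4$ this is at most $\frac{n^2}{4c}\,k^{-7/2}=O(k^{-7/2})$, again summable, with the resulting finite sum denoted $D'$.

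The computations are entirely routine once the square-root identities are in place, so there is no deep obstacle here; the essential point—and the very reason the rates $\sigma_{\Psi}^{(k)}\sim\sqrt{k}$ and $\mu^{(k)}\sim k^{-1/2}$ are chosen—is that the relative increments of these slowly varying sequences decay like $1/k$, which after squaring and multiplying by the remaining polynomial factors forces both exponents strictly above $1$ and guarantees summability. The only mild care needed is isolating the $k=1$ term of the second series, where $\mu^{(k-1)}=\mu^{(0)}$ appears; this is a single finite quantity that I would simply absorb into the constant $D'$.
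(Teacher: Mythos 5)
Your proof is correct and follows essentially the same route as the paper: rationalize the square-root differences to show the relative increments of $\mu$ (equivalently of $\sigma_{\Psi}$) are bounded by $O(1/k)$, square, multiply by the remaining polynomial factors, and conclude by comparison with a convergent $p$-series ($k^{-5/2}$ and $k^{-7/2}$ respectively). You actually supply more detail than the paper for the second series (which it dismisses with ``by following a similar direction'') and sensibly isolate the $k=1$ term; no issues.
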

\begin{proof}
We have $\mu^{\mathopen{}\left(k+1\right)\mathclose{}}=\frac{\beta}{\sigma_{\Psi}^{\mathopen{}\left(k+1\right)\mathclose{}}}$, which implies that $\mu^{\mathopen{}\left(k+1\right)\mathclose{}}\sigma_{\Psi}^{\mathopen{}\left(k+1\right)\mathclose{}}=\beta$. Taking the ratio of consecutive $\mu$ values, we get:

\begin{equation}
\frac{\mu^{\mathopen{}\left(k\right)\mathclose{}}-\mu^{\mathopen{}\left(k+1\right)\mathclose{}}}{\mu^{\mathopen{}\left(k\right)\mathclose{}}} =\frac{\sigma_{\Psi}^{\mathopen{}\left(k+1\right)\mathclose{}}-\sigma_{\Psi}^{\mathopen{}\left(k\right)\mathclose{}}}{\sigma_{\Psi}^{\mathopen{}\left(k+1\right)\mathclose{}}}.
\end{equation}
By setting $\sigma_{\Psi}^{k+1}=c \sqrt{k+1}$ we obtain:
\begin{multline}
\frac{\sigma_{\Psi}^{k+1}-\sigma_{\Psi}^{\mathopen{}\left(k\right)\mathclose{}}}{\sigma_{\Psi}^{k+1}}=  \frac{\sqrt{k+1}-\sqrt{k}}{\sqrt{k+1}}= \\ \frac{\mathopen{}\left(\sqrt{k+1}-\sqrt{k}\right)\mathclose{}\mathopen{}\left(\sqrt{k+1}+\sqrt{k}\right)\mathclose{}}{\sqrt{k+1}\mathopen{}\left(\sqrt{k+1}+\sqrt{k}\right)\mathclose{}}
= \frac{1}{\sqrt{k+1}\mathopen{}\left(\sqrt{k+1}+\sqrt{k}\right)\mathclose{}}\\ < \frac{1}{2 k}.
\end{multline}
Upon squaring both sides, we have:
\begin{equation}\label{eq28}
\mathopen{}\left(\frac{\mu^{\mathopen{}\left(k\right)\mathclose{}}-\mu^{\mathopen{}\left(k+1\right)\mathclose{}}}{\mu^{\mathopen{}\left(k\right)\mathclose{}}}\right)\mathclose{}^2 < \frac{1}{4k^2}.
\end{equation}
Adding up \eqref{eq28} for all $k$ from 1 to $\infty$ and replacing $\sigma_{\Psi}^{\mathopen{}\left(k+1\right)\mathclose{}}$ with $c\sqrt{k+1}$, we derive:
\begin{equation}
\sum_{k=1}^\infty \frac{\mathopen{}\left(\frac{\mu^{\mathopen{}\left(k\right)\mathclose{}}-\mu^{\mathopen{}\left(k+1\right)\mathclose{}}}{\mu^{\mathopen{}\left(k\right)\mathclose{}}}\right)\mathclose{}^2}{\sigma_\Psi^{\mathopen{}\left(k+1\right)\mathclose{}}} \leq  \sum_{k=1}^\infty \frac{\frac{1}{4k^2}}{c\sqrt{k+1}}\leq \sum_{k=1}^\infty \frac{1}{c k^{\frac{5}{2}}}.
\end{equation}
The sum converges to a constant because the summands are decreasing in magnitude, and the sum of the sequence $\frac{1}{k^p}$ is convergent for any $p>1$. Consequently, the sum multiplied by the constant $\frac{n^2}{2}$ can be bounded by a constant $D$:
\begin{equation}
\sum_{k=1}^\infty \frac{n^2}{2} \frac{\mathopen{}\left(\frac{\mu^{\mathopen{}\left(k\right)\mathclose{}}-\mu^{\mathopen{}\left(k+1\right)\mathclose{}}}{\mu^{\mathopen{}\left(k\right)\mathclose{}}}\right)\mathclose{}^2}{\sigma_\Psi^{\mathopen{}\left(k+1\right)\mathclose{}}} < D.
\end{equation}
By following a similar direction we can also obtain:
\begin{equation}
    \sum_{k=1}^\infty\frac{n^2\mathopen{}\left(\sigma_{\Psi}^{\mathopen{}\left(k+1\right)\mathclose{}}-\sigma_{\Psi}^{\mathopen{}\left(k\right)\mathclose{}}\right)\mathclose{}}{2 \mathopen{}\left(\sigma_{\Psi}^{\mathopen{}\left(k\right)\mathclose{}}\right)\mathclose{}^2} \mathopen{}\left(\frac{\mu^{k-1}-\mu^{\mathopen{}\left(k\right)\mathclose{}}}{\mu^{\mathopen{}\left(k\right)\mathclose{}}}\right)\mathclose{}^2<D',
\end{equation}
where $D'$ is a constant.
\end{proof}
The convergence theorem can now be demonstrated based on all privilege lemmas.
 \begin{theorem}[Sufficient Descent Property]\label{lemma12}
Assume there exists a $K$ such that for $k \geq K$, where $\sigma_{\Psi}^{\mathopen{}\left(k+1\right)\mathclose{}} > \frac{2 n \rho}{\min_p |\mathbf{X}{:,p}|^2_2}$ and we update $\sigma_\Psi^{\mathopen{}\left(k+1\right)\mathclose{}} = c\sqrt{k+1}$ with $c > 0$ and $\mu^{\mathopen{}\left(k+1\right)\mathclose{}} = \frac{\beta}{\sigma_\Psi^{\mathopen{}\left(k+1\right)\mathclose{}}}$, where $\beta \geq \sqrt{3}$. Then, if we follow Algorithm \ref{alg:1}, the below relations hold:
\begin{align}
\mathopen{}\left\|\mathbf{w}^{\mathopen{}\left(k+1\right)\mathclose{}} - \mathbf{w}^{\mathopen{}\left(k\right)\mathclose{}}\right\|\mathclose{}_2^2 \in o\mathopen{}\left({k^{-\frac{3}{2}}}\right)\mathclose{} \\
\mathopen{}\left\|\mathbf{z}^{\mathopen{}\left(k+1\right)\mathclose{}} - \mathbf{z}^{\mathopen{}\left(k\right)\mathclose{}}\right\|\mathclose{}_2^2 \in o\mathopen{}\left({k^{-\frac{3}{2}}}\right)\mathclose{} \\
\lim_{k \rightarrow \infty} \sigma_\Psi^{\mathopen{}\left(k+1\right)\mathclose{}} \mathopen{}\left\|\mathbf{z}^{\mathopen{}\left(k+1\right)\mathclose{}} + \mathbf{X} \mathbf{w}^{\mathopen{}\left(k+1\right)\mathclose{}} - \mathbf{y}\right\|\mathclose{}_2^2 \in o\mathopen{}\left({k^{-1}}\right)\mathclose{}.
\end{align}
 \end{theorem}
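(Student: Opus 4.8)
The plan is to turn the one-step estimate of Lemma~\ref{lem8} into a genuine monotone descent on a shifted potential, extract weighted summability of the successive differences, and then upgrade that summability to the stated pointwise rates. The only term in \eqref{eq372} that is lagged in $\mathbf{z}$ is $\tfrac{\sigma_{\Psi}^{(k+1)}-\sigma_{\Psi}^{(k)}}{2\beta^2}\|\mathbf{z}^{(k)}-\mathbf{z}^{(k-1)}\|_2^2$, so first I would telescope it against the current $\mathbf{z}$-decrease by defining
\begin{equation*}
\Phi_k = \bar{\mathcal{L}}_{\sigma_{\Psi}^{(k)},\mu^{(k)}}\big(\mathbf{w}^{(k)},\mathbf{z}^{(k)},\boldsymbol{\Psi}^{(k)}\big) + \frac{\sigma_{\Psi}^{(k+1)}-\sigma_{\Psi}^{(k)}}{2\beta^2}\,\|\mathbf{z}^{(k)}-\mathbf{z}^{(k-1)}\|_2^2 + \sum_{j\geq k}s_j,
\end{equation*}
where $s_j$ denotes the two smoothing-parameter terms of \eqref{eq372}, which Lemma~\ref{lem9} shows to be summable (up to a bounded factor). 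Substituting \eqref{eq372} and using $\sigma_{\Psi}^{(k+1)}=c\sqrt{k+1}$, so that $\sigma_{\Psi}^{(k+2)}-\sigma_{\Psi}^{(k+1)}=\Theta(k^{-1/2})$ is negligible against $\sigma_{\Psi}^{(k+1)}=\Theta(k^{1/2})$, the lagged term and the $s_k$ cancel; since $\beta\geq\sqrt{3}$ forces $-\tfrac12+\tfrac{1}{2\beta^2}\leq-\tfrac13$, I expect
\begin{equation*}
\Phi_{k+1}-\Phi_k \leq -\xi^{(k+1)}\|\mathbf{w}^{(k+1)}-\mathbf{w}^{(k)}\|_2^2 - \eta^{(k+1)}\|\mathbf{z}^{(k+1)}-\mathbf{z}^{(k)}\|_2^2
\end{equation*}
for $k$ large, with $\xi^{(k+1)},\eta^{(k+1)}=\Theta(\sqrt{k})>0$ once $k\geq K$ (the threshold $\sigma_{\Psi}^{(k+1)}>2n\rho/\min_p\|\mathbf{X}_{:,p}\|_2^2$ is exactly what makes $\xi^{(k+1)}>0$).

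Next I would telescope. Lemma~\ref{lem2} lower-bounds $\bar{\mathcal{L}}$, and the two extra terms defining $\Phi_k$ are nonnegative and convergent, so $\Phi_k$ is bounded below and, being non-increasing, converges. Summing the displayed decrease then gives $\sum_k \xi^{(k+1)}\|\mathbf{w}^{(k+1)}-\mathbf{w}^{(k)}\|_2^2 + \eta^{(k+1)}\|\mathbf{z}^{(k+1)}-\mathbf{z}^{(k)}\|_2^2 < \infty$, and because both coefficients are of order $\sqrt{k}$ this is $\sum_{k=1}^{\infty}\sqrt{k}\,\big(\|\mathbf{w}^{(k+1)}-\mathbf{w}^{(k)}\|_2^2 + \|\mathbf{z}^{(k+1)}-\mathbf{z}^{(k)}\|_2^2\big) < \infty$.

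The hard part will be upgrading this weighted summability to the pointwise bound $o(k^{-3/2})$: summability of $\sqrt{k}\,b_k$ alone yields only $b_k=o(k^{-1/2})$, so a factor of $k$ is missing, and the stronger implication is in fact false for general sequences. The route I would take is the weighted-sequence lemma: if $b_k\geq 0$ is (eventually) non-increasing and $\sum_k a_k b_k<\infty$ with $a_k=\sqrt{k}$, then $b_k\sum_{j\leq k}a_j\to 0$; since $\sum_{j\leq k}\sqrt{j}\sim\tfrac23 k^{3/2}$, this delivers exactly $b_k=o(k^{-3/2})$ for $b_k=\|\mathbf{w}^{(k+1)}-\mathbf{w}^{(k)}\|_2^2$ and $b_k=\|\mathbf{z}^{(k+1)}-\mathbf{z}^{(k)}\|_2^2$. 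The genuine obstacle is therefore establishing the (at least eventual, or up-to-summable-perturbation) monotonicity of these successive differences; to obtain it I would exploit the contractive structure of the proximal/soft-threshold updates under the shrinking prox parameter $n/(\sigma_{\Psi}^{(k+1)}\|\mathbf{X}_{:,p}\|_2^2)$ to bound $\|\mathbf{w}^{(k+1)}-\mathbf{w}^{(k)}\|_2^2$ by $\|\mathbf{w}^{(k)}-\mathbf{w}^{(k-1)}\|_2^2$ plus a summable remainder, and then invoke a perturbed version of the weighted-sequence lemma.

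Finally, the third relation follows cheaply from the second. The dual update \eqref{eq20} gives $\|\mathbf{z}^{(k+1)}+\mathbf{X}\mathbf{w}^{(k+1)}-\mathbf{y}\|_2^2=(\sigma_{\Psi}^{(k+1)})^{-2}\|\boldsymbol{\Psi}^{(k+1)}-\boldsymbol{\Psi}^{(k)}\|_2^2$, hence $\sigma_{\Psi}^{(k+1)}\|\mathbf{z}^{(k+1)}+\mathbf{X}\mathbf{w}^{(k+1)}-\mathbf{y}\|_2^2=(\sigma_{\Psi}^{(k+1)})^{-1}\|\boldsymbol{\Psi}^{(k+1)}-\boldsymbol{\Psi}^{(k)}\|_2^2$. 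Bounding $\|\boldsymbol{\Psi}^{(k+1)}-\boldsymbol{\Psi}^{(k)}\|_2^2$ via Lemma~\ref{lem7} and using $\mu^{(k+1)}=\beta/\sigma_{\Psi}^{(k+1)}$, the first contribution becomes $\tfrac{\sigma_{\Psi}^{(k+1)}}{2\beta^2}\|\mathbf{z}^{(k+1)}-\mathbf{z}^{(k)}\|_2^2=\Theta(\sqrt{k})\cdot o(k^{-3/2})=o(k^{-1})$, while the smoothing contribution $\tfrac{n^2}{2\sigma_{\Psi}^{(k+1)}}\big(\tfrac{\mu^{(k)}-\mu^{(k+1)}}{\mu^{(k+1)}}\big)^2=O(k^{-5/2})=o(k^{-1})$ by the estimate \eqref{eq28}. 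Adding the two yields the claimed $o(k^{-1})$.
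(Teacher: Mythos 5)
Your overall route is the same as the paper's: telescope the per-iteration estimate of Lemma~\ref{lem8}, use Lemma~\ref{lem9} to control the smoothing-parameter terms by constants $D,D'$, invoke the lower bound of Lemma~\ref{lem2} to conclude that the weighted sum $S_{K'}=\sum_k \xi^{(k+1)}\|\mathbf{w}^{(k+1)}-\mathbf{w}^{(k)}\|_2^2+\eta^{(k+1)}\|\mathbf{z}^{(k+1)}-\mathbf{z}^{(k)}\|_2^2$ stays finite, and then read off the rates from $\xi^{(k+1)},\eta^{(k+1)}=\Omega(\sqrt{k})$; the third relation is likewise obtained in both arguments from the dual update identity together with Lemma~\ref{lem7} and the estimate \eqref{eq28}. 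Your Lyapunov function $\Phi_k$ is a cleaner piece of bookkeeping than the paper's direct summation for absorbing the lagged term $\tfrac{\sigma_{\Psi}^{(k+1)}-\sigma_{\Psi}^{(k)}}{2\beta^2}\|\mathbf{z}^{(k)}-\mathbf{z}^{(k-1)}\|_2^2$, and it has the incidental benefit that your effective $\eta^{(k+1)}\approx\sigma_{\Psi}^{(k+1)}/3$ remains strictly positive at the boundary value $\beta=\sqrt{3}$, whereas the paper's coefficient $\sigma_{\Psi}^{(k+1)}\bigl(\tfrac12-\tfrac{1}{\beta^2}-\tfrac{1}{2\beta^2}\bigr)$ degenerates to zero there.

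The gap you flag is real, and it is the decisive one: finiteness of $\sum_k\sqrt{k}\,b_k$ with $b_k\geq 0$ yields only $\sqrt{k}\,b_k\to 0$, i.e.\ $b_k=o(k^{-1/2})$, and the jump to $b_k=o(k^{-3/2})$ requires the weighted-sequence lemma, which in turn needs (eventual, or up-to-summable-perturbation) monotonicity of $b_k$. You correctly identify this as the hard step, but your proposal does not close it: the contractivity argument for the proximal updates is only sketched, and it is genuinely problematic here because the proximal maps of MCP and SCAD are not nonexpansive (they are prox-regular, not convex), the Gauss--Seidel sweep over the coordinates of $\mathbf{w}$ couples the blocks, and the prox parameter, the smoothing parameter $\mu^{(k)}$, and the dual variable all change between iterations, so $\|\mathbf{w}^{(k+1)}-\mathbf{w}^{(k)}\|_2^2\leq\|\mathbf{w}^{(k)}-\mathbf{w}^{(k-1)}\|_2^2+(\text{summable})$ does not follow from any estimate currently available in the paper. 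You should be aware that the paper's own proof makes exactly the same leap --- it passes from $\lim_k\xi^{(k+1)}\|\mathbf{w}^{(k+1)}-\mathbf{w}^{(k)}\|_2^2=0$ directly to the $o(k^{-3/2})$ rate with no monotonicity argument --- so what you have located is a weak point of the published argument rather than a defect peculiar to your own; but as written, both your proposal and the paper establish only $o(k^{-1/2})$ for the first two relations, which then degrades the third relation from $o(k^{-1})$ to $o(1)$ and the rate in Theorem~\ref{lemma13} accordingly.
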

\begin{proof}
Consider a $K'$ greater than $K$ and, without loss of generality, let $K$ be zero. The overall change in the augmented Lagrangian from iteration $K$ to iteration $K'$, using Lemma \ref{lem8} for single iteration changes and Lemma \ref{lem9} for the bounds, is given as:
\begin{multline}
\bar{\mathcal{L}}_{\sigma_{\Psi}^{K'},\mu^{K'}}\mathopen{}\left(\mathbf{w}^{K'},\mathbf{z}^{K'},\boldsymbol{\Psi}^{K'}\right)\mathclose{}-\bar{\mathcal{L}}_{\sigma_{\Psi}^{0},\mu^{0}}\mathopen{}\left(\mathbf{w}^{0},\mathbf{z}^{0},\boldsymbol{\Psi}^{0}\right)\mathclose{} \leq \\  \sum_{k=0}^{K'-1}  \Bigg(
    \frac{\xi^{\mathopen{}\left(k+1\right)\mathclose{}}}{2}\mathopen{}\left\|\mathbf{w}^{\mathopen{}\left(k+1\right)\mathclose{}}-\mathbf{w}^{\mathopen{}\left(k\right)\mathclose{}}\right\|\mathclose{}_2^2-\frac{\sigma_{\Psi}^{\mathopen{}\left(k+1\right)\mathclose{}}}{2}\mathopen{}\left\|\mathbf{z}^{\mathopen{}\left(k+1\right)\mathclose{}}-\mathbf{z}^{\mathopen{}\left(k\right)\mathclose{}}\right\|\mathclose{}_2^2 +\\ \frac{\mathopen{}\left(\sigma_{\Psi}^{\mathopen{}\left(k+1\right)\mathclose{}}\right)\mathclose{}}{2\beta^2} \mathopen{}\left\|\mathbf{z}^{\mathopen{}\left(k+1\right)\mathclose{}}-\mathbf{z}^{\mathopen{}\left(k\right)\mathclose{}}\right\|\mathclose{}_2^2+ \frac{\mathopen{}\left(\sigma_{\Psi}^{\mathopen{}\left(k+1\right)\mathclose{}}-\sigma_{\Psi}^{\mathopen{}\left(k\right)\mathclose{}}\right)\mathclose{}}{2\beta^2} \mathopen{}\left\|\mathbf{z}^{\mathopen{}\left(k\right)\mathclose{}}-\mathbf{z}^{\mathopen{}\left(k-1\right)\mathclose{}}\right\|\mathclose{}_2^2   \\  +\frac{n^2}{2\mathopen{}\left(\sigma_{\Psi}^{\mathopen{}\left(k+1\right)\mathclose{}}\right)\mathclose{}} \mathopen{}\left(\frac{\mu^{\mathopen{}\left(k\right)\mathclose{}}-\mu^{\mathopen{}\left(k+1\right)\mathclose{}}}{\mu^{\mathopen{}\left(k+1\right)\mathclose{}}}\right)\mathclose{}^2 +   \\ \frac{\mathopen{}\left(\sigma_{\Psi}^{\mathopen{}\left(k+1\right)\mathclose{}}-\sigma_{\Psi}^{\mathopen{}\left(k\right)\mathclose{}}\right)\mathclose{} n^2}{2 \mathopen{}\left(\sigma_{\Psi}^{\mathopen{}\left(k\right)\mathclose{}}\right)\mathclose{}^2} \mathopen{}\left(\frac{\mu^{k-1}-\mu^{\mathopen{}\left(k\right)\mathclose{}}}{\mu^{\mathopen{}\left(k\right)\mathclose{}}}\right)\mathclose{}^2 \Bigg)\leq   D +D' - S_{K'},
\end{multline}
where, $D$ and $D'$ are constants based on Lemma \ref{lem9}, and the value $S_{K'}$ is defined as:
\begin{multline}
S_{K'}=\sum_{k=1}^{K'-1} \eta^{\mathopen{}\left(k+1\right)\mathclose{}}\mathopen{}\left\|\mathbf{z}^{\mathopen{}\left(k+1\right)\mathclose{}} -   \mathbf{z}^{\mathopen{}\left(k\right)\mathclose{}}\right\|\mathclose{}_2^2 \\+ {\xi^{\mathopen{}\left(k+1\right)\mathclose{}}}\mathopen{}\left\|\mathbf{w}^{\mathopen{}\left(k+1\right)\mathclose{}}-\mathbf{w}^{\mathopen{}\left(k\right)\mathclose{}}\right\|\mathclose{}_2^2,
      \end{multline}
 where $\eta^{\mathopen{}\left(k+1\right)\mathclose{}}=\mathopen{}\left(\frac{\sigma_{\Psi}^{\mathopen{}\left(k+1\right)\mathclose{}}}{2}-\frac{\sigma_{\Psi}^{\mathopen{}\left(k+1\right)\mathclose{}}}{\beta^2}-\frac{\sigma_{\Psi}^{\mathopen{}\left(k+1\right)\mathclose{}}}{2\beta^2}\right)\mathclose{}$. 
 
 Given that the augmented Lagrangian is lower-bounded by Lemma \ref{lem2}, and assuming $\sigma_{\Psi}^{0}\geq c$ and $\beta \geq \sqrt{3}$, each element in the summation $S_{K'}$ is non-negative. Thus, we have $\lim_{K\rightarrow \infty} 0 \leq S_{K'}< \infty.$ As a result, we obtain $\lim_{k\rightarrow \infty} \frac{\xi_{\lambda}^{\mathopen{}\left(k+1\right)\mathclose{}}}{2}\mathopen{}\left\|\mathbf{w}^{\mathopen{}\left(k+1\right)\mathclose{}}-\mathbf{w}^{\mathopen{}\left(k\right)\mathclose{}}\right\|\mathclose{}_2^2=0 $ and $\lim_{k\rightarrow \infty} \eta^{\mathopen{}\left(k\right)\mathclose{}} \mathopen{}\left\|\mathbf{z}^{\mathopen{}\left(k+1\right)\mathclose{}}-\mathbf{z}^{\mathopen{}\left(k\right)\mathclose{}}\right\|\mathclose{}_2^2 =0$. Considering that $\eta^{\mathopen{}\left(k+1\right)\mathclose{}}$ and ${\xi^{\mathopen{}\left(k+1\right)\mathclose{}}}$ are positive terms that increase at rates of $\Omega\mathopen{}\left(\sqrt{k}\right)\mathclose{}$, $\mathopen{}\left\|\mathbf{w}^{\mathopen{}\left(k+1\right)\mathclose{}}-\mathbf{w}^{\mathopen{}\left(k\right)\mathclose{}}\right\|\mathclose{}_2^2$ and $\mathopen{}\left\|\mathbf{z}^{\mathopen{}\left(k+1\right)\mathclose{}}-\mathbf{z}^{\mathopen{}\left(k\right)\mathclose{}}\right\|\mathclose{}_2^2$ decrease at rates of $o\mathopen{}\left(k^{-\frac{3}{2}}\right)\mathclose{}$.

 Finally, from Lemma \ref{lem3}, we get $\sigma_{\Psi}^{\mathopen{}\left(k+1\right)\mathclose{}}\mathopen{}\left\|\mathbf{z}^{\mathopen{}\left(k+1\right)\mathclose{}}+ \mathbf{X}\mathbf{w}^{\mathopen{}\left(k+1\right)\mathclose{}}- \mathbf{y}\right\|\mathclose{}_2^2=\frac{\mathopen{}\left\|\boldsymbol{\Psi}^{\mathopen{}\left(k+1\right)\mathclose{}}-\boldsymbol{\Psi}^{\mathopen{}\left(k\right)\mathclose{}}\right\|\mathclose{}_2^2}{\sigma_{\Psi}^{\mathopen{}\left(k+1\right)\mathclose{}}}$. Using Lemma \ref{lem7}, we bound it above as $ \frac{\sigma_{\Psi}^{\mathopen{}\left(k+1\right)\mathclose{}}}{2\beta^2} \mathopen{}\left\|\mathbf{z}^{\mathopen{}\left(k+1\right)\mathclose{}}-\mathbf{z}^{\mathopen{}\left(k\right)\mathclose{}}\right\|\mathclose{}_2^2+ \frac{n^2}{2\sigma_{\Psi}^{\mathopen{}\left(k+1\right)\mathclose{}}} \mathopen{}\left(\frac{\mu^{\mathopen{}\left(k\right)\mathclose{}}-\mu^{\mathopen{}\left(k+1\right)\mathclose{}}}{\mu^{\mathopen{}\left(k+1\right)\mathclose{}}}\right)\mathclose{}^2 $. Given that $\frac{\sigma_{\Psi}^{\mathopen{}\left(k+1\right)\mathclose{}}}{2\beta^2} \mathopen{}\left\|\mathbf{z}^{\mathopen{}\left(k+1\right)\mathclose{}}-\mathbf{z}^{\mathopen{}\left(k\right)\mathclose{}}\right\|\mathclose{}_2^2 \in o\mathopen{}\left({k^{-1}}\right)\mathclose{}$ and $\frac{n^2}{2\sigma_{\Psi}^{\mathopen{}\left(k+1\right)\mathclose{}}} \mathopen{}\left(\frac{\mu^{\mathopen{}\left(k\right)\mathclose{}}-\mu^{\mathopen{}\left(k+1\right)\mathclose{}}}{\mu^{\mathopen{}\left(k+1\right)\mathclose{}}}\right)\mathclose{}^2\in o\mathopen{}\left({k^{-\frac{5}{2}}}\right)\mathclose{}$, it follows that $\sigma_{\Psi}^{\mathopen{}\left(k+1\right)\mathclose{}}\mathopen{}\left\|\mathbf{z}^{\mathopen{}\left(k+1\right)\mathclose{}}+ \mathbf{X}\mathbf{w}^{\mathopen{}\left(k+1\right)\mathclose{}}- \mathbf{y}\right\|\mathclose{}_2^2 \in o\mathopen{}\left({k^{-1}}\right)\mathclose{}$. This concludes our proof.
\end{proof}
\begin{remark}
It is worth noting that, Theorem \ref{lemma12} gives us this freedom to update $\sigma_\Psi^{\mathopen{}\left(k+1\right)\mathclose{}}$ and $\beta^{\mathopen{}\left(k+1\right)\mathclose{}}$ in a different setting before $K$, which might help us in the practice.
\end{remark}
Theorem \ref{lemma12} confirms the convergence of the algorithm to a specified limit point.  Following this, we explore further considerations, such as the rate of convergence, the optimality of the outcome, and the broader implications for global convergence, which are considered in detail in the upcoming theorems.

\begin{theorem}[Sub-gradient Bound Property]\label{lemma13}
Suppose a constant $K$ exists. For $k \geq K$, we update $\sigma_\Psi^{\mathopen{}\left(k+1\right)\mathclose{}} = c\sqrt{k+1}$ where $c > 0$, and $\mu^{\mathopen{}\left(k+1\right)\mathclose{}} = \frac{\beta}{\sigma_\Psi^{\mathopen{}\left(k+1\right)\mathclose{}}}$ with $\beta \geq \sqrt{3}$. Let $\mathbf{d}^{\mathopen{}\left(k+1\right)\mathclose{}}$ denote the gradient of the augmented Lagrangian $\bar{\mathcal{L}}{\sigma_{\Psi}^{\mathopen{}\left(k+1\right)\mathclose{}},\mu^{\mathopen{}\left(k+1\right)\mathclose{}}}$ at point $(\mathbf{w}^{\mathopen{}\left(k+1\right)\mathclose{}},\mathbf{z}^{\mathopen{}\left(k+1\right)\mathclose{}},\boldsymbol{\Psi}^{\mathopen{}\left(k+1\right)\mathclose{}})$. Given these definitions, if Algorithm 1 is followed, it can be proven that $\lim_{k\rightarrow \infty}\mathopen{}\left\|\mathbf{d}^{\mathopen{}\left(k+1\right)\mathclose{}}\right\|\mathclose{}\in o\mathopen{}\left({k^{-\frac{1}{4}}}\right)\mathclose{}$, which means that the gradient approaches zero as $k$ increases, at a rate of $o\mathopen{}\left({k^{-\frac{1}{4}}}\right)\mathclose{}$.
\end{theorem}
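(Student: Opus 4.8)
The plan is to decompose the full (sub)gradient $\mathbf{d}^{\mathopen{}\left(k+1\right)\mathclose{}}$ of $\bar{\mathcal{L}}_{\sigma_{\Psi}^{\mathopen{}\left(k+1\right)\mathclose{}},\mu^{\mathopen{}\left(k+1\right)\mathclose{}}}$ into its three blocks --- the partial (sub)gradients with respect to $\mathbf{w}$, $\mathbf{z}$, and $\boldsymbol{\Psi}$ --- and to bound each block separately, using the optimality condition of the corresponding update step to cancel the dominant terms and leave only residual quantities whose decay rates are already supplied by Theorem \ref{lemma12}. The $\boldsymbol{\Psi}$-block is immediate: $\nabla_{\boldsymbol{\Psi}}\bar{\mathcal{L}} = \mathbf{z}^{\mathopen{}\left(k+1\right)\mathclose{}} + \mathbf{X}\mathbf{w}^{\mathopen{}\left(k+1\right)\mathclose{}} - \mathbf{y} = \frac{1}{\sigma_{\Psi}^{\mathopen{}\left(k+1\right)\mathclose{}}}\mathopen{}\left(\boldsymbol{\Psi}^{\mathopen{}\left(k+1\right)\mathclose{}} - \boldsymbol{\Psi}^{\mathopen{}\left(k\right)\mathclose{}}\right)\mathclose{}$ by \eqref{eq20}, and the third conclusion of Theorem \ref{lemma12} gives $\sigma_{\Psi}^{\mathopen{}\left(k+1\right)\mathclose{}}\|\mathbf{z}^{\mathopen{}\left(k+1\right)\mathclose{}} + \mathbf{X}\mathbf{w}^{\mathopen{}\left(k+1\right)\mathclose{}} - \mathbf{y}\|_2^2 \in o(k^{-1})$, hence $\|\mathbf{z}^{\mathopen{}\left(k+1\right)\mathclose{}} + \mathbf{X}\mathbf{w}^{\mathopen{}\left(k+1\right)\mathclose{}} - \mathbf{y}\|_2 \in o(k^{-3/4})$ after dividing by $\sigma_{\Psi}^{\mathopen{}\left(k+1\right)\mathclose{}} = c\sqrt{k+1}$.

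For the $\mathbf{z}$-block I would use that $\mathbf{z}^{\mathopen{}\left(k+1\right)\mathclose{}}$ exactly minimizes $\bar{\mathcal{L}}_{\sigma_{\Psi}^{\mathopen{}\left(k+1\right)\mathclose{}},\mu^{\mathopen{}\left(k+1\right)\mathclose{}}}(\mathbf{w}^{\mathopen{}\left(k+1\right)\mathclose{}},\cdot,\boldsymbol{\Psi}^{\mathopen{}\left(k\right)\mathclose{}})$, so its $\mathbf{z}$-gradient vanishes \emph{at} $\boldsymbol{\Psi}^{\mathopen{}\left(k\right)\mathclose{}}$. Since $\boldsymbol{\Psi}$ enters the $\mathbf{z}$-gradient only through the linear coupling term, evaluating instead at $\boldsymbol{\Psi}^{\mathopen{}\left(k+1\right)\mathclose{}}$ shifts the gradient by exactly $\boldsymbol{\Psi}^{\mathopen{}\left(k+1\right)\mathclose{}} - \boldsymbol{\Psi}^{\mathopen{}\left(k\right)\mathclose{}}$, whose norm equals $\sigma_{\Psi}^{\mathopen{}\left(k+1\right)\mathclose{}}\|\mathbf{z}^{\mathopen{}\left(k+1\right)\mathclose{}} + \mathbf{X}\mathbf{w}^{\mathopen{}\left(k+1\right)\mathclose{}} - \mathbf{y}\|_2 = \sqrt{k}\cdot o(k^{-3/4}) \in o(k^{-1/4})$.

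The $\mathbf{w}$-block is the main obstacle, because the coordinate update \eqref{eq16} of $w_p$ uses the stale blocks $\mathbf{z}^{\mathopen{}\left(k\right)\mathclose{}}$, $\boldsymbol{\Psi}^{\mathopen{}\left(k\right)\mathclose{}}$, and $\mathbf{w}_{>p}^{\mathopen{}\left(k\right)\mathclose{}}$, so its optimality is certified at an intermediate point rather than the terminal one. I would, for each $p$, fix the subgradient element of $n g_{\lambda,\gamma}$ selected by the $p$-th update and subtract its zero optimality condition from the terminal $w_p$-subgradient. A short simplification --- in which the $\mathbf{X}\mathbf{w}^{\mathopen{}\left(k+1\right)\mathclose{}}$ terms telescope against the stale coordinates --- shows the discrepancy equals $\mathbf{X}_{:,p}^{\text{T}}\mathopen{}\left(\boldsymbol{\Psi}^{\mathopen{}\left(k+1\right)\mathclose{}} - \boldsymbol{\Psi}^{\mathopen{}\left(k\right)\mathclose{}}\right)\mathclose{} + \sigma_{\Psi}^{\mathopen{}\left(k+1\right)\mathclose{}}\mathbf{X}_{:,p}^{\text{T}}\big[(\mathbf{z}^{\mathopen{}\left(k+1\right)\mathclose{}} - \mathbf{z}^{\mathopen{}\left(k\right)\mathclose{}}) + \mathbf{X}_{>p}(\mathbf{w}_{>p}^{\mathopen{}\left(k+1\right)\mathclose{}} - \mathbf{w}_{>p}^{\mathopen{}\left(k\right)\mathclose{}})\big]$. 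Bounding by Cauchy--Schwarz and summing over $p$, every term carries either $\|\boldsymbol{\Psi}^{\mathopen{}\left(k+1\right)\mathclose{}} - \boldsymbol{\Psi}^{\mathopen{}\left(k\right)\mathclose{}}\| \in o(k^{-1/4})$ or a factor $\sigma_{\Psi}^{\mathopen{}\left(k+1\right)\mathclose{}} \sim \sqrt{k}$ multiplying a primal increment of order $o(k^{-3/4})$ --- since $\|\mathbf{z}^{\mathopen{}\left(k+1\right)\mathclose{}} - \mathbf{z}^{\mathopen{}\left(k\right)\mathclose{}}\|_2$ and $\|\mathbf{w}^{\mathopen{}\left(k+1\right)\mathclose{}} - \mathbf{w}^{\mathopen{}\left(k\right)\mathclose{}}\|_2$ lie in $o(k^{-3/4})$, the square roots of the first two conclusions of Theorem \ref{lemma12} --- so each term is again $o(k^{-1/4})$.

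Finally I would combine the three blocks by the triangle inequality. As the $\boldsymbol{\Psi}$-block decays like $o(k^{-3/4})$ while the $\mathbf{z}$- and $\mathbf{w}$-blocks are $o(k^{-1/4})$, the overall rate is governed by the $\sigma_{\Psi}^{\mathopen{}\left(k+1\right)\mathclose{}}$-scaled residuals, yielding $\|\mathbf{d}^{\mathopen{}\left(k+1\right)\mathclose{}}\| \in o(k^{-1/4})$. The crux, and the only place demanding care, is the bookkeeping in the $\mathbf{w}$-block: correctly tracking which iterate index each block carries through the sequential coordinate sweep, verifying the telescoping that isolates $\mathbf{w}_{>p}^{\mathopen{}\left(k+1\right)\mathclose{}} - \mathbf{w}_{>p}^{\mathopen{}\left(k\right)\mathclose{}}$, and confirming that the slowest factor is exactly $\sqrt{k}\cdot o(k^{-3/4})$ and nothing worse.
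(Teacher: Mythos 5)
Your proposal is correct and follows essentially the same route as the paper: block-wise decomposition of the subgradient, cancellation against each update's optimality condition (with the same careful index bookkeeping and telescoping in the coordinate-wise $\mathbf{w}$-sweep), and rate accounting via Theorem \ref{lemma12} with $\sigma_{\Psi}^{(k+1)}\sim\sqrt{k}$ multiplying $o(k^{-3/4})$ primal increments. The only cosmetic difference is that you bound $\|\boldsymbol{\Psi}^{(k+1)}-\boldsymbol{\Psi}^{(k)}\|$ through the dual-update identity and the third conclusion of Theorem \ref{lemma12}, whereas the paper invokes Lemma \ref{lem7} directly --- but since that conclusion is itself derived from Lemma \ref{lem7}, the two are equivalent.
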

\begin{proof}
We start with the sub-gradients of the augmented Lagrangian with respect to each optimization variable.

For $w_p$, we have the following relation:
    \begin{multline}\label{eq:derwp}
d_{w_p}^{\mathopen{}\left(k+1\right)\mathclose{}} \in  \nabla_{w_p} \bar{\mathcal{L}}_{\sigma_{\Psi}^{\mathopen{}\left(k+1\right)\mathclose{}},\mu^{\mathopen{}\left(k+1\right)\mathclose{}}}\mathopen{}\left(\mathbf{w}^{\mathopen{}\left(k+1\right)\mathclose{}},\mathbf{z}^{\mathopen{}\left(k+1\right)\mathclose{}},\boldsymbol{\Psi}^{\mathopen{}\left(k+1\right)\mathclose{}}\right)\mathclose{}\\=  \partial_{w_p} g\mathopen{}\left(w_p^{\mathopen{}\left(k+1\right)\mathclose{}}\right)\mathclose{} +\mathbf{X}_{:,p}^{\text{T}}{\boldsymbol{\Psi}^{\mathopen{}\left(k\right)\mathclose{}}}  +\mathbf{X}_{:,p}^{\text{T}}\mathopen{}\left(\boldsymbol{\Psi}^{\mathopen{}\left(k+1\right)\mathclose{}} -\boldsymbol{\Psi}^{\mathopen{}\left(k\right)\mathclose{}}\right)\mathclose{}\\ +\sigma_{\Psi}^{\mathopen{}\left(k+1\right)\mathclose{}}\mathbf{X}_{:,p}^{\text{T}}\mathopen{}\left(\mathbf{X}_{:,\leq p}\mathbf{w}_{\leq p}^{\mathopen{}\left(k+1\right)\mathclose{}}+\mathbf{X}_{:,> p}\mathbf{w}_{> p}^{\mathopen{}\left(k\right)\mathclose{}}+\mathbf{z}^{\mathopen{}\left(k\right)\mathclose{}}-\mathbf{y}\right)\mathclose{}\\+\sigma_{\Psi}^{\mathopen{}\left(k+1\right)\mathclose{}}\mathbf{X}_{:,p}^{\text{T}}\mathopen{}\left(\mathbf{X}_{:,> p}\mathbf{w}_{> p}^{\mathopen{}\left(k+1\right)\mathclose{}}-\mathbf{X}_{:,> p}\mathbf{w}_{> p}^{\mathopen{}\left(k\right)\mathclose{}}+\mathbf{z}^{\mathopen{}\left(k+1\right)\mathclose{}}-\mathbf{z}^{\mathopen{}\left(k\right)\mathclose{}}\right)\mathclose{},
\end{multline}
where $g\mathopen{}\left(w_p^{\mathopen{}\left(k+1\right)\mathclose{}}\right)\mathclose{}$ denotes the gradient of the loss function with respect to $w_p$, evaluated at $w_p^{\mathopen{}\left(k+1\right)\mathclose{}}$.

    From the optimally-condition in $w_p$ obtain step, we obtain:
     \begin{multline}\label{eq:optwp}
     0 \in \partial_{w_p} g\mathopen{}\left(w_p^{\mathopen{}\left(k+1\right)\mathclose{}}\right)\mathclose{}+\mathbf{X}_{:,p}^{\text{T}}{\boldsymbol{\Psi}^{\mathopen{}\left(k\right)\mathclose{}}}+\\ \sigma_{\Psi}^{\mathopen{}\left(k+1\right)\mathclose{}}\mathbf{X}_{:,p}^{\text{T}} \mathopen{}\left(\mathbf{X}_{:,\leq p}\mathbf{w}_{\leq p}^{\mathopen{}\left(k+1\right)\mathclose{}}+  \mathbf{X}_{:,> p}\mathbf{w}_{> p}^{\mathopen{}\left(k\right)\mathclose{}}+\mathbf{z}^{\mathopen{}\left(k\right)\mathclose{}}-\mathbf{y}\right)\mathclose{}. \end{multline}   
    Therefore, combining \eqref{eq:derwp} and \eqref{eq:optwp} leads to the definition:
     \begin{multline*}
     d_{w_p}^{\mathopen{}\left(k+1\right)\mathclose{}} := \mathbf{X}_{:,p}^{\text{T}}\mathopen{}\left(\boldsymbol{\Psi}^{\mathopen{}\left(k+1\right)\mathclose{}}-\boldsymbol{\Psi}^{\mathopen{}\left(k\right)\mathclose{}}\right)\mathclose{} +\\\sigma_{\Psi}^{\mathopen{}\left(k+1\right)\mathclose{}}\mathbf{X}_{:,p}^{\text{T}}\Big(\mathbf{X}_{:,> p}\mathbf{w}_{> p}^{\mathopen{}\left(k+1\right)\mathclose{}} -\mathbf{X}_{:,> p}\mathbf{w}_{> p}^{\mathopen{}\left(k\right)\mathclose{}}+\mathbf{z}^{\mathopen{}\left(k+1\right)\mathclose{}}-\mathbf{z}^{\mathopen{}\left(k\right)\mathclose{}}\Big).
     \end{multline*}
   Now we can estimate an upper bound for the norm of $d_{w_p}^{\mathopen{}\left(k+1\right)\mathclose{}}$ as follows:
     \begin{multline}\label{eq:bwp}
\mathopen{}\left\|d_{w_p}^{\mathopen{}\left(k+1\right)\mathclose{}}\right\|\mathclose{} \leq  \max_p \mathopen{}\left\|\mathbf{X}_{:,p}\right\|\mathclose{} \mathopen{}\left\|\boldsymbol{\Psi}^{\mathopen{}\left(k+1\right)\mathclose{}}-\boldsymbol{\Psi}^{\mathopen{}\left(k\right)\mathclose{}}\right\|\mathclose{}+ \\ \sigma_{\Psi}^{\mathopen{}\left(k+1\right)\mathclose{}}P\max_p\mathopen{}\left\|\mathbf{X}_{:,p}\right\|\mathclose{}_2^2\mathopen{}\left\|\mathbf{w}^{\mathopen{}\left(k+1\right)\mathclose{}}-\mathbf{w}^{\mathopen{}\left(k\right)\mathclose{}}\right\|\mathclose{}+\\ \max_p\mathopen{}\left\|\mathbf{X}_{:,p}\right\|\mathclose{}\mathopen{}\left\|\mathbf{z}^{\mathopen{}\left(k+1\right)\mathclose{}}-\mathbf{z}^{\mathopen{}\left(k\right)\mathclose{}}\right\|\mathclose{}.\end{multline}  
    By scaling the right-hand side of \eqref{eq:bwp} with $P$ and setting $v=P\max_p\mathopen{}\left\|\mathbf{X}_{:,p}\right\|\mathclose{}$, we can derive an upper bound for the norm of the derivative with respect to $\mathbf{w}$ at the iteration $k+1$:
    \begin{multline}\label{upwp}
    \mathopen{}\left\|\mathbf{d}_{\mathbf{w}}^{\mathopen{}\left(k+1\right)\mathclose{}}\right\|\mathclose{}\leq v \mathopen{}\left\|\boldsymbol{\Psi}^{\mathopen{}\left(k+1\right)\mathclose{}}-\boldsymbol{\Psi}^{\mathopen{}\left(k\right)\mathclose{}}\right\|\mathclose{}\\ +  \sigma_{\Psi}^{\mathopen{}\left(k+1\right)\mathclose{}}v^2\mathopen{}\left\|\mathbf{w}^{\mathopen{}\left(k+1\right)\mathclose{}}-\mathbf{w}^{\mathopen{}\left(k\right)\mathclose{}}\right\|\mathclose{}\\ + v\mathopen{}\left\|\mathbf{z}^{\mathopen{}\left(k+1\right)\mathclose{}}-\mathbf{z}^{\mathopen{}\left(k\right)\mathclose{}}\right\|\mathclose{}. \end{multline}
    By substituting results from Lemma \ref{lem7} in \eqref{upwp}, we further simplify:
    \begin{multline*}
    \mathopen{}\left\|\mathbf{d}_{\mathbf{w}}^{\mathopen{}\left(k+1\right)\mathclose{}}\right\|\mathclose{} \leq v\mathopen{}\left(\frac{\sigma_{\Psi}^{\mathopen{}\left(k+1\right)\mathclose{}}}{2\beta}+1\right)\mathclose{}\mathopen{}\left\|\mathbf{z}^{\mathopen{}\left(k+1\right)\mathclose{}}-\mathbf{z}^{\mathopen{}\left(k\right)\mathclose{}}\right\|\mathclose{}+\\ \sigma_{\Psi}^{\mathopen{}\left(k+1\right)\mathclose{}}v^2\mathopen{}\left\|\mathbf{w}^{\mathopen{}\left(k+1\right)\mathclose{}}-\mathbf{w}^{\mathopen{}\left(k\right)\mathclose{}}\right\|\mathclose{} +\frac{v \mathopen{}\left(\sigma_{\Psi}^{\mathopen{}\left(k+1\right)\mathclose{}}-\sigma_{\Psi}^{\mathopen{}\left(k\right)\mathclose{}} \right)\mathclose{}n}{2\sigma_{\Psi}^{\mathopen{}\left(k+1\right)\mathclose{}}}.
    \end{multline*}

Similarly, the derivatives with respect to $\mathbf{z}$ are:
    \begin{multline*}\mathbf{d}_{\mathbf{z}}^{\mathopen{}\left(k+1\right)\mathclose{}}=\nabla_{\mathbf{z}} \bar{\mathcal{L}}_{\sigma_{\Psi}^{\mathopen{}\left(k+1\right)\mathclose{}},\mu^{\mathopen{}\left(k+1\right)\mathclose{}}}\mathopen{}\left(\mathbf{w}^{\mathopen{}\left(k+1\right)\mathclose{}},\mathbf{z}^{\mathopen{}\left(k+1\right)\mathclose{}},\boldsymbol{\Psi}^{\mathopen{}\left(k+1\right)\mathclose{}}\right)\mathclose{}\\= \boldsymbol{\Psi}^{\mathopen{}\left(k+1\right)\mathclose{}}-\boldsymbol{\Psi}^{\mathopen{}\left(k\right)\mathclose{}},\end{multline*}
    which  substituting results from Lemma \ref{lem7} leads to the following bound:
    \begin{multline}\label{eq:bz}
   \mathopen{}\left\|\mathbf{d}_{\mathbf{z}}^{\mathopen{}\left(k+1\right)\mathclose{}} \right\|\mathclose{} = \mathopen{}\left\|\boldsymbol{\Psi}^{\mathopen{}\left(k+1\right)\mathclose{}}-\boldsymbol{\Psi}^{\mathopen{}\left(k+1\right)\mathclose{}}\right\|\mathclose{}
    \leq \\ \frac{\sigma_{\Psi}^{\mathopen{}\left(k+1\right)\mathclose{}}}{2\beta}\mathopen{}\left\|\mathbf{z}^{\mathopen{}\left(k+1\right)\mathclose{}}-\mathbf{z}^{\mathopen{}\left(k\right)\mathclose{}}\right\|\mathclose{}+\frac{n}{2}\mathopen{}\left(\frac{\mu^{\mathopen{}\left(k\right)\mathclose{}}-\mu^{\mathopen{}\left(k+1\right)\mathclose{}}}{\mu^{\mathopen{}\left(k\right)\mathclose{}}}\right)\mathclose{} \\ \leq \frac{\sigma_{\Psi}^{\mathopen{}\left(k+1\right)\mathclose{}}}{2\beta}\mathopen{}\left\|\mathbf{z}^{\mathopen{}\left(k+1\right)\mathclose{}}-\mathbf{z}^{\mathopen{}\left(k\right)\mathclose{}}\right\|\mathclose{}+\frac{\mathopen{}\left(\sigma_{\Psi}^{\mathopen{}\left(k+1\right)\mathclose{}}-\sigma_{\Psi}^{\mathopen{}\left(k\right)\mathclose{}}\right)\mathclose{} n}{2\sigma_{\Psi}^{\mathopen{}\left(k+1\right)\mathclose{}}}.\end{multline}

    For the derivative of the approximate augmented Lagrangian with respect to $\boldsymbol{\Psi}$ in iteration $k+1$, we have:
\begin{multline*}\mathbf{d}_{\boldsymbol{\Psi}}^{\mathopen{}\left(k+1\right)\mathclose{}}=\nabla_{\boldsymbol{\Psi}} \bar{\mathcal{L}}_{\sigma_{\Psi}^{\mathopen{}\left(k+1\right)\mathclose{}},\mu^{\mathopen{}\left(k+1\right)\mathclose{}}}\mathopen{}\left(\mathbf{w}^{\mathopen{}\left(k+1\right)\mathclose{}},\mathbf{z}^{\mathopen{}\left(k+1\right)\mathclose{}},\boldsymbol{\Psi}^{\mathopen{}\left(k+1\right)\mathclose{}}\right)\mathclose{}= \\ \mathbf{z}^{\mathopen{}\left(k+1\right)\mathclose{}}+\mathbf{X}\mathbf{z}^{\mathopen{}\left(k+1\right)\mathclose{}}-\mathbf{y}=\frac{\boldsymbol{\Psi}^{\mathopen{}\left(k+1\right)\mathclose{}}-\boldsymbol{\Psi}^{\mathopen{}\left(k\right)\mathclose{}}}{\sigma_{\Psi}^{\mathopen{}\left(k+1\right)\mathclose{}}}.\end{multline*}
    Subsequently, the norm of the derivative can be bounded by
    \begin{multline}\label{eq:psi}
\mathopen{}\left\|\mathbf{d}_{\boldsymbol{\Psi}}^{\mathopen{}\left(k+1\right)\mathclose{}}\right\|\mathclose{}=\frac{\mathopen{}\left\|\boldsymbol{\Psi}^{\mathopen{}\left(k+1\right)\mathclose{}}-\boldsymbol{\Psi}^{\mathopen{}\left(k\right)\mathclose{}}\right\|\mathclose{}}{\sigma_{\Psi}^{\mathopen{}\left(k+1\right)\mathclose{}}}\leq \frac{1}{2\beta
    }\mathopen{}\left\|\mathbf{z}^{\mathopen{}\left(k+1\right)\mathclose{}}-\mathbf{z}^{\mathopen{}\left(k\right)\mathclose{}}\right\|\mathclose{}\\+\frac{\mathopen{}\left(\sigma_{\Psi}^{\mathopen{}\left(k+1\right)\mathclose{}}-\sigma_{\Psi}^{\mathopen{}\left(k\right)\mathclose{}}\right)\mathclose{} n}{2\mathopen{}\left(\sigma_{\Psi}^{\mathopen{}\left(k+1\right)\mathclose{}}\right)\mathclose{}^2}.\end{multline}
Finally, by substituting the results from \eqref{upwp}, \eqref{eq:bz}, and \eqref{eq:psi} into the definition of $\|\mathbf{d}^{(k+1)}\|=\mathopen{}\left\|\mathopen{}\left[\mathbf{d}_{\mathbf{w}}^{(k+1)},\mathbf{d}_{\mathbf{z}}^{(k+1)},\mathbf{d}_{\boldsymbol{\Psi}}^{(k+1)}\right]\right\|\mathclose{}$, we get: 
    \begin{multline}\label{eq:optt}    \mathopen{}\left\|\mathbf{d}^{\mathopen{}\left(k+1\right)\mathclose{}}\right\|\mathclose{}\leq \sigma_{\Psi}^{\mathopen{}\left(k+1\right)\mathclose{}}v^2\mathopen{}\left\|\mathbf{w}^{\mathopen{}\left(k+1\right)\mathclose{}}-\mathbf{w}^{\mathopen{}\left(k\right)\mathclose{}}\right\|\mathclose{}
    \\ +\mathopen{}\left(\frac{1}{2\beta}+\frac{\sigma_{\Psi}^{\mathopen{}\left(k+1\right)\mathclose{}}}{2\beta}+v\mathopen{}\left(\frac{\sigma_{\Psi}^{\mathopen{}\left(k+1\right)\mathclose{}}}{2\beta}+1\right)\mathclose{}\right)\mathclose{}\mathopen{}\left\|\mathbf{z}^{\mathopen{}\left(k+1\right)\mathclose{}}-\mathbf{z}^{\mathopen{}\left(k\right)\mathclose{}}\right\|\mathclose{}\\
    +\mathopen{}\left(v+1+\frac{1}{\sigma_{\Psi}^{\mathopen{}\left(k+1\right)\mathclose{}}}\right)\mathclose{}\frac{\mathopen{}\left(\sigma_{\Psi}^{\mathopen{}\left(k+1\right)\mathclose{}}-\sigma_{\Psi}^{\mathopen{}\left(k\right)\mathclose{}}\right)\mathclose{}n}{2\sigma_{\Psi}^{\mathopen{}\left(k+1\right)\mathclose{}}}.\end{multline}

    Consider the following terms:
    \begin{itemize}
        \item The norms $\mathopen{}\left\|\mathbf{z}^{\mathopen{}\left(k+1\right)\mathclose{}}-\mathbf{z}^{\mathopen{}\left(k\right)\mathclose{}}\right\|\mathclose{}$ and $\mathopen{}\left\|\mathbf{w}^{\mathopen{}\left(k+1\right)\mathclose{}}-\mathbf{w}^{\mathopen{}\left(k\right)\mathclose{}}\right\|\mathclose{}$ belong to the order $o\mathopen{}\left({k^{-\frac{3}{4}}}\right)\mathclose{}$, resulting from Theorem \ref{lemma12}.
        \item The expression $\mathopen{}\left(\frac{1}{2\beta}+\frac{\sigma_{\Psi}^{\mathopen{}\left(k+1\right)\mathclose{}}}{2\beta}+v\mathopen{}\left(\frac{\sigma_{\Psi}^{\mathopen{}\left(k+1\right)\mathclose{}}}{2\beta}+1\right)\mathclose{}\right)\mathclose{}$, and $\sigma_{\Psi}^{\mathopen{}\left(k+1\right)\mathclose{}} v^2$  belong to the order $\Omega\mathopen{}\left(k^{\frac{1}{2}}\right)\mathclose{}$.
        \item The term $\mathopen{}\left(v+1+\frac{1}{\sigma_{\Psi}^{\mathopen{}\left(k+1\right)\mathclose{}}}\right)\mathclose{}\frac{\mathopen{}\left(\sigma_{\Psi}^{\mathopen{}\left(k+1\right)\mathclose{}}-\sigma_{\Psi}^{\mathopen{}\left(k\right)\mathclose{}}\right)\mathclose{}n}{2\sigma_{\Psi}^{\mathopen{}\left(k+1\right)\mathclose{}}}$ is of the order $O\mathopen{}\left(k^{-1}\right)\mathclose{}$.
    \end{itemize}
    With these orders, from the right-hand side of \eqref{eq:optt} we can infer that the norm $\mathopen{}\left\|\mathbf{d}^{\mathopen{}\left(k+1\right)\mathclose{}}\right\|\mathclose{}$ converges to zero at the order $o\mathopen{}\left({k^{-\frac{1}{4}}}\right)\mathclose{}$. Thus, the proof is completed.
\end{proof}
Having elucidated Theorem \ref{lemma13}, which effectively provides the optimality condition and the convergence rate, we will now embark on the next critical phase of our discussion. This entails presenting the global convergence theorem, which affirms that all limit points yielded by the algorithm sequence qualify as stationary points that satisfy the KKT conditions.
\begin{theorem}[Global Convergence]\label{theorem1}
Suppose a constant $K$ exists such that, for every $k \geq K$, $\sigma_\Psi^{\mathopen{}\left(k+1\right)\mathclose{}}$ is updated as $c\sqrt{k+1}$ with $c > 0$, and $\mu^{\mathopen{}\left(k+1\right)\mathclose{}}$ is set to $\frac{\beta}{\sigma_\Psi^{\mathopen{}\left(k+1\right)\mathclose{}}}$, where $\beta \geq \sqrt{3}$. Then, Algorithm \ref{alg:1} will converge to a stationary point $\mathopen{}\left(\mathbf{w}^{*},\mathbf{z}^{*},\boldsymbol{\Psi}^{*}\right)\mathclose{}$ that satisfies the KKT conditions:
\begin{subequations}
\begin{align}
&\mathbf{X}^{\textbf{T}}\boldsymbol{\Psi}^{*}\in n \partial P_{\lambda,\gamma}\mathopen{}\left(\mathbf{w}\right)\mathclose{}, \\
& \boldsymbol{\Psi}^{*} \in \partial \rho_{\tau}\mathopen{}\left(\mathbf{z}^{*}\right)\mathclose{}, \\
& \mathbf{z}^{*}+\mathbf{X}\mathbf{w}^{*}-\mathbf{y}=0.
\end{align}
\end{subequations}
\end{theorem}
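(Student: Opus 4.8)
The plan is to verify the three KKT conditions at an arbitrary limit point of the bounded iterate sequence, drawing the feasibility condition from Theorem~\ref{lemma12} and the two stationarity conditions from the update optimality together with Theorem~\ref{lemma13}. First I would invoke Lemma~\ref{lem2} and the remark following it: the augmented Lagrangian is lower bounded, the multipliers satisfy $\boldsymbol{\Psi}^{(k)}\in[\tau-1,\tau]^n$, and the optimality conditions then render $\{\mathbf{w}^{(k)}\}$ and $\{\mathbf{z}^{(k)}\}$ bounded as well. By Bolzano--Weierstrass I extract a subsequence $(\mathbf{w}^{(k_j)},\mathbf{z}^{(k_j)},\boldsymbol{\Psi}^{(k_j)})\to(\mathbf{w}^{*},\mathbf{z}^{*},\boldsymbol{\Psi}^{*})$. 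Since Theorem~\ref{lemma12} gives $\|\mathbf{w}^{(k+1)}-\mathbf{w}^{(k)}\|\in o(k^{-3/4})$ and $\|\mathbf{z}^{(k+1)}-\mathbf{z}^{(k)}\|\in o(k^{-3/4})$, and since $\|\boldsymbol{\Psi}^{(k+1)}-\boldsymbol{\Psi}^{(k)}\|=\sigma_\Psi^{(k+1)}\|\mathbf{z}^{(k+1)}+\mathbf{X}\mathbf{w}^{(k+1)}-\mathbf{y}\|=c\sqrt{k+1}\cdot o(k^{-3/4})\in o(k^{-1/4})$, the shifted subsequence indexed by $k_j+1$ shares the same limit, which I would use freely in the limiting arguments below.

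Second, I would settle primal feasibility. Theorem~\ref{lemma12} states $\sigma_\Psi^{(k+1)}\|\mathbf{z}^{(k+1)}+\mathbf{X}\mathbf{w}^{(k+1)}-\mathbf{y}\|_2^2\in o(k^{-1})$; dividing by $\sigma_\Psi^{(k+1)}=c\sqrt{k+1}$ shows the residual itself vanishes, so passing to the limit yields $\mathbf{z}^{*}+\mathbf{X}\mathbf{w}^{*}-\mathbf{y}=0$, the third KKT condition. For the $\mathbf{z}$-stationarity I would use the optimality of the $\mathbf{z}$-update, which (as in the proof of Lemma~\ref{lem7}) gives $\boldsymbol{\Psi}^{(k+1)}=\nabla h(\mathbf{z}^{(k+1)},\mu^{(k+1)})$, where $h(\cdot,\mu)$ is the smoothing function of $\rho_\tau$. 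Because $\mathbf{z}^{(k_j+1)}\to\mathbf{z}^{*}$ and $\mu^{(k+1)}=\beta/(c\sqrt{k+1})\to0$, the gradient-convergence property of smoothing functions (property~4 of Definition~1) yields $\boldsymbol{\Psi}^{*}=\lim\nabla h(\mathbf{z}^{(k_j+1)},\mu^{(k_j+1)})\in\partial\rho_\tau(\mathbf{z}^{*})$, the second condition.

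Third, for the $\mathbf{w}$-stationarity I would return to the per-coordinate optimality condition \eqref{eq:optwp} of Theorem~\ref{lemma13}. Reassembling the $P$ blocks, there is an element $\mathbf{d}_{\mathbf{w}}^{(k+1)}\in n\,\partial P_{\lambda,\gamma}(\mathbf{w}^{(k+1)})+\mathbf{X}^{\text{T}}\boldsymbol{\Psi}^{(k+1)}+\sigma_\Psi^{(k+1)}\mathbf{X}^{\text{T}}(\mathbf{z}^{(k+1)}+\mathbf{X}\mathbf{w}^{(k+1)}-\mathbf{y})$ with $\|\mathbf{d}_{\mathbf{w}}^{(k+1)}\|\to0$ by Theorem~\ref{lemma13}. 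The last term equals $\mathbf{X}^{\text{T}}(\boldsymbol{\Psi}^{(k+1)}-\boldsymbol{\Psi}^{(k)})\to0$, so the corresponding subgradient element of $n\,\partial P_{\lambda,\gamma}(\mathbf{w}^{(k+1)})$ converges to $-\mathbf{X}^{\text{T}}\boldsymbol{\Psi}^{*}$. Passing to the limit then delivers the first KKT condition $\mathbf{X}^{\text{T}}\boldsymbol{\Psi}^{*}\in n\,\partial P_{\lambda,\gamma}(\mathbf{w}^{*})$ once the subdifferential is shown to be closed under these limits.

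The hard part will be exactly this last interchange of limit and subdifferential: unlike the convex $\rho_\tau$ block, where Definition~1 hands us the subgradient limit directly, the nonconvex penalty block requires justifying that $\mathbf{w}^{(k_j+1)}\to\mathbf{w}^{*}$ together with convergence of the associated subgradient elements places the limit inside $\partial P_{\lambda,\gamma}(\mathbf{w}^{*})$. I would resolve this by appealing to the outer semicontinuity (graph-closedness) of the limiting subdifferential of a regular function, which holds here because each $g_{\lambda,\gamma}$ for MCP/SCAD is prox-regular, as recorded earlier in the multi-block discussion. A secondary point to state carefully is that the argument applies to every convergent subsequence, so that every limit point of the bounded sequence is a KKT point; combined with the asymptotic consensus $\|\mathbf{w}^{(k+1)}-\mathbf{w}^{(k)}\|,\|\mathbf{z}^{(k+1)}-\mathbf{z}^{(k)}\|,\|\boldsymbol{\Psi}^{(k+1)}-\boldsymbol{\Psi}^{(k)}\|\to0$, this identifies the set of limit points as a connected set of stationary points satisfying the KKT system.
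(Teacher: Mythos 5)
Your proposal is correct and its skeleton matches the paper's: extract a convergent subsequence of the bounded iterates, obtain primal feasibility from the $o(k^{-1})$ residual bound of Theorem~\ref{lemma12}, and obtain stationarity from the vanishing subgradient bound of Theorem~\ref{lemma13}. Where you genuinely diverge is in how the limit is passed from the smoothed surrogate to the original problem. The paper argues at the level of \emph{function values}: it shows $\bar{\mathcal{L}}_{\sigma_\Psi^{(k)},\mu^{(k)}}\to\mathcal{L}_{\sigma_\Psi}$ at the limit point (using the vanishing residual and $\lim_{\mu\to 0}h(\cdot,\mu)=\rho_\tau$) and then asserts that $\|\mathbf{d}^{(k_j)}\|\to 0$ places $\mathbf{0}$ in $\partial\mathcal{L}_{\sigma_\Psi}(\mathbf{w}^*,\mathbf{z}^*,\boldsymbol{\Psi}^*)$ --- leaving the interchange of limit and subdifferential implicit. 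You instead verify the three KKT conditions componentwise: the gradient-convergence property (item 4 of Definition~1) handles the $\mathbf{z}$-block, and outer semicontinuity of the limiting subdifferential (justified via prox-regularity, though graph-closedness of $\partial g_{\lambda,\gamma}$ for the continuous MCP/SCAD penalties already suffices) handles the nonconvex $\mathbf{w}$-block. This buys you a more rigorous treatment of precisely the step the paper glosses over, at the cost of a longer argument; your closing observation that the claim holds for every limit point, combined with $\|\mathbf{w}^{(k+1)}-\mathbf{w}^{(k)}\|\to 0$ and its analogues, is also a strengthening the paper does not spell out. Two cosmetic cautions: the identity $\boldsymbol{\Psi}^{(k+1)}=\nabla h(\mathbf{z}^{(k+1)},\mu^{(k+1)})$ and the stated KKT inclusions carry sign conventions inherited from the paper that you should make mutually consistent when writing the final version, and you should note that outer semicontinuity of the limiting subdifferential formally also requires $P_{\lambda,\gamma}(\mathbf{w}^{(k_j+1)})\to P_{\lambda,\gamma}(\mathbf{w}^*)$, which holds here by continuity of the penalty.
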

\begin{proof}
The continuity of the approximate augmented Lagrangian $\bar{\mathcal{L}}_{\sigma{\Psi}^{\mathopen{}\left(k\right)\mathclose{}},\mu^{\mathopen{}\left(k\right)\mathclose{}}}\mathopen{}\left(\mathbf{w}^{\mathopen{}\left(k\right)\mathclose{}},\mathbf{z}^{\mathopen{}\left(k\right)\mathclose{}},\boldsymbol{\Psi}^{\mathopen{}\left(k\right)\mathclose{}}\right)\mathclose{}$ with respect to each of its inputs allows us to examine a converging subsequence $\mathopen{}\left(\mathbf{w}^{(k_j)}, \mathbf{z}^{(k_j)}, \boldsymbol{\Psi}^{(k_j)}\right)\mathclose{}, j\geq0$, produced by Algorithm \ref{alg:1}. Therefore, assuming $\mathopen{}\left(\mathbf{w}^*, \mathbf{z}^*, \boldsymbol{\Psi}^*\right)\mathclose{}$ is the limit point of this subsequence, we can observe:
\begin{multline}
    \lim_{j\rightarrow\infty} \bar{\mathcal{L}}_{\sigma{\Psi}^{(k_j)},\mu^{(k_j)}}\mathopen{}\left(\mathbf{w}^{(k_j)}, \mathbf{z}^{(k_j)}, \boldsymbol{\Psi}^{(k_j)}\right)\mathclose{}=\\ \lim_{k\rightarrow\infty}\bar{\mathcal{L}}_{\sigma{\Psi}^{\mathopen{}\left(k\right)\mathclose{}},\mu^{\mathopen{}\left(k\right)\mathclose{}}}\mathopen{}\left(\mathbf{w}^*,\mathbf{z}^*,\boldsymbol{\Psi}^*\right)\mathclose{}.
\end{multline}

Next, using the result of Theorem \ref{lemma12}, we have $\lim_{k\rightarrow\infty} \sigma_{\Psi}^{\mathopen{}\left(k\right)\mathclose{}}\mathopen{}\left\|\mathbf{z}^{\mathopen{}\left(k\right)\mathclose{}}+ \mathbf{X}\mathbf{w}^{\mathopen{}\left(k\right)\mathclose{}}- \mathbf{y}\right\|\mathclose{}_2^2$ approaches zero. This implies that for any fixed $\sigma_{\Psi}>0$, we obtain:
\begin{multline}
\lim_{k\rightarrow\infty}\bar{\mathcal{L}}_{\sigma{\Psi}^{\mathopen{}\left(k\right)\mathclose{}},\mu^{\mathopen{}\left(k\right)\mathclose{}}}\mathopen{}\left(\mathbf{w}^*,\mathbf{z}^*,\boldsymbol{\Psi}^*\right)\mathclose{}=\\ \lim_{k\rightarrow\infty}\bar{\mathcal{L}}_{\sigma_{\Psi},\mu^{\mathopen{}\left(k\right)\mathclose{}}}\mathopen{}\left(\mathbf{w}^*,\mathbf{z}^*,\boldsymbol{\Psi}^*\right)\mathclose{}.
\end{multline}

Moreover, considering that $\lim_{\mu \rightarrow 0} h(\mathbf{z}^,\mu)=\rho_{\tau}(\mathbf{z}^)$, it follows:
\begin{multline}
    \lim_{k\rightarrow\infty}\bar{\mathcal{L}}_{\sigma_{\Psi},\mu^{\mathopen{}\left(k\right)\mathclose{}}}\mathopen{}\left(\mathbf{w}^*,\mathbf{z}^*,\boldsymbol{\Psi}^*\right)\mathclose{}=\\ \lim_{k\rightarrow\infty}{\mathcal{L}}_{\sigma_{\Psi}}\mathopen{}\left(\mathbf{w}^*,\mathbf{z}^*,\boldsymbol{\Psi}^*\right)\mathclose{}
\end{multline}

Finnaly, Theorem \ref{lemma13} assures that $\lim_{k\rightarrow\infty}\|\textbf{d}^{k_j}\|=0$. Hence, we can conclude that $\textbf{0}$ is in the sub-differential of $\mathcal{L}_{\sigma_{\Psi}}\mathopen{}\left(\mathbf{w}^*,\mathbf{z}^*,\boldsymbol{\Psi}^*\right)\mathclose{}$, indicating that $\mathopen{}\left(\mathbf{w}^*, \mathbf{z}^*, \boldsymbol{\Psi}^*\right)\mathclose{}$ is indeed a stationary point of ${\mathcal{L}}_{\sigma_{\Psi}}\mathopen{}\left(\mathbf{w}^*,\mathbf{z}^*,\boldsymbol{\Psi}^*\right)\mathclose{}$ and satisfies the KKT conditions.
This completes the proof.
\end{proof}
\begin{figure*}[ht]
     \centering
     \begin{subfigure}[b]{0.49\textwidth}
         \centering
        {\includegraphics[width=80mm, height=60mm]{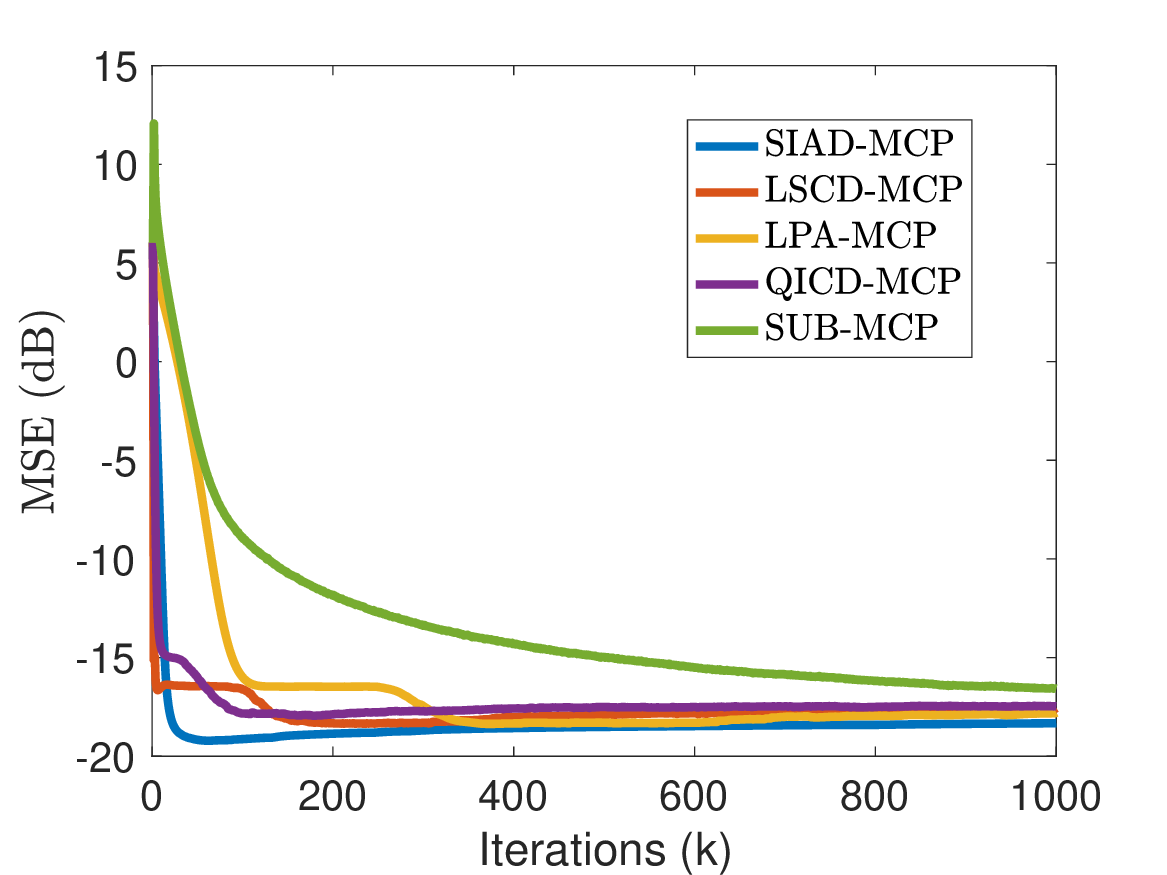}  \vspace{-1mm} }
      \caption{MCP ($\tau=0.55$)}   
     \end{subfigure}
     \hfill
     \begin{subfigure}[b]{0.49\textwidth}
         \centering  {\includegraphics[width=80mm, height=60mm]{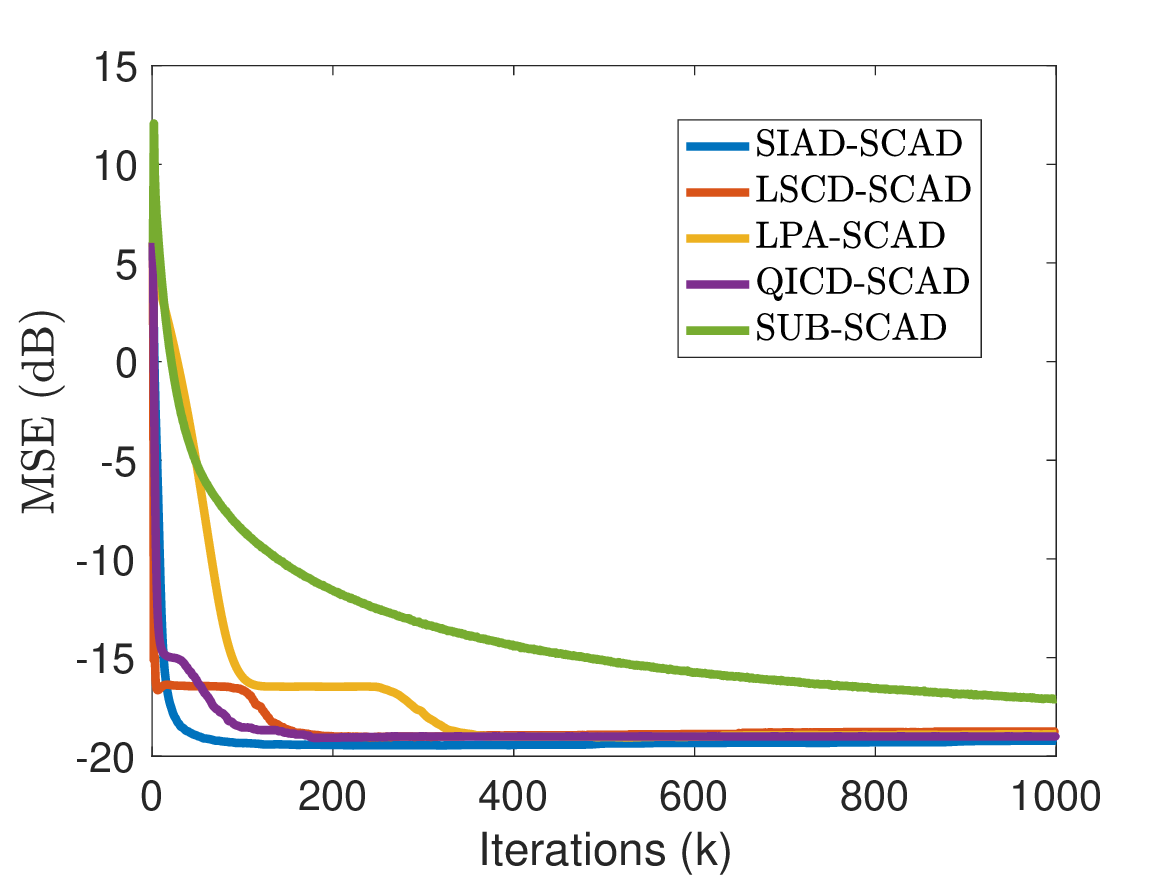}}
        \caption{SCAD ($\tau=0.55$)} 
     \end{subfigure}
    \centering
     \begin{subfigure}[b]{0.49\textwidth}
         \centering
        {\includegraphics[width=80mm, height=60mm]{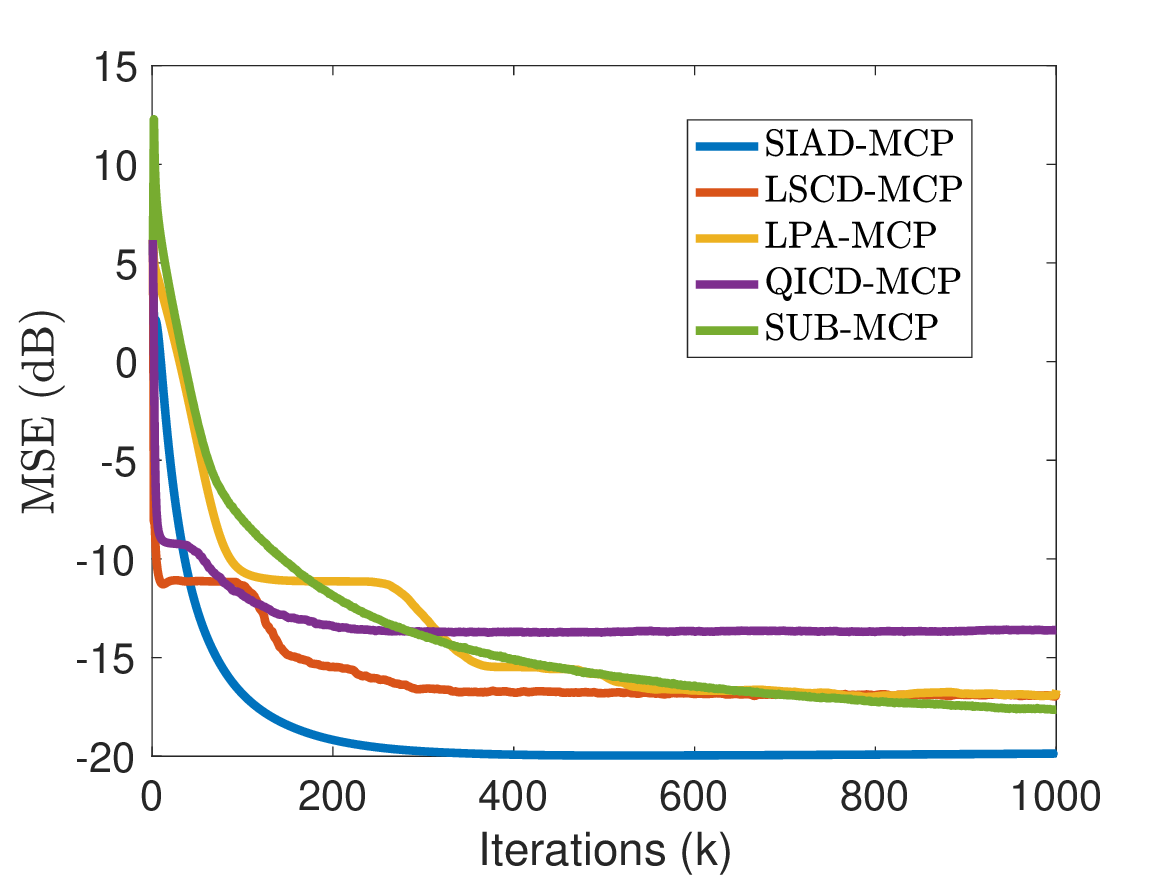}  \vspace{-1mm} }
      \caption{MCP ($\tau=0.7$)}   
     \end{subfigure}
     \hfill
     \begin{subfigure}[b]{0.49\textwidth}
         \centering  {\includegraphics[width=80mm, height=60mm]{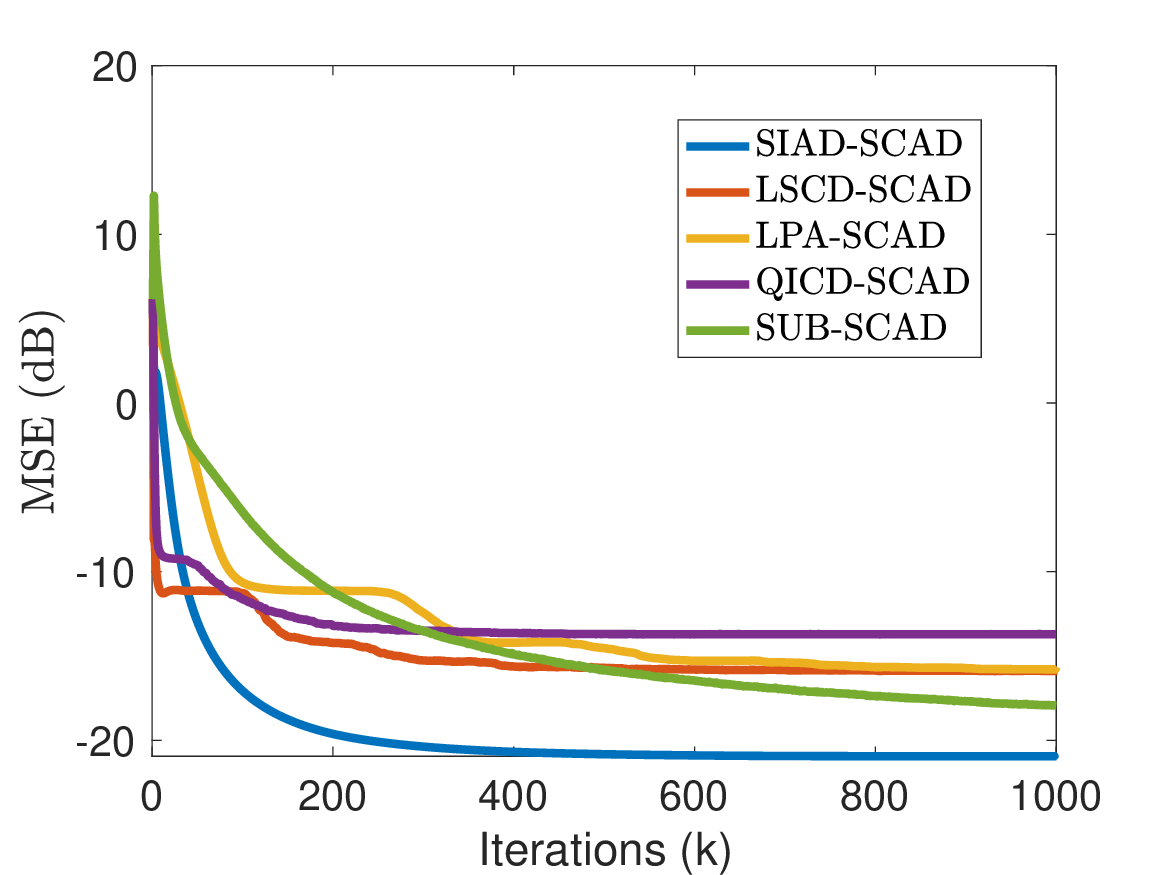}}
        \caption{SCAD ($\tau=0.7$)} 
     \end{subfigure}
     \centering
   \caption{MSE versus iterations}
   \label{fig1}
\end{figure*}

\begin{figure*}[ht]
     \centering
       \begin{subfigure}[b]{0.49\textwidth}
         \centering
        {\includegraphics[width=80mm, height=60mm]{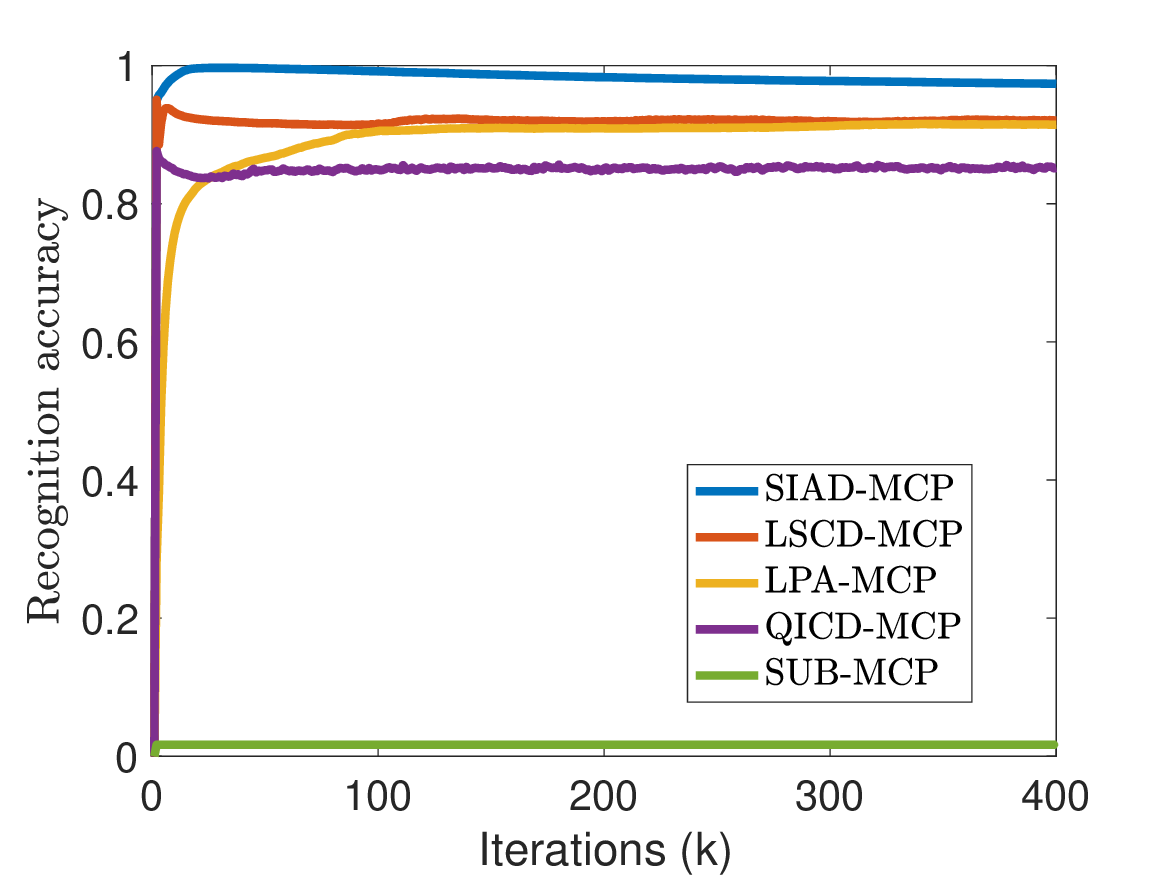}  \vspace{-1mm} }
      \caption{MCP ($\tau=0.55$)}   
     \end{subfigure}
     \hfill
     \begin{subfigure}[b]{0.49\textwidth}
         \centering  {\includegraphics[width=80mm, height=60mm]{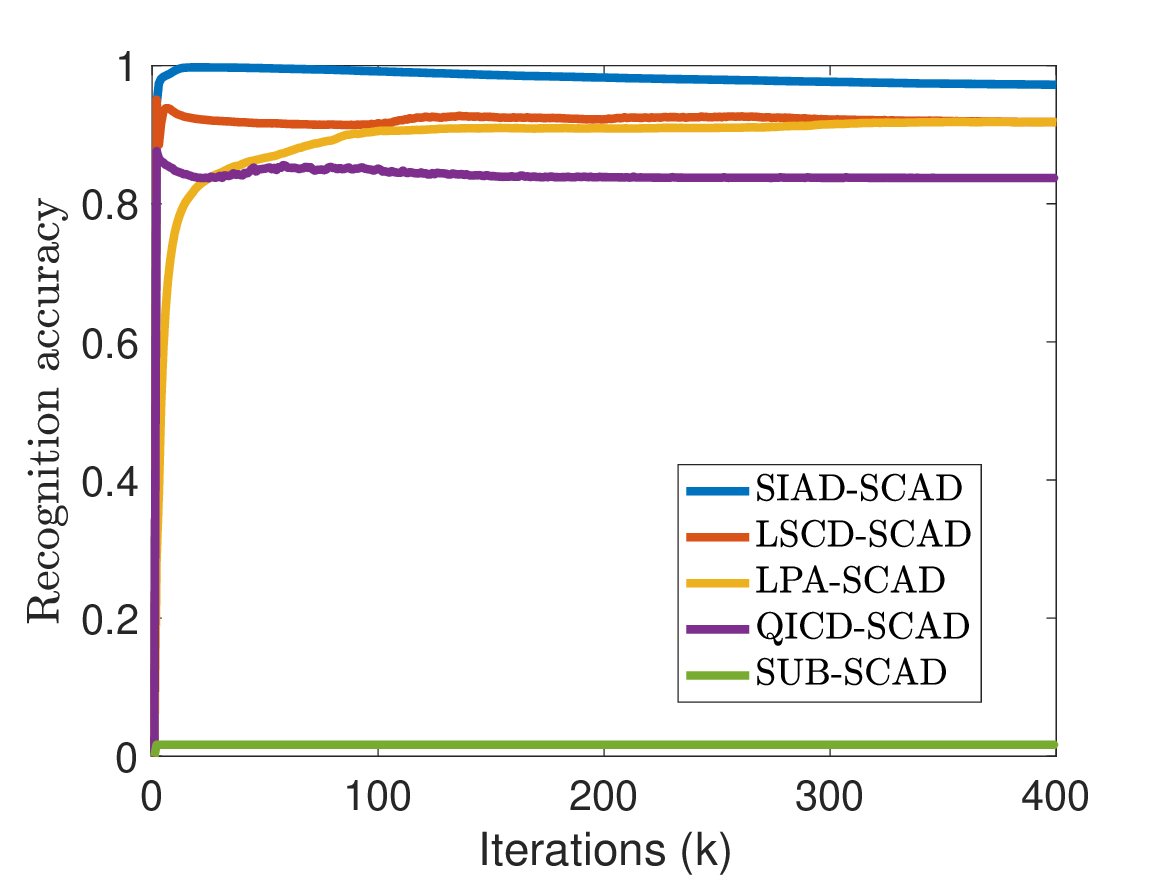}}
        \caption{SCAD ($\tau=0.55$)} 
     \end{subfigure}
     \centering
    \begin{subfigure}[b]{0.49\textwidth}
         \centering
        {\includegraphics[width=80mm, height=60mm]{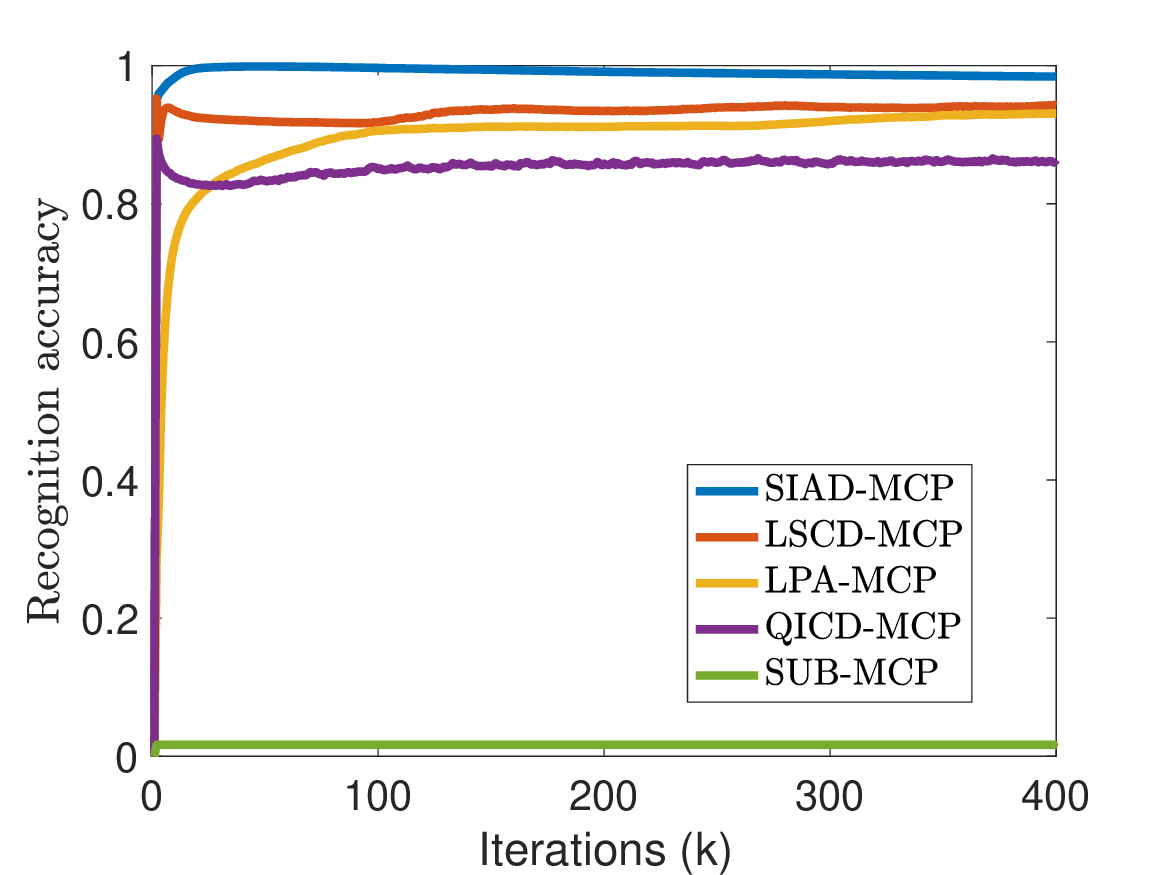}  \vspace{-1mm} }
      \caption{MCP ($\tau=0.7$)}   
     \end{subfigure}
     \hfill
     \begin{subfigure}[b]{0.49\textwidth}
         \centering  {\includegraphics[width=80mm, height=60mm]{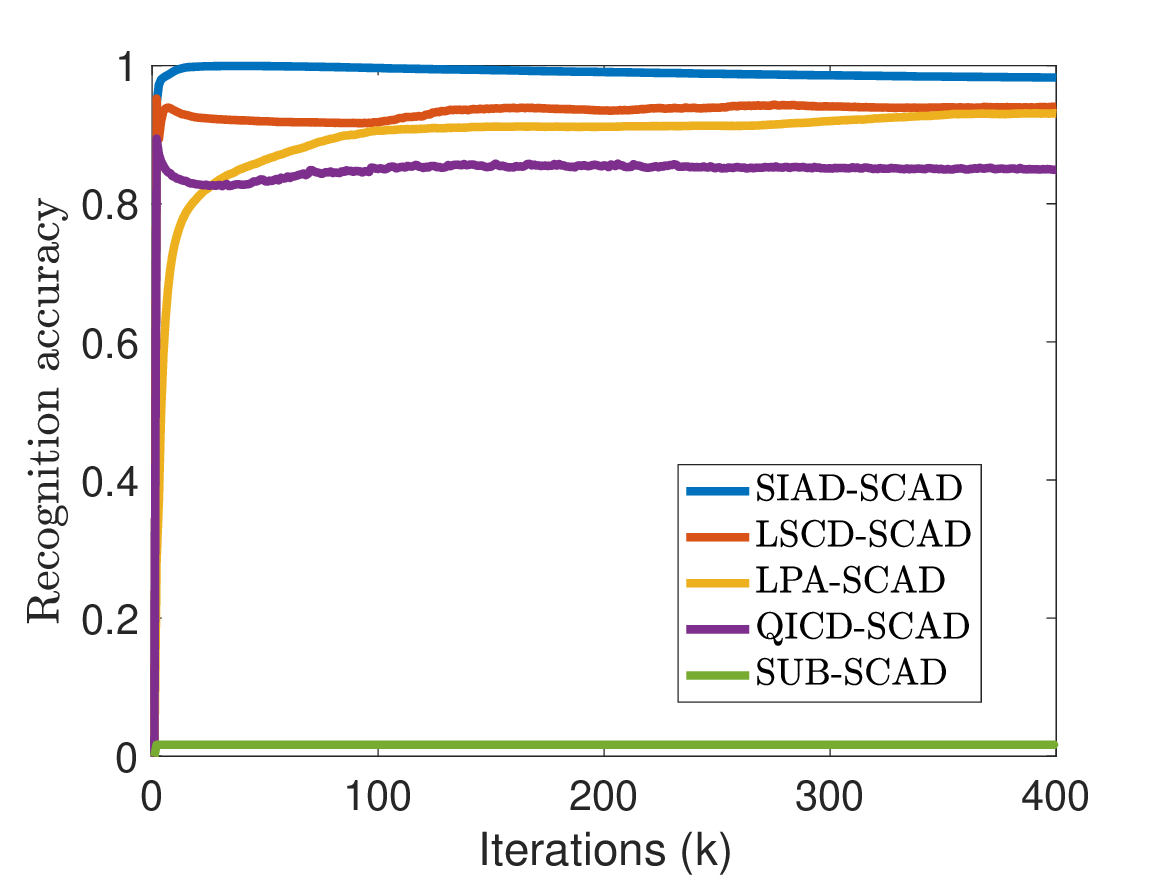}}
        \caption{SCAD ($\tau=0.7$)}
     \end{subfigure}
     \centering
   \caption{Accuracy of correctly recognizing active and non-active coefficients}
   \label{fig2}
\end{figure*}

\begin{figure*}[ht]
     \centering
     \begin{subfigure}[b]{0.49\textwidth}
         \centering
        {\includegraphics[width=80mm, height=60mm]{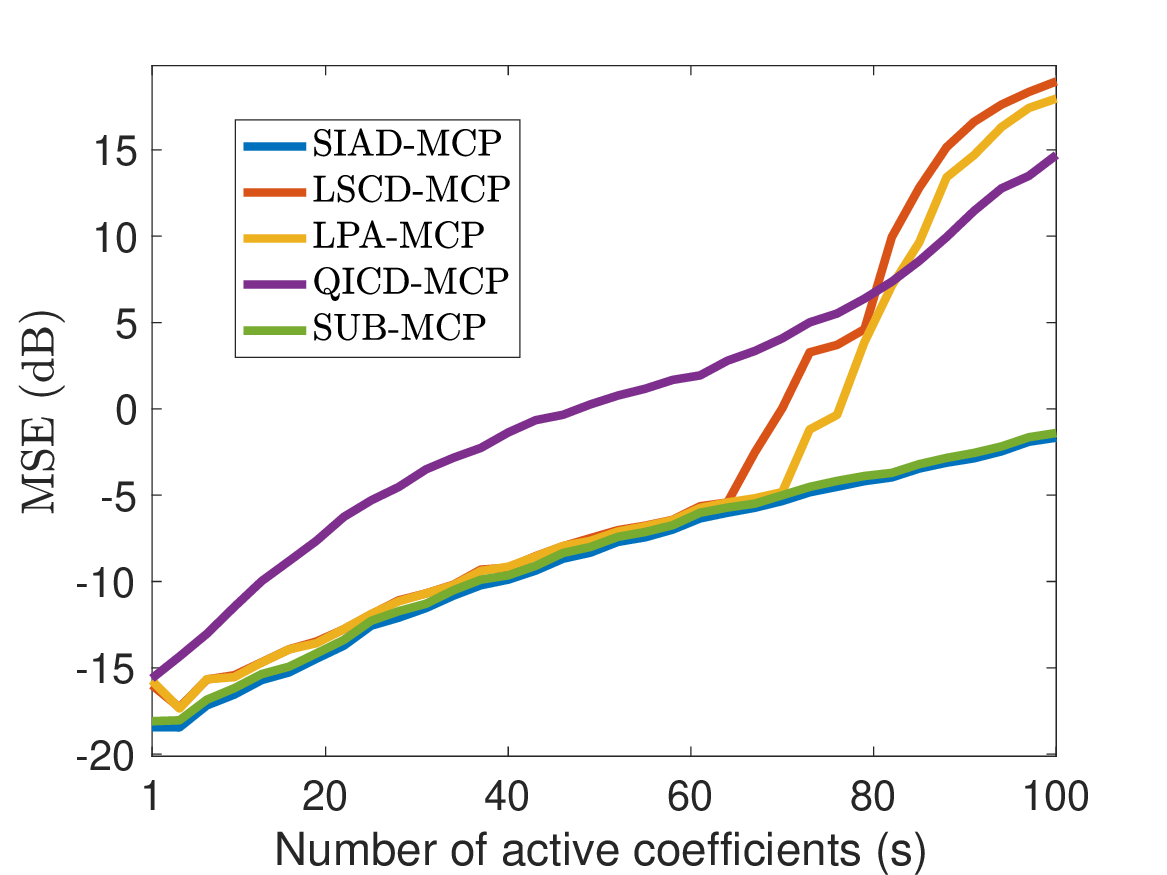}  \vspace{-1mm} }
      \caption{MCP}   
     \end{subfigure}
     \hfill
     \begin{subfigure}[b]{0.49\textwidth}
         \centering  {\includegraphics[width=80mm, height=60mm]{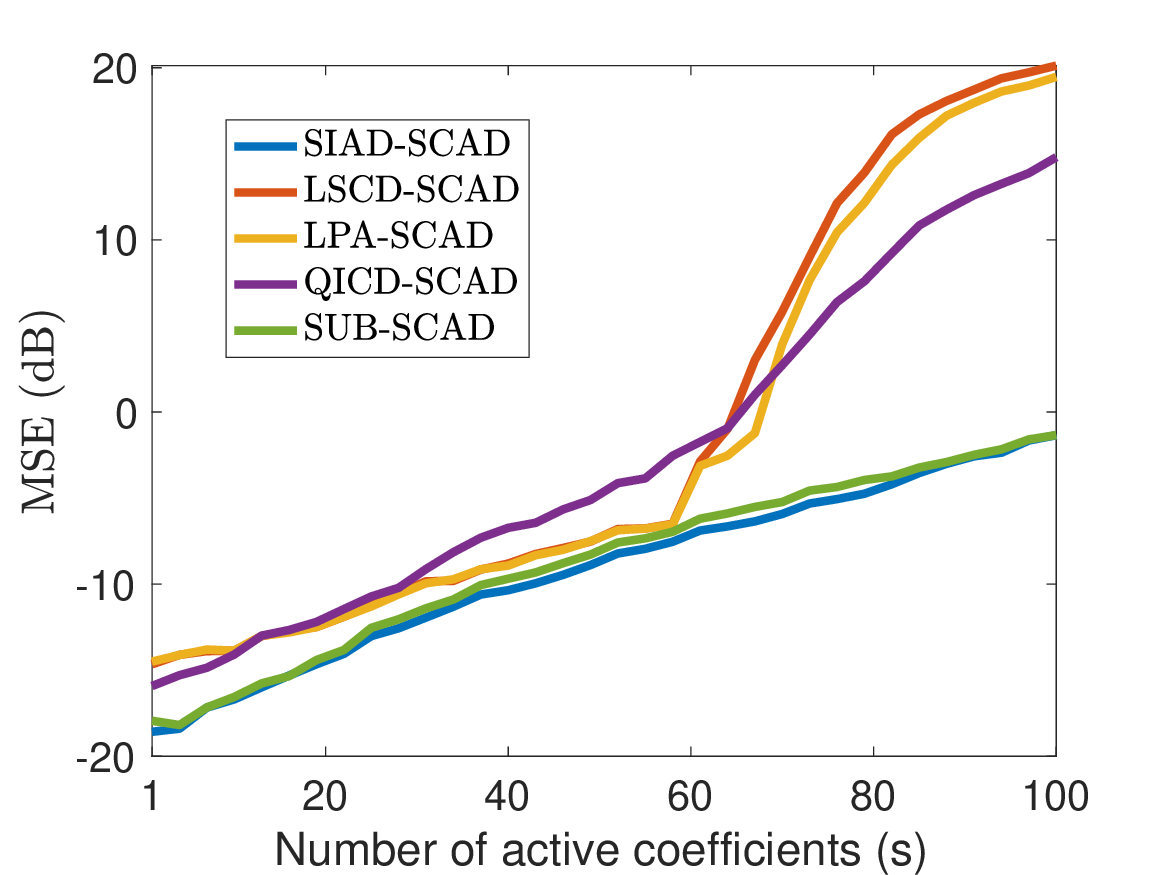}}
        \caption{SCAD} 
     \end{subfigure}
     \centering
   \caption{MSE versus the number of active coefficients $s$ in model parameter $\boldsymbol{\beta}_{\tau} \in \mathbb{R}^P$.}
   \label{fig3}
\end{figure*}
\begin{figure*}[ht]
     \centering
     \begin{subfigure}[b]{0.49\textwidth}
         \centering
        {\includegraphics[width=80mm, height=60mm]{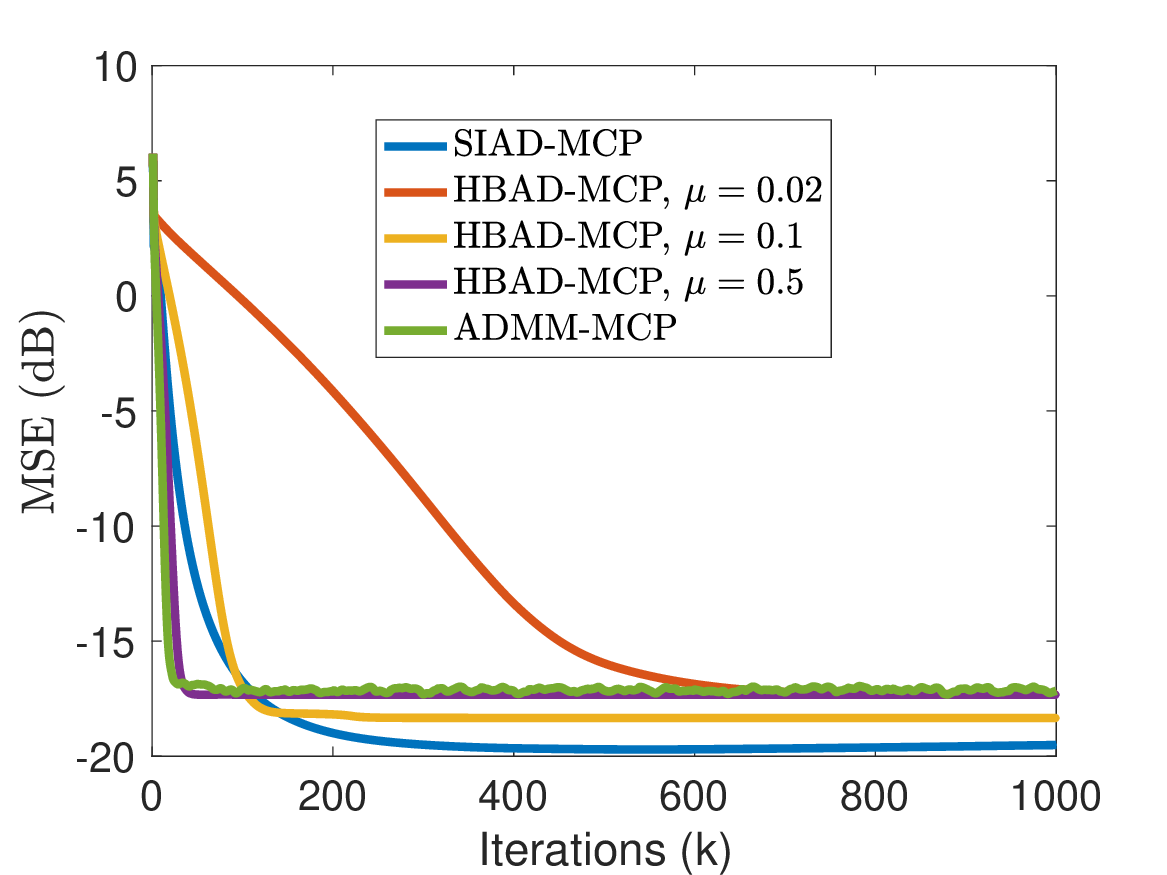}  \vspace{-1mm} }
      \caption{MCP}   
     \end{subfigure}
     \hfill
     \begin{subfigure}[b]{0.49\textwidth}
         \centering  {\includegraphics[width=80mm, height=60mm]{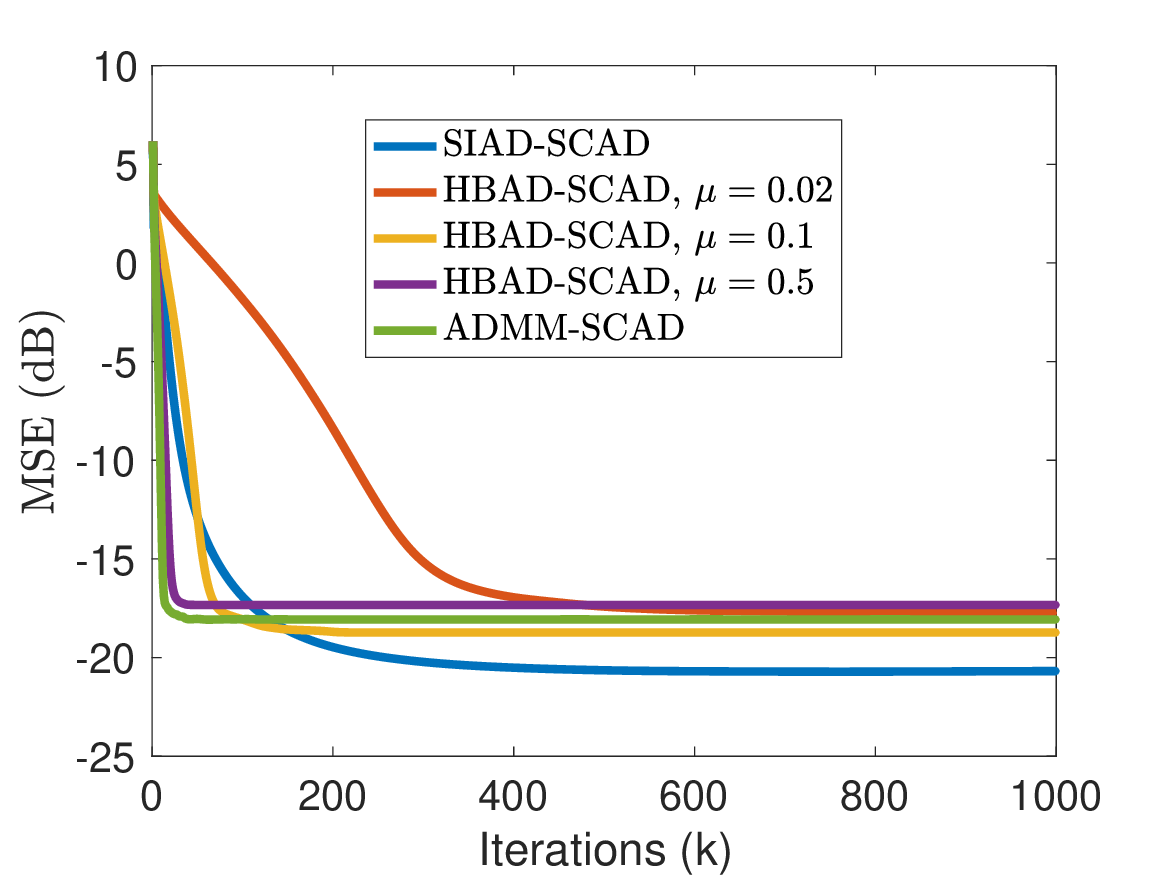}}
        \caption{SCAD} 
     \end{subfigure}
     \centering
   \caption{SIAD vs ADMM and HBAD}
   \label{fig4}
\end{figure*}
\section{Simulation results}\label{sec6}
In this section, we present a comprehensive simulation study to evaluate the performance of the proposed smoothing time-increasing penalty ADMM (SIAD) algorithm in the context of sparse quantile regression. We compare the SIAD algorithm with existing state-of-the-art approaches, including QICD \cite{peng2015iterative}, LPA \cite{gu2018admm},  LSCD \cite{gu2018admm}, also sub-gradient method \cite{mirzaeifard2023distributed}. The performance of these algorithms is assessed in terms of convergence rate, efficiency in terms of mean square error (MSE), and accuracy in recognizing active and non-active coefficients.  Furthermore, we investigate the efficiency and convergence property of the SIAD algorithm compared to conventional ADMM, ADMM without smoothing technique, and Huber loss for quantile regression with ADMM (HBAD), which is the SIAD algorithm with fixed $\mu$ and $\sigma_{\lambda}$, in terms of MSE.
\subsection{Simulation Setup}
For all four scenarios, we fix the penalty parameters $\gamma_{\text{SCAD}}=3.1$, $\gamma_{\text{MCP}}=2.1$, $\sigma_{\Psi}^{\mathopen{}\left(0\right)\mathclose{}}=1$, $\beta=\sqrt{3}$, $c=0.5$ and $\lambda=0.055$. The simulation results are obtained by averaging over $100$ independent trials.  The performance measure considered in this study is the mean square error (MSE), defined as $\text{E}\mathopen{}\left\|\mathbf{\hat{w}}-\mathbf{w}\right\|\mathclose{}_2^2$. We assess the algorithms in terms of convergence speed, efficiency in terms of MSE, and accuracy in correctly recognizing active and non-active coefficients.  First, we generate an observation model as follows. Let $\mathopen{}\left(\tilde{\mathbf{x}}_1,\tilde{\mathbf{x}}_2, \ldots, \tilde{\mathbf{x}}_P \right)\mathclose{}^{\text{T}} \sim \mathcal{N}\mathopen{}\left(0,\boldsymbol{\Sigma}\right)\mathclose{}$, where $\boldsymbol{\Sigma}_{pq}=0.5^{|p-q|}$. We set  $\mathbf{x}_1=\Phi\mathopen{}\left(\tilde{\mathbf{x}}_1\right)\mathclose{}$ and $\mathbf{x}_p =\tilde{\mathbf{x}}_p$ for $p= 2, 3, \ldots , P$, where $\Phi\mathopen{}\left(\cdot\right)\mathclose{}$ is the cumulative distribution function of $\mathcal{N}\mathopen{}\left(0,1\right)\mathclose{}$. 

For scenarios one, two and four, the observation model for generating data is given by:
\begin{align}
    \mathbf{y}=\sum_{p=1}^{P}  \xi_p \mathbf{x}_p+ \mathbf{x}_6+\mathbf{x}_{12}+\mathbf{x}_{15}+\mathbf{x}_{20}+0.7 \boldsymbol{\epsilon} \mathbf{x}_1,
\end{align}
where $\boldsymbol{\epsilon}\distas{\text{i.i.d}} \mathcal{N}\mathopen{}\left(0,1\right)\mathclose{}$, and $ \xi_p \distas{\text{i.i.d}} \mathcal{N}\mathopen{}\left(0,10^{-6}\right)\mathclose{}$. Under these settings, the model to be estimated is a compressible system \cite{lima2014sparsity}. The $\tau$-th conditional quantile linear function can be achieved by  $\sum_{p=1}^{P}  \xi_p \mathbf{x}_p+\mathbf{x}_6+\mathbf{x}_{12}+\mathbf{x}_{15}+\mathbf{x}_{20}+0.7 \Phi\mathopen{}\left(\tau\right)\mathclose{}^{-1} \mathbf{x}_1$. We assume  $\mathopen{}\left(n,P\right)\mathclose{}=\mathopen{}\left(100,300\right)\mathclose{}$  and $\tau=0.7$. For the first and second  scenarios, we also examine $\tau=0.55$.

For the third scenario, we modify the observation model for generating data as follows:
\begin{align}
    \mathbf{y}=\sum_{p=1}^{P}  \xi_p \mathbf{x}_p+ \sum_{i \in \mathcal{M}}\mathbf{x}_i +0.7 \boldsymbol{\epsilon} \mathbf{x}_1,
\end{align}
with $\mathcal{M}\in \{2,\ldots,P\}$, $\boldsymbol{\epsilon}\distas{\text{i.i.d}} \mathcal{N}\mathopen{}\left(0,1\right)\mathclose{}$, and $ \xi_p \distas{\text{i.i.d}} \mathcal{N}\mathopen{}\left(0,10^{-6}\right)\mathclose{}$. The model to be estimated in the third scenario is also a compressible system \cite{lima2014sparsity}, but with a different structure compared to the first two scenarios. The $\tau$-th conditional quantile linear function can be achieved by $\sum_{p=1}^{P}  \xi_p \mathbf{x}_p+ \sum_{i \in \mathcal{M}}\mathbf{x}_i +0.7 \Phi\mathopen{}\left(\tau\right)\mathclose{}^{-1} \mathbf{x}_1$. We assume  $\mathopen{}\left(n,P\right)\mathclose{}=\mathopen{}\left(200,100\right)\mathclose{}$  and $\tau=0.7$.
\subsection{Results}
In the first scenario, the algorithms were compared in terms of their convergence speed and efficiency using MSE as the performance measure. The learning curves (MSE vs iterations) for the algorithms are shown in Fig. \ref{fig1} for $\tau={0.55,0.7}$. Fig. \ref{fig1} demonstrates that the proposed SIAD algorithm achieves a lower MSE than the other existing approaches regardless of the penalty function and the value of $\tau$. Additionally, the SIAD algorithm exhibits a faster convergence rate than other algorithms.

In the second scenario, the algorithms were compared based on the accuracy of recognizing active and non-active coefficients correctly. The accuracy measure is defined as the ratio of the number of active and non-active coefficients correctly identified to the total number of coefficients. Fig. \ref{fig2} shows the accuracy vs. iterations for the algorithms. Fig. \ref{fig2} illustrates that the SIAD algorithm can distinguish active and non-active coefficients more accurately compared to other methods, and as a result, SIAD achieves a better result for parameter selection.

In the third scenario, the robustness of the algorithms under different levels of sparsity was compared. As the number of active coefficients increased from $1$ to $P$, the algorithms were evaluated based on their MSE performance after $30000$ iterations. Fig. \ref{fig3} illustrates the MSE vs the number of active coefficients for all algorithms. From Fig. \ref{fig3}, it can be observed that the proposed SIAD algorithm performs consistently against all sparsity levels, ranging from highly sparse to non-sparse. In contrast, only the SUB algorithm, which is also a single-loop algorithm, performed similarly, and other state-of-the-art approaches exhibit poor performance when the sparsity level varies from moderately sparse to non-sparse.

In the fourth scenario, the convergence property of the SIAD algorithm was assessed in comparison to HBAD and conventional ADMM. We considered HBAD with three different smoothing parameters $\mu={0.02,0.1,0.5}$ and appropriate penalty parameters to ensure convergence. Fig. \ref{fig4} exhibits the learning curves for the algorithms. As one can specifically observe for the MCP penalty function, ADMM is fluctuating and does not converge, resulting in worse performance in terms of MSE. Although HBAD converges to a stationary point, the result of SIAD is slightly better than HBAD with the pre-examined smoothing parameter.

\section{conclusions}\label{sec7}
In this paper, a novel smoothing-based ADMM algorithm with time-increasing penalty parameters  has been proposed for the quantile regression penalized with non-convex and non-smooth sparse penalties. With our novel analysis, the convergence proof for the proposed algorithm has been conducted. The simulation results demonstrated that this single-loop ADMM algorithm could achieve better MSE than the QICD method and the LLA framework. Also, this algorithm performs consistently against all sparsity levels, especially in moderately sparse or non-sparse, where other algorithms had shown worse results.

\bibliographystyle{IEEEtran}
\bibliography{strings}
\end{document}